\newcommand{\indep}{\mathrel{\perp\!\!\!\perp}} 
\theoremstyle{plain}
\newtheorem{theorem}{Theorem}[section]
\newtheorem{proposition}[theorem]{Proposition}
\newtheorem{lemma}[theorem]{Lemma}
\theoremstyle{definition}
\newtheorem{definition}[theorem]{Definition}
\newtheorem{assumption}[theorem]{Assumption}
\theoremstyle{remark}
\title{Turning Sand to Gold: Recycling Data to Bridge On-Policy and Off-Policy Learning via Causal Bound}
\author{%
  Tal Fiskus\\
  Department of Computer Science\\
  Bar-Ilan University\\
  Ramat Gan, Israel \\
  \texttt{fiskust@biu.ac.il} \\
  \And
  Uri Shaham\\
  Department of Computer Science\\
  Bar-Ilan University\\
  Ramat Gan, Israel \\
  \texttt{uri.shaham@biu.ac.il} \\
}
\begin{document}

\maketitle

\begin{abstract}
Deep reinforcement learning (DRL) agents excel in solving complex decision-making tasks across various domains.
However, they often require a substantial number of training steps and a vast experience replay buffer, leading to significant computational and resource demands.
To address these challenges, we introduce a novel theoretical result that leverages the Neyman-Rubin potential outcomes framework into DRL.
Unlike most methods that focus on bounding the counterfactual loss, we establish a causal bound on the factual loss, which is analogous to the on-policy loss in DRL.
This bound is computed by storing past value network outputs in the experience replay buffer, effectively utilizing data that is usually discarded.
Extensive experiments across the Atari 2600 and MuJoCo domains on various agents, such as DQN and SAC, achieve \emph{up to 383\%} higher reward ratio, outperforming the same agents without our proposed term, and reducing the experience replay buffer size by \emph{up to 96\%}, significantly improving \emph{sample efficiency at a negligible cost}\footnote{Our code is available at \url{https://shaham-lab.github.io/TurningSandToGold/}}.
\end{abstract}

\section{Introduction}
Deep reinforcement learning agents have demonstrated remarkable success in solving complex decision-making tasks across various areas of interest, such as gaming \cite{silver2016mastering,hessel2018rainbow,schrittwieser2020mastering,badia2020agent57,hafner2023mastering}, robotics \cite{gu2017deep,ibarz2021train,haarnoja2024learning}, healthcare \cite{yu2021reinforcement,symeonidis2022deep,shirali2024pruning}, and autonomous driving \cite{kendall2019learning,kiran2021deep,schier2023deep}. 
It also has a vital role in enhancing large language models \cite{achiam2023gpt,guo2025deepseek}. Despite these achievements, DRL is still challenging due to its high computational and resource demands.
For example, AlphaGo Zero was trained for 29 million self-play games \cite{silver2017mastering}, and the training experience for OpenAI's Rubik's Cube robotic hand is roughly equated to 13 thousand years \cite{akkaya2019solving}.

\begin{figure}
\centering
\centerline{\includegraphics[width=\textwidth]{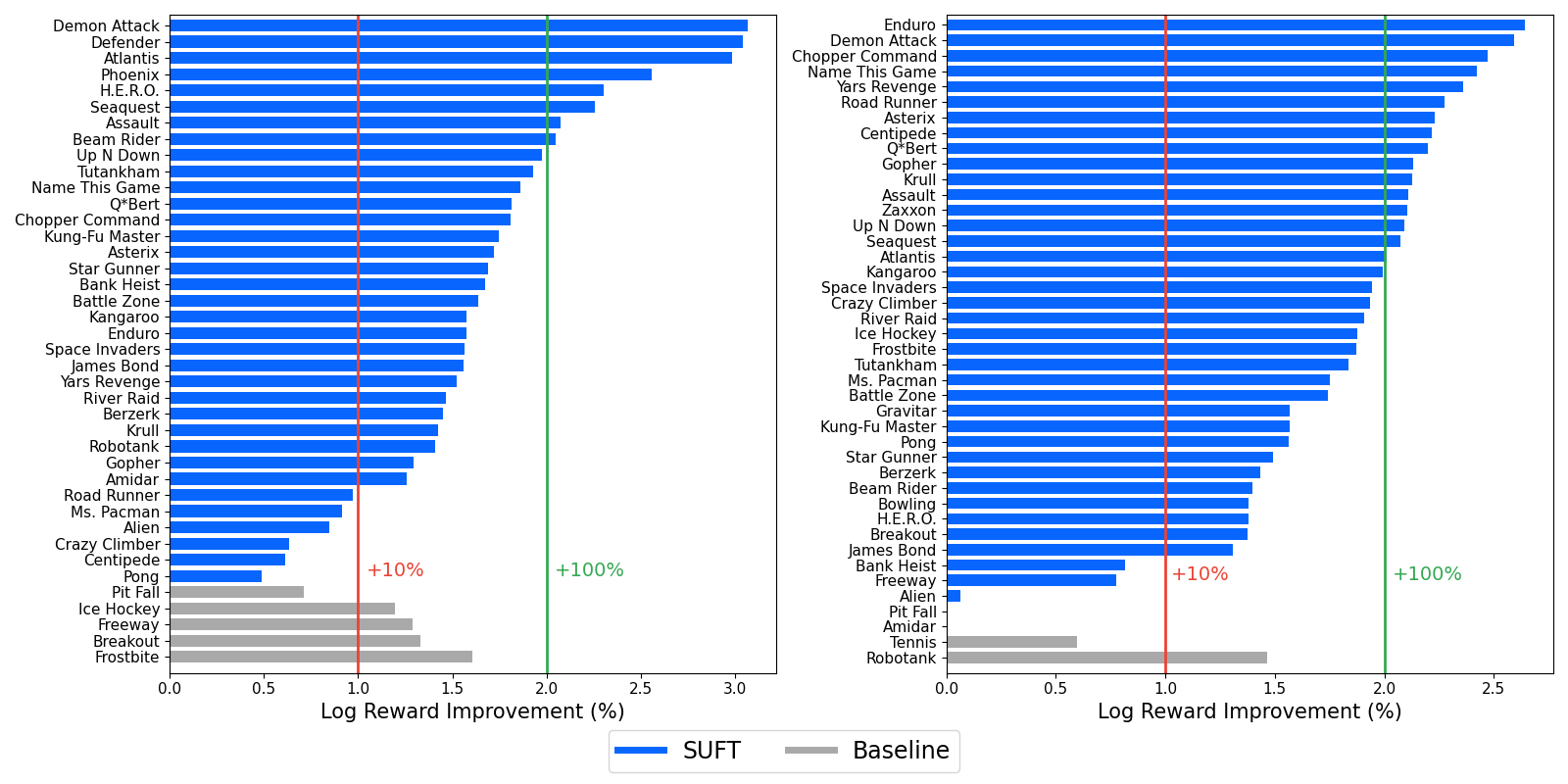}}
\caption{Log-scaled reward improvements comparison between agents using the additional SUFT OPE term and the baseline agents without it across 57 Atari games. The results demonstrate the superior performance of our method across the majority of games. The red line indicates a 10\% improvement, and the green line represents a 100\% improvement. Left: Double DQN SUFT outperforms the baseline agent in 35 out of the 40 valid games; Right: PPO SUFT outperforms the baseline agent in 39 out of the 42 valid games.}
\label{fig:Atari2600-all-environments-improvements}
\end{figure}

In DRL, we can categorize agents into two types: on-policy agents, like A3C and PPO \cite{mnih2016asynchronous,schulman2017proximal}, and off-policy agents, such as DQN and SAC \cite{mnih2015human,haarnoja2018soft}. 
On-policy agents are generally simpler and often considered first. They interact with the environment, collect experiences using a behavior policy, and once they optimize the target policy, discard all previously collected experiences to ensure that the behavior and target policies remain identical. 
This consistency leads to more stable convergence but at the cost of sample inefficiency, as each optimization iteration requires collecting new data.
Off-policy agents, on the other hand, allow the agent to learn from previously collected experiences, meaning it can have different behavior and target policies. 
This methodology improves sample efficiency by leveraging past experiences stored in a replay buffer.
While this improves sample efficiency, it can also lead to reduced learning stability due to the mismatch between the behavior and target policies.
As a result, improving the learning stability typically requires a substantial number of training steps and a vast experience replay buffer, leading to significant computational and resource demands.

To mitigate these challenges, researchers have proposed numerous strategies designed to reduce computational demands and improve learning stability. 
A widely adopted approach focuses on enhancing the experience replay buffer sampling, aiming to improve convergence rates by optimizing the selection of experiences \cite{schaul2015prioritized,andrychowicz2017hindsight,zhang2017deeper,wei2021deep}.
While this approach improves convergence, it still involves high computational resources and does not fully address the divergence between on-policy and off-policy methods.
Other works have attempted to bridge the divergence by suggesting additional computable terms, e.g., \cite{espeholt2018impala}. 
Although such approaches have improved learning stability, they often require extensive data collection and involve complex techniques.

This work addresses these limitations by introducing a novel theoretical result in causal inference and incorporating it into the DRL framework. 
Specifically, we establish a causal upper bound on the unobserved factual loss using the observed counterfactual loss and the estimated treatment effect.
Within the DRL framework, we incorporate this as an upper bound for the on-policy loss using the standard off-policy loss and an off-policy evaluation (OPE) term, which quantifies the estimated value discrepancy between the behavior policy and the target policy. 
By incorporating this term into the optimization process, our method explicitly accounts for the causal estimated treatment effect difference between the target policy and the behavior policy.
This causal perspective enhances the agent's ability to reason about the differences between the target policy and the behavior policy used to generate the experiences, thereby mitigating the fundamental divergence between on-policy and off-policy learning.
Importantly, our method reuses existing data of the behavior policy network values that are usually discarded to improve \emph{sample efficiency at a negligible cost}.

We evaluate our proposed method, SUFT\footnote{We name this novel theoretical result the SUFT causal bound, representing the initials of Sample efficient, Upper bound, Factual loss, and Treatment effect.}, on existing off-policy agents such as DQN and SAC, as well as on-policy agents like PPO, which benefit from our method due to the deviation between behavior and target policies during optimization.
Throughout comprehensive experiments on the Atari 2600 and MuJoCo domains, agents using the SUFT OPE term outperform the same ones without it.
Notably, we achieve \emph{up to 383\%} higher reward ratio, as shown in \cref{fig:Atari-Gamer-Mean-Bar-Chart}, indicating an improvement in the training convergence rate compared to an identical agent without the additional term.
In addition, we reduce the experience replay buffer size by \emph{up to 96\%} while outperforming the baseline agent that uses a \emph{25 times larger buffer} by up to \emph{437.05\%}.
These significant improvements highlight the potential of our method to make DRL more computationally efficient and accessible.

\begin{figure}
\centering
\centerline{\includegraphics[width=\textwidth]{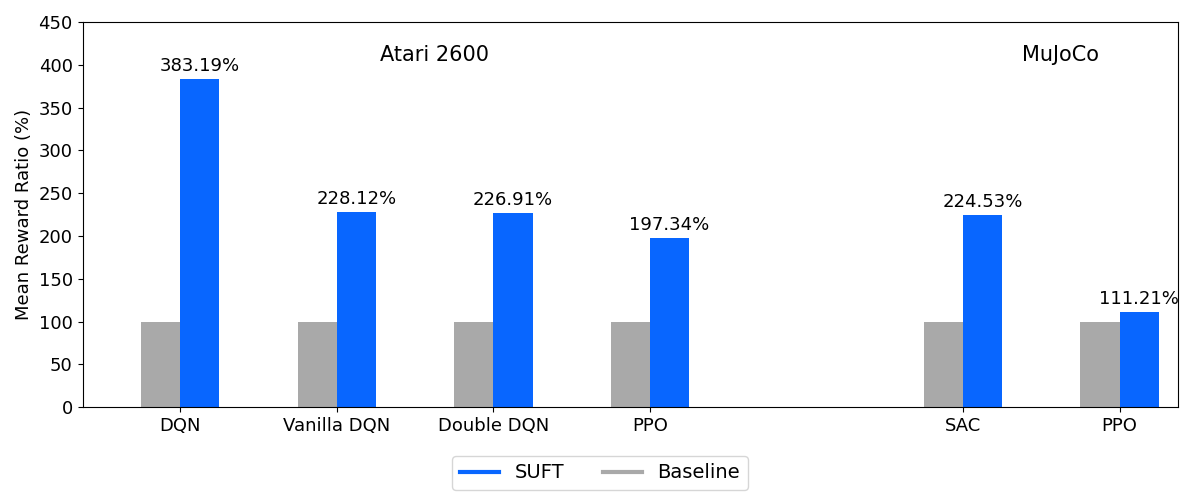}}
\caption{Mean reward ratio comparison between agents using the additional SUFT OPE term and the baseline agents without it across 57 Atari games and five MuJoCo environments, highlighting the profound reward gains across diverse agents and domains.}
\label{fig:Atari-Gamer-Mean-Bar-Chart}
\end{figure}

\section{Related Work}
\subsection{Causal Reinforcement Learning}
Causal reinforcement learning (CRL) is an emerging subfield of reinforcement learning (RL).
CRL seeks to enhance agents' performance by incorporating causal inference methodologies into the learning process.
The causal perspective allows the agent to learn cause-and-effect knowledge, resulting in more informed and effective decision-making \cite{deng2023causal,zeng2024survey}.

In causal inference, a treatment typically refers to performing a single action, commonly termed as performing an intervention \cite{zhang2022partial,kasetty2024evaluating}.
On the other hand, the majority of works in CRL define treatment as a sequence of actions determined by a dynamic treatment plan \cite{wang2021provably,blumlein2022learning,herlau2022reinforcement,saghafian2024ambiguous}.
In this work, we adopt the latter approach and consider the treatment to be the agent's policy. 
This aligns with the RL framework, which focuses on evaluating the long-term effects of policies rather than the immediate impact of individual actions.
Recent research in CRL covers a variety of areas, such as advancing generalizability \cite{ding2022generalizing}, promoting safe agents \cite{everitt2021reward,lin2024safety}, enhancing sample efficiency \cite{buesing2018woulda,lee2021causal,lan2023sample}, and addressing spurious correlations \cite{gasse2023using,pace2024delphic,ding2024seeing}. 
Our work falls in the field of CRL and primarily contributes to the latter two directions.

\paragraph{CRL for Addressing Spurious Correlations:}
Spurious correlations in off-policy RL often arise as a result of learning based on experiences from an old behavior policy.
These experiences might not accurately reflect the states and actions the target policy is likely to encounter, resulting in deceptive associations between actions and outcomes that limit the agent's ability to make reliable decisions.
Several works leverage OPE as a systematic way to evaluate the quality of a policy using off-policy data, which helps mitigate the impact of spurious correlations during policy optimization \cite{namkoong2020off,shi2022minimax}. 

\paragraph{CRL for Addressing Sample Inefficiency:}
Causality provides a valuable perspective for creating sample-efficient agents, with research focusing on three key directions: representation learning \cite{scholkopf2021toward,sontakke2021causal,lu2022invariant}, directed exploration \cite{seitzer2021causal,hu2022causality}, and data augmentation \cite{lu2020sample,pitis2022mocoda}.
While many works improve sample efficiency by generating new data, our method utilizes existing discarded data from past experiences, achieving \emph{sample efficiency at a negligible cost}.
This novel strategy sets our work apart, demonstrating that DRL agents can achieve sample efficiency with marginal computational overhead.

\subsection{Potential Outcomes Causal Bounds}
The Neyman-Rubin potential outcomes framework \cite{rubin2005causal} is a cornerstone of causal inference, providing a robust theoretical basis for estimating treatment effects.
Several works have utilized this framework to develop causal bounds \cite{oprescu2023b,frauen2024sharp}.
For instance, \citet{shalit2017estimating} establish a generalization-error bound to estimate individual treatment effects from observational data, addressing unobserved counterfactuals by formalizing a lemma that bounds this loss.
Our work expands on this foundational framework and introduces several key changes.

First, while most works apply these bounds within general machine learning (ML) settings \cite{curth2021nonparametric,athey2025machine}, we redefine the terms to align with the DRL framework.
Second, while other studies consider treatment as a single intervention \cite{jesson2021quantifying,wang2024optimal}, we address treatments as a sequence of actions within a treatment plan.
The primary and novel aspect of our work lies in reversing the traditional loss-bounding methodologies.
Whereas existing methods typically bound the counterfactual loss \cite{kennedy2023semiparametric,kuzmanovic2023estimating}, our approach does the exact opposite and bounds the factual loss.
This pivotal shift is feasible within the RL framework since off-policy methods use counterfactual data while factual data remains unobserved.
Finally, whereas other approaches mainly focus on the scenario of a single target treatment and a single control treatment \cite{melnychuk2022causal,frauen2023estimating}, we scale this scenario to a single target treatment and \(N\) control treatments.
This multiple control treatments scenario allows us to leverage the replay buffer, which contains experiences generated from \(N\) behavior policies, enabling effective evaluation of the target policy against a broader spectrum of previous behavior policies.

\section{Preliminaries}
In this section, we present some fundamental conceptual ideas necessary to understand our method.

\paragraph{Causal and Counterfactual Reasoning:}
Causal reasoning refers to the process of identifying and understanding the relationships between causes and their effects.
Counterfactual reasoning \cite{lewis2013counterfactuals}, a subfield of causal reasoning, goes a step further by imagining alternative scenarios to answer "what-if" questions, such as "What would have happened if a different treatment had been applied?" as \citet{pearl2018book} elaborate in "The Book of Why".
This form of reasoning enables the evaluation of potential outcomes conditioned under different treatments.
In RL, incorporating counterfactual reasoning enhances agents' capabilities to evaluate and optimize policies in complex, dynamic environments. 
By retrospectively analyzing alternative policies and their outcomes, agents can improve decision-making and policy optimization.
In this paper, counterfactual reasoning is implemented through the integration of a causal bound into the agent's loss function. 

\paragraph{Off-Policy Evaluation:}
Off-policy evaluation is a fundamental concept in RL \cite{uehara2022review}, which aims to numerically evaluate the performance of a desired policy \(\pi\) using historical data \(D\) that was collected under different policies.
As outlined in \cite{sutton2018reinforcement}, the desired policy being optimized is referred to as the \emph{target policy}, while the policies used to generate experiences are termed the \emph{behavior policies}.
OPE is particularly important in offline RL, where the agent learns the target policy solely from a static dataset \(D\) without interacting with the environment.
In contrast, this work focuses on applying OPE to online DRL agents, which continuously interact with the environment while updating the experience replay buffer \(D\). 

\section{SUFT Causal Bound}
\label{sec:method}
In this section, we formalize our theoretical causal result and detail our method.

\subsection{Rationale}
To illustrate our method and provide a general intuition, we begin with a straightforward example of a greedy off-policy DQN agent.
This agent observes the current state of the environment, uses its Q-network to compute Q-values, and selects the action with the highest Q-value for interacting with the environment.
As the agent continuously interacts with the environment, it stores the experiences in a replay buffer and periodically updates its Q-network, which determines its target policy.

During Q-network updates, an ideal scenario would be to replace all experiences with fresh samples generated directly from the target policy without any additional cost. 
This would preserve the sample efficiency from the off-policy approach while resolving the divergence between the behavior and target policies. 
However, since such a scenario is impractical, we introduce a novel mathematical result and adapt it to the DRL framework to bridge between on-policy and off-policy methods.

Within the causal framework, the on-policy loss corresponds to the factual loss when the treatment being optimized matches the treatment applied for generating the observations. 
This is analogous to optimizing the target policy using experiences from an identical behavior policy.
Similarly, the off-policy loss aligns with the counterfactual loss when the treatment being optimized differs from the one used to generate the observations. 
This is equivalent to optimizing the target policy using experiences from a different behavior policy.

In this work, we establish a causal upper bound on the unobserved factual loss using the observed counterfactual loss and an additional term, which is the estimated treatment effect of the target treatment compared to \(N\) control treatments.
After formalizing the upper bound within the causal framework, we seamlessly integrate it into the DRL framework as an upper bound for the on-policy loss using the standard off-policy loss and the estimated treatment effect, which serves as an OPE.

Throughout this paper, we demonstrate the causal definitions alongside corresponding DQN definitions to provide a clear and intuitive representation of the reduction from the causal framework to the DRL framework.
Nevertheless, our method is applicable to any DRL agent with a V or Q-value network that influences its policy, such as DQN and Actor-Critic architectures, as long as a divergence exists between the target and behavior policies.

\subsection{Setup and Definitions}
Here, we present the necessary mathematical definitions required to formalize our causal upper bound.
Comprehensive definitions and proofs, along with a detailed reduction from the causal framework to the DRL framework, are provided in \cref{appendix-theory}.

To simplify the equations, we present the binary treatment case, where there is a single target treatment, denoted by \(t=1\), and a single control treatment, denoted by \(t=0\).
In practical DRL applications, we extend the binary case to a multiple control treatment case involving a single target policy and \(N\) behavior policies, reflecting the existence of experiences generated from different behavior policies in the replay buffer.
The proofs and definitions for the multiple control treatment case are in \cref{appendix-theory}.

We employ the Neyman-Rubin potential outcomes framework and define the following:
\(\mathcal{X} \subset \mathbb{R}^d \) is the observation space.
\(\mathcal{Y} \subset \mathbb{R}\) is the outcome space.
\(\mathcal{T} = \{0,1\}\) is the binary treatment space.
Under this framework, there are two potential outcomes depending on the treatment assignment: if \(t=1\), the observed outcome is \(y=Y_1\), and if \(t=0\), the observed outcome is \(y=Y_0\).
We assume there exists a joint distribution \(P(X,T,Y_0,Y_1)\) with the following components:
\(q_t:=\Pr(T=t)\) is the treatment probability.
\(P^t_X:=P(X|T=t)\) is the conditional distribution of observations.
\(P^x_{Y_1} := P(Y_1|X=x)\) is the conditional distribution of potential outcomes given the target treatment, and \(P^x_{Y_0} := P(Y_0|X=x)\) is the conditional distribution of potential outcomes given the control treatment.
We also assume that there are no hidden confounders in the observational data, which makes the estimated treatment effect identifiable. This is formalized by assuming that the standard strong ignorability condition holds: \((Y_0,Y_1) \indep T|X\), and \(0<p(t=1|x)<1\) for all \(x\). Strong ignorability is a sufficient condition for the estimated treatment effect to be identifiable \cite{pearl2015detecting,shalit2017estimating}.
To integrate the potential outcomes framework into the ML context, we define a neural network \(\phi(x; \theta_t): \mathcal{X} \to \mathcal{Y}\), parameterized by weights \(\theta_t\) corresponding to the performed treatment \(t \in \mathcal{T}\).

We now present the analogous definitions within the DRL framework for a DQN agent, assuming the DRL framework is defined as a Markov decision process (MDP).

\(\mathcal{X}:=(\mathcal{S \times A})\) is the state-action space.
Potential outcome \(y \sim P^{(s, a)}_{Y_t}\) is the accumulated reward over \(n\) time steps, starting from state \(s\), taking the initial action \(a\), and following policy \(\pi_t\).
The hypothesis \(\phi(x; \theta_t) := Q(s, a; \theta_t): (\mathcal{S \times A}) \to \mathcal{Y}\) is the Q-values network, which influences the policy \(\pi_t\).

Let \(\mathbf{L}: \mathbb{R} \times \mathbb{R} \to \mathbb{R}_{\geq 0}\) represent a loss function.
We rely on the following inequality assumption, which is a variation of the inverse triangle inequality:
\begin{assumption}
    \label{ass:loss-inequality}
    \begin{align}
        \mathbf{L} (x,y) - \mathbf{L} (x',y') 
        &\leq \notag
        \mathbf{L} (x,x') + \mathbf{L} (y,y').
    \end{align}
\end{assumption} 
The proof for the \({L_1}\) loss is in \cref{appendix-theory}\footnote{DRL agents are often trained using \(L_2\) loss, for which \cref{ass:loss-inequality} does not hold. While the assumption is needed to prove \cref{thm:causal-bound}, similar experimental results were obtained with both \(L_1\) and \(L_2\) losses.
To provide intuition for the observed \({L_2}\) performance, we empirically examined the validity of \cref{ass:loss-inequality} using \(L_2\). This analysis is detailed in \cref{appendix-experiments}.}.

Now, we define the expected factual and counterfactual losses, which are used to formulate the bound.
\begin{definition}
    \label{def:expected-outcome-loss}
    The expected outcome loss:
    \begin{align}
        \ell_{\phi}(x,t) 
        &:= \notag
        \mathbb{E}_{y \sim P^x_{Y_t}}
        [\mathbf{L}(y, \phi(x;\theta_t))]. 
    \end{align}
\end{definition}

The corresponding definition in DQN:
\begin{definition}
    \label{def:expected-outcome-loss-dqn}
    The expected temporal difference loss:
    \begin{align}
        \ell_{Q}(s, a, t)
        &:= \notag
        \mathbb{E}_{y\sim P^{(s, a)}_{Y_t}} 
        [\mathbf{L}\left(y,Q(s, a;\theta_t)\right)].
    \end{align}
\end{definition}
Note that in the DRL framework, the expected outcome loss refers to the temporal difference loss between the accumulated rewards and the Q-value estimation.

\begin{definition}
    \label{def:factual-loss-function}
    The expected factual outcome loss:
    \begin{align}
        \epsilon^{1}_{F_{\phi}} 
        &:= \notag
        \mathbb{E}_{\substack{x \sim P^1_X}} 
        [\ell_{\phi}(x,1)].
        \\
        \epsilon^{0}_{F_{\phi}} 
        &:= \notag
        \mathbb{E}_{\substack{x \sim P^0_X}} 
        [\ell_{\phi}(x,0)].
        \\ 
        \epsilon_{F_{\phi}} 
        &:= \notag
        q_1\epsilon^{1}_{F_{\phi}}
        +
        q_0\epsilon^{0}_{F_{\phi}}.
    \end{align}
\end{definition}

To formalize the corresponding definition in the DRL framework, consider a replay buffer \( D_1\) containing samples produced by a behavior policy that matches the target policy.
In this context, the on-policy loss is a special case of the expected factual loss, and for DQN, it is defined as:
\begin{definition}
    \label{def:factual-loss-function-dqn}
    The expected on-policy loss:
    \begin{align}
        \epsilon_{\text{On-Policy}_Q} 
        &:= \notag
        \mathbb{E}_{\substack{(s, a) \sim D_1}} 
        [\ell_{Q}(s, a, t_\text{target})].
    \end{align}
\end{definition}
Note that this loss is the temporal difference loss with experiences collected on-policy, with identical target and behavior policies from \( D_1\).

\begin{definition}
    \label{def:counterfactual-loss-function}
    The expected counterfactual outcome loss:
    \begin{align}
         \epsilon^{1}_{CF_{\phi}} 
        &:= \notag
        \mathbb{E}_{\substack{x \sim P^0_X}} 
        [\ell_{\phi}(x,1)].
        \\ 
        \epsilon^{0}_{CF_{\phi}}
        &:= \notag
        \mathbb{E}_{\substack{x \sim P^1_X}} 
        [\ell_{\phi}(x,0)].
        \\ 
        \epsilon_{CF_{\phi}} 
        &:= \notag
        q_0\epsilon^{1}_{CF_{\phi}}
        +
        q_1\epsilon^{0}_{CF_{\phi}}.
    \end{align}
\end{definition}

To establish the corresponding definition within the DRL framework, consider a replay buffer \( D_0\) that stores experiences generated by a behavior policy that differs from the target policy.
In this context, the off-policy loss is a special case of the expected counterfactual loss, and for DQN, it is defined as:
\begin{definition}
    \label{def:counterfactual-loss-function-dqn}
    The expected off-policy loss:
    \begin{align}
        \epsilon_{\text{Off-Policy}_Q} 
        &:= \notag
        \mathbb{E}_{\substack{(s, a) \sim D_0}} 
        [\ell_{Q}(s, a, t_\text{target})].
    \end{align}
\end{definition}
Note that this loss is the temporal difference loss with experiences gathered off-policy, with different target and behavior policies from \( D_0\). 

Additionally, in DRL applications, we consider the scenario of a replay buffer containing experiences generated from \(N\) distinct behavior policies.
The definitions for this case are in \cref{appendix-theory}.

We now define our method's additional estimated treatment effect term in both the causal and the DRL frameworks: 
\begin{definition}
    \label{def:psi-treatment-evaluation-term}
    The estimated treatment effect loss:
    \begin{align}
        \psi_{\phi}^{1} 
        &:= \notag
        \mathbb{E}_{\substack{x \sim P^0_X}} 
        [\mathbf{L}(\phi(x;\theta_0), \phi(x;\theta_1))].
        \\
        \psi_{\phi}^{0} 
        &:= \notag
        \mathbb{E}_{\substack{x \sim P^1_X}} 
        [\mathbf{L}(\phi(x;\theta_1), \phi(x;\theta_0))].
        \\
        \psi_{\phi} 
        &:= \notag
        q_0\psi_{\phi}^{1} + q_1\psi_{\phi}^{0}.
    \end{align}
\end{definition}

The interpretation for the estimated treatment effect in DRL is an OPE, and for DQN, it is defined as:
\begin{definition}
    \label{def:psi-objective-term-dqn}
    The SUFT OPE term:
    \begin{align}
        \psi_{\text{SUFT}_Q} 
        &:= \notag
        \mathbb{E}_{(s,a)\sim D_0}
        [\mathbf{L}\left(Q(s, a;\theta_\text{behavior}),
        Q(s, a;\theta_\text{target})\right)].
    \end{align}
\end{definition}
This term aligns with the principles of OPE, focusing on quantifying the discrepancy between the target policy and the behavior policy using off-policy data. Once the data is stored in the buffer, it is used as observational data.

\subsection{Causal Bound Theorem}
\label{bnd:causal-bound-theorem}
Having established the key definitions, we now present our novel factual loss upper bound theorem:

\begin{theorem}
    \label{thm:causal-bound}
    The expected factual outcome loss, \(\epsilon_{F_{\phi}}\), is bounded by the expected counterfactual outcome loss, \(\epsilon_{CF_{\phi}}\), the estimated treatment effect loss, \(\psi_{\phi}\), and a constant term \(\delta\), independent of \(\phi\):
    \begin{align}
        \epsilon_{F_{\phi}} 
        &\leq \notag
        \epsilon_{CF_{\phi}}
        +
        \psi_{\phi} + \delta. 
    \end{align}
\end{theorem}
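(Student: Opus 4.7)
The plan is to bound the two components $\epsilon^{1}_{F_{\phi}}$ and $\epsilon^{0}_{F_{\phi}}$ separately, each against its ``distribution-matched'' counterfactual counterpart (i.e.\ the one integrated against the \emph{same} covariate measure), and then combine with the weights $q_1,q_0$. The key move is that $\epsilon^{1}_{F_{\phi}}$ and $\epsilon^{0}_{CF_{\phi}}$ are both expectations over $x\sim P^{1}_{X}$ (and likewise for the $P^{0}_{X}$ pair), so the gap between them reduces to an $x$-pointwise comparison of losses that can be handled by \cref{ass:loss-inequality}.

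The first step is to fix $x$ and apply \cref{ass:loss-inequality} pointwise in the potential outcomes: for any $y_{1},y_{0}$,
\begin{align*}
\mathbf{L}\bigl(y_{1},\phi(x;\theta_{1})\bigr)
-\mathbf{L}\bigl(y_{0},\phi(x;\theta_{0})\bigr)
\;\leq\;
\mathbf{L}(y_{1},y_{0})+\mathbf{L}\bigl(\phi(x;\theta_{1}),\phi(x;\theta_{0})\bigr).
\end{align*}
Taking expectation under the product $P^{x}_{Y_{1}}\otimes P^{x}_{Y_{0}}$ turns the first two losses into $\ell_{\phi}(x,1)$ and $\ell_{\phi}(x,0)$ by \cref{def:expected-outcome-loss}, while the cross-term $\Delta(x):=\mathbb{E}[\mathbf{L}(Y_{1},Y_{0})\mid X=x]$ depends only on the conditional outcome distributions and is \emph{independent of $\phi$}. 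This yields the pointwise inequality $\ell_{\phi}(x,1)\leq \ell_{\phi}(x,0)+\Delta(x)+\mathbf{L}(\phi(x;\theta_{1}),\phi(x;\theta_{0}))$.

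Next I would integrate this inequality against $P^{1}_{X}$. The left-hand side becomes $\epsilon^{1}_{F_{\phi}}$, the $\ell_{\phi}(x,0)$-integral becomes $\epsilon^{0}_{CF_{\phi}}$ (by \cref{def:counterfactual-loss-function}), the $\phi$-to-$\phi$ term becomes $\psi_{\phi}^{0}$ (by \cref{def:psi-treatment-evaluation-term}), and the $\Delta$-integral gives a $\phi$-free constant $\delta_{1}:=\mathbb{E}_{x\sim P^{1}_{X}}[\Delta(x)]$. A symmetric argument, starting from $\mathbf{L}(y_{0},\phi(x;\theta_{0}))-\mathbf{L}(y_{1},\phi(x;\theta_{1}))\leq \mathbf{L}(y_{0},y_{1})+\mathbf{L}(\phi(x;\theta_{0}),\phi(x;\theta_{1}))$ and integrating against $P^{0}_{X}$, gives $\epsilon^{0}_{F_{\phi}}\leq \epsilon^{1}_{CF_{\phi}}+\psi_{\phi}^{1}+\delta_{0}$. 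Forming the convex combination with weights $q_{1},q_{0}$ collapses the four terms exactly to $\epsilon_{CF_{\phi}}+\psi_{\phi}+\delta$, where $\delta:=q_{1}\delta_{1}+q_{0}\delta_{0}$ is independent of $\phi$.

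The main obstacle, modest as it is, is the book-keeping of measures: one must recognize that the right pairing is $\epsilon^{1}_{F_{\phi}}$ with $\epsilon^{0}_{CF_{\phi}}$ (not with $\epsilon^{1}_{CF_{\phi}}$), because only then do both integrals share the covariate measure $P^{1}_{X}$ and only then does the $\mathbf{L}(\phi(x;\theta_{1}),\phi(x;\theta_{0}))$ term integrate exactly to $\psi_{\phi}^{0}$. A secondary subtlety is justifying that $\Delta(x)$ is well-defined despite the joint of $(Y_{0},Y_{1})\mid X$ being only partially identified: since the inequality in \cref{ass:loss-inequality} is pointwise, any coupling (e.g.\ the product coupling used above) yields a valid, $\phi$-independent $\delta$, which is all the statement requires.
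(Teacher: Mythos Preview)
Your proof is correct and follows essentially the same approach as the paper's: apply \cref{ass:loss-inequality} pointwise in $(y_{1},y_{0})$, take expectations over the (product-coupled) potential outcomes and then over the matching covariate distribution, and combine the two resulting inequalities with weights $q_{1},q_{0}$. The paper carries out these same steps in one long chain (expanding $\epsilon_{F_{\phi}}-\epsilon_{CF_{\phi}}$ first, inserting the redundant inner expectation via a trivial lemma, then applying the assumption), and arrives at the identical $\phi$-free constant $\delta=q_{1}\delta_{1}+q_{0}\delta_{0}$ you obtain.
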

The proof appears in \cref{appendix-theory}.

The interpretation of \cref{thm:causal-bound} in the DRL framework establishes an upper bound to the on-policy loss, by the off-policy loss, the SUFT OPE term, and a constant term \(\delta\). 
A certain difference exists in the observational data between the causal and the DRL framework, which is detailed in \cref{appendix-theory}.
\begin{align}
    \epsilon_{\text{On-Policy}_Q}
    &\leq \notag
    \epsilon_{\text{Off-Policy}_Q} 
    +
    \psi_{\text{SUFT}_Q}
    + \delta
    .
\end{align}
Note that since \(\delta\) is independent of \(Q\), it can be omitted during the gradient-based optimization of \(Q\).

To the best of our knowledge, we are the first to introduce such an upper bound within the causal framework on the factual loss, adapting it to the DRL framework, thereby bounding the on-policy loss. 
Importantly, based on our causal bound theorem, the adaptation to the DRL framework is not constrained to DQN agents but can be applied to every off-policy agent with V or Q-value networks that influence its policy.
This scenario is thoroughly detailed in \cref{appendix-theory}.
Furthermore, our method is also applicable to on-policy agents, such as PPO, which benefits from it due to the divergence between the behavior and target policies that arises during epoch optimization.

\subsection{Recycling Value Network Outputs}
To compute the SUFT OPE term in the causal bound, we need to calculate the Q-network values corresponding to both the target and the behavior policies.
While the Q-values for the target policy can be easily computed using the current Q-network, the Q-network values for the old behavior policies are not addressable since storing the network weights for every behavior policy is impractical.
Notably, during the agent's training process, the Q-values matching the behavior policies are already calculated when they are used to select the action to perform.
These values are calculated for the purpose of exploitation and then discarded once the action is executed.
The novel contribution of our approach lies in utilizing and reusing the value network outputs in order to compute the SUFT OPE term.
This allows the reuse of data that is overlooked while enhancing the loss optimization with significant causal insights in the form of the estimated treatment effect.
Our approach effectively transforms overlooked data into valuable causal insights, metaphorically \emph{turning sand data into gold}.

To obtain value network outputs in practice, we store them in the experience replay buffer.
For agents employing a Q-network, only a single Q-value corresponding to the behavior policy is stored, resulting in an experience tuple of the form \(\left(s, a, r, s', Q(s, a;\theta_\text{behavior})\right)\), instead of the standard \(\left(s, a, r, s'\right)\).
For agents using a V-network, the state value output is stored instead, yielding an experience tuple of the form \(\left(s, a, r, s', V(s;\theta_\text{behavior})\right)\), instead of the standard \(\left(s, a, r, s'\right)\).

\subsection{Universal Implementation of the SUFT OPE Term}
Following the formalization of the SUFT causal bound, in \cref{bnd:causal-bound-theorem}, we adopt the well-established approach of optimizing a bound on the loss, a technique commonly employed in algorithms like expectation–maximization \cite{dempster1977maximum} and variational inference \cite{bishop2006pattern}.
Consequently, rather than directly optimizing the on-policy loss, we focus on optimizing the SUFT causal bound.
In addition, to improve the flexibility and accuracy of the loss optimization, we incorporate a \(\lambda_{TF}\) coefficient into the SUFT OPE additional term.
In the case of DQN, the SUFT causal bound objective term is as follows:
\begin{align}
    \epsilon_{\text{Off-Policy}_Q} 
    &+ \notag
    \lambda_\text{TF}\cdot \psi_{\text{SUFT}_{Q}}.
\end{align}
In value-based methods such as DQN agents, the SUFT OPE term is seamlessly added to the standard loss function, while in Actor-Critic architectures, it is incorporated into the critic's loss.
We provide a detailed pseudocode of our method algorithms for both DQN and PPO in \cref{appendix-pseudocode}.

\section{Experiments}
\label{sec:experiments}
In this section, we provide a comprehensive empirical analysis of our proposed method, emphasizing the impact of adding the SUFT OPE term to the agent's standard loss function across a broad and diverse range of environments.
In this section, we present results obtained using \({L_2}\) loss, allowing us to compare our method to standard DRL agents.
The results for \({L_1}\) loss are detailed in \cref{appendix-experiments}.

\subsection{Experimental Setup}
We assess our method's performance across two distinct domains: The first consists of 57 Atari games with a discrete action space, which contain a range of decision-making scenarios, including immediate versus long-term strategies, varying image background complexities, and dynamic on-screen objects.
The second includes five MuJoCo environments that simulate robotic motions with a continuous action space that provides a robust platform for dynamic physical interactions.

In the Atari domain, we perform a wide range of experiments using DQN \cite{mnih2015human}, which serves as the foundational algorithm for numerous Q-value network agents, Vanilla DQN, and Double DQN \cite{van2016deep}, as well as PPO \cite{schulman2017proximal}, which is a widely adopted agent that has become a standard in the field.
The Vanilla DQN results are detailed in \cref{appendix-experiments}.
For the MuJoCo domain, we evaluate our method using SAC \cite{haarnoja2018soft}, a continuous control off-policy actor-critic agent known for its robust performance, and with PPO.
Across these settings, we explore different values of \(\lambda_{TF}\), resulting in more than 10K individual experiments.
This extensive and versatile experimentation ensures statistically significant findings, providing confidence in the robustness of the results and enabling us to draw more reliable conclusions about our method's performance.
More details about tuning \(\lambda_{TF}\) are in \cref{appendix-experiments}.

\paragraph{Training Details:}
To evaluate SUFT's sample efficiency, we conduct experiments with constrained environment interactions.
For the off-policy agents, we use a relatively small experience replay buffer size of 4K.
Each agent configuration is independently trained 10 times per environment, with the training process limited to 400K steps in the Atari domain and 1M steps in the MuJoCo domain.
Performances are analyzed by taking the median smoothed training reward from the 10-trial set, ensuring a more reliable evaluation while mitigating the influence of randomness on the results.
This constrained resources experimentation is designed to demonstrate our method's sample efficiency, and it is a common technique in DRL. \cite{ye2021mastering}
Results for a larger buffer size are in \cref{appendix-experiments}.

\paragraph{Implementation Details:}
We use the Gymnasium environments API \cite{towers2024gymnasium} to train on the Atari \cite{bellemare2013arcade} (version v5) and MuJoCo \cite{todorov2012mujoco} (version v4) domains.
For training the agents, we are using the default architecture and hyperparameter configurations as implemented in the "Stable-Baselines3" DRL library \cite{raffin2021stable}.
In the Atari games, we apply standard preprocessing, which includes resizing observations to 84x84 grayscale images and stacking four consecutive frames. This preprocessing is implemented using the library's default Atari wrapper.

\begin{figure}
\centering
\centerline{\includegraphics[width=\textwidth]{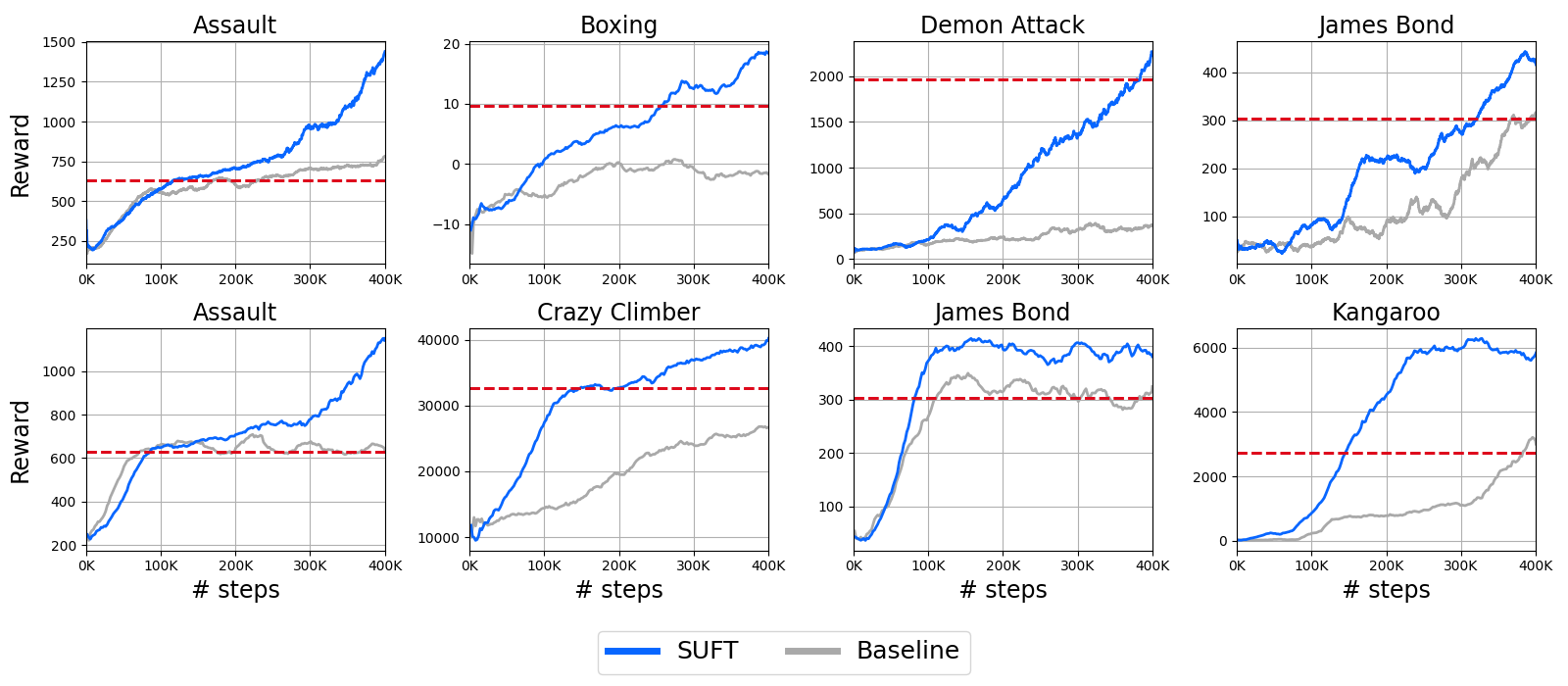}}
\caption{Learning curves comparison between agents using the SUFT OPE term and the baseline agents without it across selected Atari games.
The red line indicates human-level performance, showing that SUFT not only surpasses the baseline agent but can even exceed human rewards.
Top: Double DQN; Bottom: PPO.}
\label{fig:Atari2600-selected-learning-curves}
\end{figure}

\subsection{Controlled Comparison with Baseline Agents}
We evaluate the impact of incorporating the SUFT OPE term by comparing agents that include our proposed term to baseline agents without it, ensuring both types share identical architecture and hyperparameters, where the presence of our proposed term is the only difference between them.
This controlled experiment is designed to isolate the influence of our method, ensuring fair comparisons.

In addition, we conduct a \(t\)-test for each environment against a two-sided alternative using 10 different random seed runs, where the null hypothesis is the baseline result.
The \(p\)-values provide robust statistical significance results for evaluating the effectiveness of our proposed method.

The complete experimental results, the evaluation formulas, and more empirical analysis of our method are detailed in \cref{appendix-experiments}.

\paragraph{Atari:}
On the Atari benchmark, SUFT demonstrates significant improvements for all of the agents. 
On the DQN, Double DQN, and PPO agents, SUFT surpasses the baseline agent in 52, 48, and 50 games out of the 57 games, respectively, with statistically significant gains (\(p\)-value \(<\) 0.05) in 49, 37, and 40 games.

We also analyze the results for the valid Atari games (according to the evaluation formula in \cref{appendix-experiments}).
For the DQN agent, SUFT outperforms the baseline agent in 28 out of the 29 valid games, achieving a reward improvement of over 100\% in 48.3\% of them and over 10\% in 93.1\% of them, as shown in \cref{fig:Atari2600-DQN-improvements}.
For the Double DQN agent, SUFT outperforms the baseline agent in 35 out of the 40 valid games, achieving a reward improvement of over 100\% in 20\% of them and over 10\% in 72.5\% of them, as shown in \cref{fig:Atari2600-all-environments-improvements} (Left).
For the PPO agent, SUFT surpasses the baseline agent in 39 out of the 42 valid games, achieving a reward improvement of over 100\% in 38.1\% of them and over 10\% in 83.3\% of them, as shown in \cref{fig:Atari2600-all-environments-improvements} (Right).

Additionally, when evaluating the mean reward ratio, SUFT demonstrates profound gains relative to the baseline agents.
For the DQN, Vanilla DQN, Double DQN, and PPO agents, SUFT achieves a mean reward ratio of 383.19\%, 228.12\%, 226.91\%, and 197.34\%, respectively, as shown in \cref{fig:Atari-Gamer-Mean-Bar-Chart}.

Moreover, in several Atari games, SUFT not only surpasses the baseline agent but also achieves higher rewards than an average human, as shown in \cref{fig:Atari2600-selected-learning-curves}. 
This achievement highlights the potential of our method to boost agents' performance even above the human level, while keeping constrained environment interactions and resource demands. 
The human rewards are sourced from \cite{wang2016dueling}.

\paragraph{MuJoCo:}
For the MuJoCo benchmark, SUFT demonstrates improvements in the SAC and PPO agents.
For the SAC agent, SUFT surpasses the baseline agent in 4 out of 5 environments, with statistically significant gains in 3 environments.
For the PPO agent, SUFT surpasses the baseline agent in 4 out of 5 environments (in the other environment, the score is the same), with statistically significant gains in 2 environments.
With regard to the mean reward ratio, SUFT achieves 224.53\% for the SAC agent and 111.21\% for the PPO agent, as shown in \cref{fig:Atari-Gamer-Mean-Bar-Chart}.

\subsection{Buffer Size Reduction}
We also evaluate the sample efficiency of SUFT by comparing its performance to an identical DQN agent utilizing a larger 100K experience replay buffer size.
Remarkably, SUFT achieves a mean reward ratio gain of 437.05\%, outperforming an identical agent that uses a \emph{25 times larger} buffer size.

These results underscore the potential of our approach to obtain high-performance DRL agents with significantly smaller buffers, paving the way for more computationally efficient and accessible DRL methods by enhancing \emph{sample efficiency at a negligible cost}.

\subsection{Ablation Study}
We conduct an ablation study to isolate the contribution of the SUFT OPE term to DQN performance.
As shown \cref{tab:Atari2600-dqn-ablation}, we compare SUFT DQN against several alternatives: increasing the buffer size, a Vanilla DQN, and a Double DQN.
While enlarging the buffer or adopting more complex DQN variants leads to improvements, incorporating the SUFT OPE term yields the highest Human Normalized Mean of \textbf{63.45\%}, outperforming all other baselines that require higher resources and computational costs.
These results highlight the potential of our method to improve the agent's performance by enhancing \emph{sample efficiency at a negligible cost}, providing a simple yet highly effective method for improving agent performance at a \emph{negligible cost}.

\begin{table}
\caption{An ablation study on the DQN agent comparing the SUFT additional term impact against enlarging the buffer size, adding a Vanilla DQN, and adding a Double DQN.
The DQN, Vanilla DQN, Double DQN, and SUFT DQN use a 4K buffer size.
All agents are using \({L_2}\) loss.
}
\label{tab:Atari2600-dqn-ablation}
\centering
\begin{tabular}{lcc}
\toprule
Agent & Human Normalized Mean (\%) \\
\midrule
DQN & 25.04\% \\
DQN, buffer 100K & 29.29\% \\
Vanilla DQN & 41.26\% \\
Double DQN & 42.31\% \\
\textbf{SUFT DQN} & \textbf{63.45\%} \\
\bottomrule
\end{tabular}
\end{table}

\subsection{Causal Bound Optimization vs Inertia Regularizer}
To isolate the effect of our causal mechanism and ensure that the significant improvements of our method stem from the causal bound optimization rather than merely from a regularization effect, we conduct a controlled experiment.
Specifically, we evaluate a double DQN agent that adds an inertia regularizer to the standard temporal difference loss function in the form of:
\begin{align}
 \left (Q_\text{current network} - Q_\text{target network} \right ) ^ 2 \notag   
\end{align}
Where \(Q_\text{target network}\) denotes the slowly updated target network, as done in a standard double DQN.
The experiment uses the same configurations as both the SUFT double DQN and the baseline double DQN, with the additional regularization term being the only difference from the baseline agent.
The results in \cref{tab:Atari2600-ddqn-inertia-regularizer} show that the inertia regularization significantly degrades double DQN performance. Resulting in an 83.21\% mean reward ratio. This provides evidence that the gains from our method, which result in a 226.91\% mean reward ratio improvement, arise from the causal bound optimization, where the old Q-network values used for the OPE term correspond to the behavior policy that generated the experiences, thereby mitigating the divergence between the on-policy and off-policy methods. 
These results indicate that our method functions not merely as a stability-promoting regularizer, but as a principled causal bound optimization.

\section{Conclusion}
This paper presents SUFT, a causal upper-bound loss optimization method that enhances DRL sample efficiency and reduces computational demands at a \emph{negligible cost}.
We begin by establishing a provable bound on the factual loss within the Neyman-Rubin potential outcomes framework and seamlessly adapting it to the DRL framework, showing that on-policy loss is bounded by every agent's standard off-policy loss and an additional SUFT OPE term.
This term is computable by storing past value network outputs in the experience replay buffer, thereby reusing discarded data to improve agents' performance.
Experimental results across diverse environments and agents show that our method leads to significant improvements in convergence rates and reward outcomes.
Looking ahead to future work, we see physical robotics as a particularly promising domain, where SUFT's reduced-resource method could make DRL more practical under real-world constraints.

\bibliographystyle{neurips_2025}
\bibliography{neurips_2025}

\newpage
\appendix

\section{Causal Bound Notation Summary}
\begin{framed}
    \(\mathbf{L(\cdot,\cdot)}: \mathbb{R} \times \mathbb{R} \to \mathbb{R}_{\geq 0}:\) loss function. \\
    \(q_t:\) the probability of treatment \(t\). \\
    \(P^t_X:\) the conditional distribution of observations. \\
    \(P^x_{Y_1}:\) the conditional distribution of potential outcomes given the target treatment. \\
    \(P^x_{Y_0}:\) the conditional distribution of potential outcomes given the control treatment. \\
    \(\phi(x; \theta_t): \mathcal{X} \to  \mathcal{Y}:\) neural network hypothesis parameterized by weights \(\theta_t\). \\
    \(\ell_{\phi}(x,t):\) the expected outcome loss. \\
    \(\epsilon_{F_{\phi}}:\) the expected factual outcome loss. \\
    \(\epsilon_{CF_{\phi}}:\) the expected counterfactual outcome loss. \\
    \(\psi_{\phi}:\) The estimated treatment effect loss. \\
    \(\delta:\) a constant term independent of \(\phi\).
\end{framed}

\section{Theoretical Result}
\label{appendix-theory}
\subsection{Definitions and Proofs}
In this section, we present the complete definitions and proofs to establish our theoretical result within the causal framework.
Let \(\mathbf{L} : \mathbb{R} \times \mathbb{R} \to \mathbb{R}_{\geq 0}\) represent a loss function.
\begin{assumption}
    \label{ass:loss-function-assumption-appendix}
    \begin{align}
      \mathbf{L} (x,y) - \mathbf{L} (x',y') 
      &\leq \notag
      \mathbf{L} (x,x') + \mathbf{L} (y,y').
    \end{align}
\end{assumption}    
\begin{proposition}
    \label{pro:loss-function-assumption-holds-l1-appendix}
    \cref{ass:loss-function-assumption-appendix} holds for \({L_1}\) loss. 
\end{proposition}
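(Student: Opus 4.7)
The goal is to verify the inequality
\[
\mathbf{L}(x,y) - \mathbf{L}(x',y') \leq \mathbf{L}(x,x') + \mathbf{L}(y,y')
\]
when $\mathbf{L}(a,b) = |a - b|$. The natural route is a direct application of the triangle inequality for the absolute value, so I would not invoke any auxiliary machinery.

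\textbf{Main step.} I would begin by rewriting the difference $x - y$ by inserting $\pm x'$ and $\pm y'$:
\[
x - y = (x - x') + (x' - y') + (y' - y).
\]
Taking absolute values on both sides and applying the triangle inequality twice yields
\[
|x - y| \leq |x - x'| + |x' - y'| + |y' - y|.
\]
Rearranging and using $|y' - y| = |y - y'|$ gives
\[
|x - y| - |x' - y'| \leq |x - x'| + |y - y'|,
\]
which is exactly Assumption 2.1 with $\mathbf{L}$ taken to be the $L_1$ loss.

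\textbf{Expected obstacles.} There are essentially none; the statement is a one-line consequence of the triangle inequality, so the only care required is keeping the $\pm$ insertions and the sign of the rearrangement correct. The proposition is presumably included in the appendix only to justify the phrasing of Assumption 2.1 and to document that the $L_1$ case is the intended instantiation, rather than to present a nontrivial argument.
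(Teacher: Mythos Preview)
Your argument is correct. Both your proof and the paper's are elementary triangle-inequality manipulations, but they take slightly different routes. The paper first bounds $\mathbf{L}(x,y)-\mathbf{L}(x',y')$ by its absolute value, then applies the reverse triangle inequality $\big||x-y|-|x'-y'|\big|\le |(x-y)-(x'-y')|$, and finally the ordinary triangle inequality on the two-term sum $|x-x'+y'-y|$. You instead insert $\pm x'$ and $\pm y'$ directly into $x-y$ and apply the triangle inequality to the three-term decomposition, which is a hair more direct since it bypasses the reverse triangle inequality entirely. The paper's detour has the minor by-product of actually bounding $|\mathbf{L}(x,y)-\mathbf{L}(x',y')|$ rather than just the signed difference, but that stronger intermediate statement is never used downstream, so nothing is lost by your shortcut.
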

\begin{proof}
\begin{align}
    \mathbf{L} (x,y) - \mathbf{L} (x',y') 
    &\leq \label{eq:l1-loss-pro-1}
    |\mathbf{L} (x,y) - \mathbf{L} (x',y') | 
    \\ 
    &= \label{eq:l1-loss-pro-2}
    ||x-y| - |x'-y'||
    \\ 
    &\leq \label{eq:l1-loss-pro-3} 
    |x-y-(x'-y')|
    \\ 
    &= \notag
    |x-x' + y'-y)|
    \\ 
    &\leq  \label{eq:l1-loss-pro-4}
    |x-x'| + |y'-y|
    \\ 
    &= \label{eq:l1-loss-pro-5}
    \mathbf{L} (x,x') + \mathbf{L} (y,y')
\end{align}
Inequality \ref{eq:l1-loss-pro-1} is by \(x \leq |x|\),
equality \ref{eq:l1-loss-pro-2} is by the definition of \({L_1}\) loss,
inequality \ref{eq:l1-loss-pro-3} is by reverse triangle inequality \(||x_1| - |x_2|| \leq |x_1 - x_2|\),
inequality \ref{eq:l1-loss-pro-4} is by triangle inequality \(|x_1 + x_2| \leq |x_1| + |x_2|\),
equality \ref{eq:l1-loss-pro-5} is by absolute symmetry \(|a|=|-a|\), and the definition of \({L_1}\) loss.
\end{proof}

We start with the binary treatment case and employ the Neyman-Rubin potential outcomes framework to define the following:
\(\mathcal{X} \subset \mathbb{R}^d \) is the observation space.
\(\mathcal{Y} \subset \mathbb{R}\) is the outcome space.
\(\mathcal{T} = \{0,1\}\) is the binary treatment space.
Under this framework, there are two potential outcomes depending on the treatment assignment: if \(t=1\), the observed outcome is \(y=Y_1\), and if \(t=0\), the observed outcome is \(y=Y_0\).

We assume a joint distribution \(P(X,T,Y_0,Y_1)\) with the following components:
\(q_t:=\Pr(T=t)\) is the treatment probability.
\(P^t_X:=P(X|T=t)\) is the conditional distribution of observations.
\(P^x_{Y_1} := P(Y_1|X=x)\) is the conditional distribution of potential outcomes given the target treatment, and \(P^x_{Y_0} := P(Y_0|X=x)\) is the conditional distribution of potential outcomes given the control treatment.
We also assume that there are no hidden confounders in the observational data, which makes the estimated treatment effect identifiable. This is formalized by assuming that the standard strong ignorability condition holds: \((Y_0,Y_1) \indep T|X\), and \(0<p(t=1|x)<1\) for all \(x\). Strong ignorability is a sufficient condition for the estimated treatment effect to be identifiable \cite{pearl2015detecting,shalit2017estimating}.

We now present the definitions used to formulate the causal upper bound.

Let \(t \in \mathcal{T}\) be a treatment.
\begin{definition}
    \label{def:phi-neural-netork-appendix}
    \(\phi(x; \theta_t): \mathcal{X} \to  \mathcal{Y}\) is a neural network hypothesis parameterized by weights \(\theta_t\) corresponding to the performed treatment \(t\).
\end{definition}
\begin{definition}
    \label{def:expected-outcome-loss-appendix}
    The expected outcome loss:
    \begin{align}
        \ell_{\phi}(x,t) 
        &:= \notag
        \mathbb{E}_{y \sim P^x_{Y_t}}
        [\mathbf{L}(y, \phi(x;\theta_t))].    
    \end{align}
\end{definition}
\begin{definition}
    \label{def:factual-loss-function-appendix}
    The expected factual outcome loss:
    \begin{align}
        \epsilon^{1}_{F_{\phi}} 
        &:= \notag
        \mathbb{E}_{\substack{x \sim P^1_X}}
        [\ell_{\phi}(x,1)].
        \\
        \epsilon^{0}_{F_{\phi}} 
        &:= \notag
        \mathbb{E}_{\substack{x \sim P^0_X}} 
        [\ell_{\phi}(x,0)].
    \end{align}
\end{definition}
\begin{definition}
    \label{def:combined-factual-loss-function-appendix}
    The expected combined factual outcome loss:
    \begin{align}
        \epsilon_{F_{\phi}} 
        &:= \notag
        q_1\epsilon^{1}_{F_{\phi}}
        +
        q_0\epsilon^{0}_{F_{\phi}}.
    \end{align}
\end{definition}
\begin{definition}
    \label{def:counterfactual-loss-function-appendix}
    The expected counterfactual outcome loss:
    \begin{align}
        \epsilon^{1}_{CF_{\phi}} 
        &:= \notag
        \mathbb{E}_{\substack{x \sim P^0_X}} 
        [\ell_{\phi}(x,1)].
        \\
        \epsilon^{0}_{CF_{\phi}}
        &:= \notag
        \mathbb{E}_{\substack{x \sim P^1_X}} 
        [\ell_{\phi}(x,0)].
    \end{align}
\end{definition}
\begin{definition}
    \label{def:combined-counterfactual-loss-function-appendix}
    The expected combined counterfactual outcome loss:
    \begin{align}
        \epsilon_{CF_{\phi}} 
        &:= \notag
        q_0\epsilon^{1}_{CF_{\phi}}
        +
        q_1\epsilon^{0}_{CF_{\phi}}.
    \end{align}
\end{definition}
\begin{definition}
    \label{def:psi-objective-term-appendix}
    The estimated treatment effect loss:
    \begin{align}
        \psi_{\phi}^{1} 
        &:= \notag
        \mathbb{E}_{\substack{x \sim P^0_X}} 
        [\mathbf{L}(\phi(x;\theta_0), \phi(x;\theta_1))].
        \\
        \psi_{\phi}^{0} 
        &:= \notag
        \mathbb{E}_{\substack{x \sim P^1_X}} 
        [\mathbf{L}(\phi(x;\theta_1), \phi(x;\theta_0))].
    \end{align}
\end{definition}
\begin{definition}
    \label{def:combined-psi-objective-term-appendix}
    The combined estimated treatment effect loss:
    \begin{align}
        \psi_{\phi} 
        &:= \notag
        q_0\psi_{\phi}^{1} + q_1\psi_{\phi}^{0}.
    \end{align}
\end{definition}

\subsection{Causal Bound Theorem}
\begin{lemma}
    \label{lem:causal-bound-help-appendix}
    \begin{align}
        \mathbb{E}_{y_1 \sim P^x_{Y_1}}
        [
        \mathbb{E}_{y_0 \sim P^x_{Y_0}}
        [
        \mathbf{L}(y_1, \phi(x;\theta_1))
        ]
        ]
        &= \notag
        \mathbb{E}_{y_1 \sim P^x_{Y_1}}
        [
        \mathbf{L}(y_1, \phi(x;\theta_1))
        ].
    \end{align}
Similarly, for the opposite case:
    \begin{align}
        \mathbb{E}_{y_0 \sim P^x_{Y_0}}
        [
        \mathbb{E}_{y_1 \sim P^x_{Y_1}}
        [
        \mathbf{L}(y_0, \phi(x;\theta_0))
        ]
        ]
        &= \notag
        \mathbb{E}_{y_0 \sim P^x_{Y_0}}
        [
        \mathbf{L}(y_0, \phi(x;\theta_0)))
        ].
    \end{align}
\end{lemma}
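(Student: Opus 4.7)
The statement is essentially the observation that an expectation of a quantity that does not depend on the integrating variable collapses to that quantity itself. My plan is to exploit this directly, with no heavy machinery required.

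The key observation is that the integrand of the inner expectation, namely $\mathbf{L}(y_1, \phi(x;\theta_1))$, is a function only of $y_1$ and $\phi(x;\theta_1)$, and hence is constant with respect to $y_0$. Since $P^x_{Y_0}$ is a probability measure, integrating a constant against it yields the same constant; formally, one applies the standard fact that $\mathbb{E}_{Z}[c] = c$ whenever $c$ does not depend on $Z$. So the inner expectation $\mathbb{E}_{y_0 \sim P^x_{Y_0}}[\mathbf{L}(y_1, \phi(x;\theta_1))]$ reduces immediately to $\mathbf{L}(y_1, \phi(x;\theta_1))$. Substituting this back, the left-hand side becomes $\mathbb{E}_{y_1 \sim P^x_{Y_1}}[\mathbf{L}(y_1, \phi(x;\theta_1))]$, which is exactly the right-hand side.

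For the second identity, I would simply appeal to the symmetry of the argument: the roles of the subscripts $0$ and $1$ play no distinguished part in the derivation, so swapping $y_0 \leftrightarrow y_1$ and $\theta_0 \leftrightarrow \theta_1$ throughout establishes the opposite case in the same fashion. A one-line remark to that effect should suffice.

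There is no real obstacle here; this lemma is purely an organizational step that allows later arguments (presumably the proof of \cref{thm:causal-bound}) to freely introduce a dummy expectation over the opposite potential outcome so that factual and counterfactual losses can be compared term-by-term under a common inner expectation. If anything, the only subtlety worth flagging is that $y_0$ and $y_1$ are not assumed independent as potential outcomes, but this is irrelevant because the inner expectation is taken against the marginal $P^x_{Y_0}$ of an integrand that is constant in $y_0$; joint dependence never enters the computation.
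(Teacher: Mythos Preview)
Your proposal is correct and matches the paper's own proof in substance: the paper writes out the nested expectations as explicit integrals, factors $\mathbf{L}(y_1,\phi(x;\theta_1))$ out of the inner integral because it does not depend on $y_0$, uses $\int_{\mathcal{Y}} p(y_0\mid x)\,dy_0 = 1$, and then handles the opposite case by the same symmetry remark you make. Your phrasing via $\mathbb{E}_Z[c]=c$ is simply the abstract version of that same computation.
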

\begin{proof}
    \begin{align}
        \mathbb{E}_{y_1 \sim P^x_{Y_1}}
        [
        \mathbb{E}_{y_0 \sim P^x_{Y_0}}
        [
        \mathbf{L}(y_1, \phi(x;\theta_1))
        ]
        ]
        & = \notag
        \int_{\mathcal{Y}}
        \left(
        \int_{\mathcal{Y}}
        \mathbf{L}(y_1, \phi(x;\theta_1)) 
        \cdot
        p(y_0|x) dy_0
        \right)
        \cdot
        p(y_1|x) dy_1
        \\ &= \notag
        \int_{\mathcal{Y}}
        \mathbf{L}(y_1, \phi(x;\theta_1))
        \cdot
        \left(
        \int_{\mathcal{Y}}
        p(y_0|x)dy_0
        \right)
        \cdot
        p(y_1|x)dy_1
        \\ &= \notag
        \int_{\mathcal{Y}}
        \mathbf{L}(y_1, \phi(x;\theta_1))
        \cdot
        p(y_1|x)dy_1
        \\ &= \notag
        \mathbb{E}_{y_1 \sim P^x_{Y_1}}
        [
        \mathbf{L}(y_1, \phi(x;\theta_1))
        ]
    \end{align}
The proof for the opposite case is the same, replacing \(y_0\) and \(y_1\).
\end{proof}
\begin{theorem}
    \label{thm:causal-bound-appendix}
    The expected factual outcome loss, \(\epsilon_{F_{\phi}}\), is bounded by the expected counterfactual outcome loss, \(\epsilon_{CF_{\phi}}\), the estimated treatment effect loss, \(\psi_{\phi}\), and a constant term \(\delta\), independent of \(\phi\):
    \begin{align}
        \epsilon_{F_{\phi}} 
        &\leq \notag
        \epsilon_{CF_{\phi}}
        +
        \psi_{\phi} + \delta. 
    \end{align}
    Where \(\delta\) is the following:
    \begin{align}
        \delta^1  
        &:= \notag
        \mathbb{E}_{\substack{x \sim P^0_X}} 
        [
        \mathbb{E}_{\substack{y_0 \sim P^x_{Y_0}}}
        [
        \mathbb{E}_{\substack{y_1 \sim P^x_{Y_1}}}
        [
        \mathbf{L}(y_0, y_1)
        ]
        ]
        ].
        \\
        \delta^0  
        &:= \notag
        \mathbb{E}_{\substack{x \sim P^1_X}} 
        [
        \mathbb{E}_{\substack{y_1 \sim P^x_{Y_1}}}
        [
        \mathbb{E}_{\substack{y_0 \sim P^x_{Y_0}}}
        [
        \mathbf{L}(y_1, y_0)
        ]
        ]
        ].
        \\
        \delta 
        &:= \notag
        q_0\delta^1 
        +
        q_1\delta^0.
    \end{align}
\end{theorem}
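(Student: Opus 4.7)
My plan is to reduce the theorem to two symmetric single-treatment inequalities,
\[
\epsilon^1_{F_{\phi}} \leq \epsilon^0_{CF_{\phi}} + \psi_{\phi}^0 + \delta^0,
\qquad
\epsilon^0_{F_{\phi}} \leq \epsilon^1_{CF_{\phi}} + \psi_{\phi}^1 + \delta^1,
\]
and then form the convex combination of these two lines with weights $q_1$ and $q_0$, respectively. This combination matches the definitions of $\epsilon_{CF_{\phi}}$, $\psi_{\phi}$, and $\delta$ on the nose, because each of those three quantities weights its treatment-$t$ component by $q_{1-t}$ — exactly the swap that appears when we pass from a factual loss $\epsilon^t_{F_{\phi}}$ to the ``opposite-side'' counterfactual quantities on the right-hand side.

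To establish the first single-treatment inequality, I would start from the pointwise instantiation of \cref{ass:loss-function-assumption-appendix} with $(x, y, x', y') \mapsto (y_1, \phi(x;\theta_1), y_0, \phi(x;\theta_0))$, which gives
\[
\mathbf{L}(y_1, \phi(x;\theta_1)) \leq \mathbf{L}(y_0, \phi(x;\theta_0)) + \mathbf{L}(y_1, y_0) + \mathbf{L}(\phi(x;\theta_1), \phi(x;\theta_0))
\]
for every choice of $x$, $y_0$, $y_1$. Next I would apply the iterated expectation $\mathbb{E}_{y_1 \sim P^x_{Y_1}}\mathbb{E}_{y_0 \sim P^x_{Y_0}}[\cdot]$ on both sides; by \cref{lem:causal-bound-help-appendix}, the trivial outer expectations collapse, leaving $\ell_{\phi}(x,1)$ on the left and, on the right, $\ell_{\phi}(x,0)$ plus the joint ``distance between potential outcomes'' term and the treatment-gap term $\mathbf{L}(\phi(x;\theta_1), \phi(x;\theta_0))$. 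Finally I would take the outer expectation over $x \sim P^1_X$, which turns the left into $\epsilon^1_{F_{\phi}}$ and produces exactly $\epsilon^0_{CF_{\phi}}$, $\delta^0$, and $\psi_{\phi}^0$ on the right. The second single-treatment inequality then follows by interchanging the roles of $0$ and $1$ and integrating against $P^0_X$ in the last step.

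I expect the main obstacle to be bookkeeping rather than analysis: one must verify that the interchange of treatment labels, of $q$-weights, and of conditional observation distributions $P^0_X \leftrightarrow P^1_X$ in the definitions of $\epsilon_{CF_{\phi}}$, $\psi_{\phi}$, and $\delta$ is exactly the swap induced by the pointwise inequality, so that no residual terms survive the final convex combination. A small subtlety is that \cref{lem:causal-bound-help-appendix} is deployed in two directions at once — it \emph{removes} a redundant expectation over $y_0$ from the term that became $\ell_\phi(x,1)$, while \emph{introducing} one on the $\mathbf{L}(y_0, \phi(x;\theta_0))$ term so that the $\mathbf{L}(y_1, y_0)$ discrepancy is integrated over both potential outcomes and fits the definition of $\delta^0$. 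Strong ignorability is not invoked directly in the proof chain; it only underwrites identifiability of the quantities involved.
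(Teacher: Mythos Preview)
Your proposal is correct and follows essentially the same approach as the paper's proof. The only difference is organizational: the paper starts from the combined expression $\epsilon_{F_{\phi}} - \epsilon_{CF_{\phi}}$, groups it into the two pieces $q_1(\epsilon^1_{F_{\phi}} - \epsilon^0_{CF_{\phi}})$ and $q_0(\epsilon^0_{F_{\phi}} - \epsilon^1_{CF_{\phi}})$, and then bounds each piece using exactly your pointwise application of \cref{ass:loss-function-assumption-appendix} followed by iterated expectations and \cref{lem:causal-bound-help-appendix}; you instead establish the two single-treatment inequalities first and form the $q_1,q_0$ combination afterward, but the analytic content is identical.
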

\begin{proof}
We begin by placing the loss definitions:
\begin{align}
    \epsilon_{F_{\phi}} - \epsilon_{CF_{\phi}}
    &= \label{eq:bound-thm-1}
    q_1 \epsilon^{1}_{F_{\phi}} 
    + q_0 \epsilon^{0}_{F_{\phi}} 
    - q_0 \epsilon^{1}_{CF_{\phi}} 
    - q_1 \epsilon^{0}_{CF_{\phi}} 
    \\ &= \label{eq:bound-thm-2}
    q_1 \mathbb{E}_{\substack{x \sim P^1_X}} 
    [\ell_{\phi}(x,1)] 
    + q_0 \mathbb{E}_{\substack{x \sim P^0_X}} 
    [\ell_{\phi}(x,0)] 
    \\ &- \notag
    q_0 \mathbb{E}_{\substack{x \sim P^0_X}} 
    [\ell_{\phi}(x,1)]
    - q_1 \mathbb{E}_{\substack{x \sim P^1_X}} 
    [\ell_{\phi}(x,0)] 
    \\ &= \label{eq:bound-thm-3}
    q_1 \mathbb{E}_{\substack{x \sim P^1_X}} 
    [
    \mathbb{E}_{\substack{y_1 \sim P^x_{Y_1}}}
    [
    \mathbf{L}(y_1, \phi(x;\theta_1))
    ]
    ] 
    + 
    q_0 \mathbb{E}_{\substack{x \sim P^0_X}} 
    [
    \mathbb{E}_{\substack{y_0 \sim P^x_{Y_0}}}
    [
    \mathbf{L}(y_0, \phi(x;\theta_0))
    ]
    ]
    \\ &- \notag
    q_0 \mathbb{E}_{\substack{x \sim P^0_X}} 
    [
    \mathbb{E}_{\substack{y_1 \sim P^x_{Y_1}}}
    [
    \mathbf{L}(y_1, \phi(x;\theta_1))
    ]
    ]
    -
    q_1 \mathbb{E}_{\substack{x \sim P^1_X}} 
    [
    \mathbb{E}_{\substack{y_0 \sim P^x_{Y_0}}}
    [
    \mathbf{L}(y_0, \phi(x;\theta_0))
    ]
    ]
    \\ &= \notag
    q_1 (
    \mathbb{E}_{\substack{x \sim P^1_X}} 
    [
    \mathbb{E}_{\substack{y_1 \sim P^x_{Y_1}}}
    [
    \mathbf{L}(y_1, \phi(x;\theta_1))
    ]
    ] 
    - \mathbb{E}_{\substack{x \sim P^1_X}} 
    [
    \mathbb{E}_{\substack{y_0 \sim P^x_{Y_0}}}
    [
    \mathbf{L}(y_0, \phi(x;\theta_0))
    ]
    ]
    )
    \\ &+ \notag
    q_0 (
    \mathbb{E}_{\substack{x \sim P^0_X}} 
    [
    \mathbb{E}_{\substack{y_0 \sim P^x_{Y_0}}}
    [
    \mathbf{L}(y_0, \phi(x;\theta_0))
    ]
    ]
    - \mathbb{E}_{\substack{x \sim P^0_X}} 
    [
    \mathbb{E}_{\substack{y_1 \sim P^x_{Y_1}}}
    [
    \mathbf{L}(y_1, \phi(x;\theta_1))
    ]
    ]
    )
\end{align}
Now we will use the linearity of expectation and \cref{lem:causal-bound-help-appendix}:
\begin{align}
    \epsilon_{F_{\phi}} - \epsilon_{CF_{\phi}}
    &= \label{eq:bound-thm-4}
    q_1 (
    \mathbb{E}_{\substack{x \sim P^1_X}} 
    [
    \mathbb{E}_{\substack{y_1 \sim P^x_{Y_1}}}
    [
    \mathbf{L}(y_1, \phi(x;\theta_1))
    ]
    - 
    \mathbb{E}_{\substack{y_0 \sim P^x_{Y_0}}}
    [
    \mathbf{L}(y_0, \phi(x;\theta_0))
    ]
    ]
    )
    \\ &+ \notag
    q_0 (
    \mathbb{E}_{\substack{x \sim P^0_X}} 
    [
    \mathbb{E}_{\substack{y_0 \sim P^x_{Y_0}}}
    [
    \mathbf{L}(y_0, \phi(x;\theta_0))
    ]
    - 
    \mathbb{E}_{\substack{y_1 \sim P^x_{Y_1}}}
    [
    \mathbf{L}(y_1, \phi(x;\theta_1))
    ]
    ]
    )
    \\ &= \label{eq:bound-thm-5}
    q_1 (
    \mathbb{E}_{\substack{x \sim P^1_X}} 
    [
    \mathbb{E}_{\substack{y_1 \sim P^x_{Y_1}}}
    [
    \mathbb{E}_{\substack{y_0 \sim P^x_{Y_0}}}
    [
    \mathbf{L}(y_1, \phi(x;\theta_1))
    - 
    \mathbf{L}(y_0, \phi(x;\theta_0))
    ]
    ]
    ]
    )
    \\ &+ \notag
    q_0 (
    \mathbb{E}_{\substack{x \sim P^0_X}} 
    [
    \mathbb{E}_{\substack{y_0 \sim P^x_{Y_0}}}
    [
    \mathbb{E}_{\substack{y_1 \sim P^x_{Y_1}}}
    [
    \mathbf{L}(y_0, \phi(x;\theta_0))
    - 
    \mathbf{L}(y_1, \phi(x;\theta_1))
    ]
    ]
    ]
    )
\end{align}
According to \cref{ass:loss-function-assumption-appendix} we will get:
\begin{align}
    \epsilon_{F_{\phi}} - \epsilon_{CF_{\phi}}
    &\leq \label{eq:bound-thm-6}
    q_1 (
    \mathbb{E}_{\substack{x \sim P^1_X}} 
    [
    \mathbb{E}_{\substack{y_1 \sim P^x_{Y_1}}}
    [
    \mathbb{E}_{\substack{y_0 \sim P^x_{Y_0}}}
    [
    \mathbf{L}(y_1, y_0)
    + 
    \mathbf{L}(\phi(x;\theta_1), \phi(x;\theta_0))
    ]
    ]
    ]
    )
    \\ &+ \notag
    q_0 (
    \mathbb{E}_{\substack{x \sim P^0_X}} 
    [
    \mathbb{E}_{\substack{y_0 \sim P^x_{Y_0}}}
    [
    \mathbb{E}_{\substack{y_1 \sim P^x_{Y_1}}}
    [
    \mathbf{L}(y_0, y_1)
    +
    \mathbf{L}(\phi(x;\theta_0), \phi(x;\theta_1))
    ]
    ]
    ]
    )
    \\ &= \label{eq:bound-thm-7}
    q_1 (
    \mathbb{E}_{\substack{x \sim P^1_X}} 
    [
    \mathbb{E}_{\substack{y_1 \sim P^x_{Y_1}}}
    [
    \mathbb{E}_{\substack{y_0 \sim P^x_{Y_0}}}
    [
    \mathbf{L}(y_1, y_0)
    ]
    ]
    ]
    +
    \mathbb{E}_{\substack{x \sim P^1_X}} 
    [
    \mathbf{L}(\phi(x;\theta_1), \phi(x;\theta_0))
    ]
    )
    \\ &+ \notag
    q_0 (
    \mathbb{E}_{\substack{x \sim P^0_X}} 
    [
    \mathbb{E}_{\substack{y_0 \sim P^x_{Y_0}}}
    [
    \mathbb{E}_{\substack{y_1 \sim P^x_{Y_1}}}
    [
    \mathbf{L}(y_0, y_1)
    ]
    ]
    ]
    +
    \mathbb{E}_{\substack{x \sim P^0_X}} 
    [
    \mathbf{L}(\phi(x;\theta_0), \phi(x;\theta_1))
    ]
    )
    \\ &= \notag
    q_0
    \mathbb{E}_{\substack{x \sim P^0_X}} 
    [
    \mathbf{L}(\phi(x;\theta_0), \phi(x;\theta_1))
    ]
    +
    q_1
    \mathbb{E}_{\substack{x \sim P^1_X}} 
    [
    \mathbf{L}(\phi(x;\theta_1), \phi(x;\theta_0))
    ]
    \\ &+ \notag
    q_0 
    \mathbb{E}_{\substack{x \sim P^0_X}} 
    [
    \mathbb{E}_{\substack{y_0 \sim P^x_{Y_0}}}
    [
    \mathbb{E}_{\substack{y_1 \sim P^x_{Y_1}}}
    [
    \mathbf{L}(y_0, y_1)
    ]
    ]
    ]
    +
    q_1 
    \mathbb{E}_{\substack{x \sim P^1_X}} 
    [
    \mathbb{E}_{\substack{y_1 \sim P^x_{Y_1}}}
    [
    \mathbb{E}_{\substack{y_0 \sim P^x_{Y_0}}}
    [
    \mathbf{L}(y_1, y_0)
    ]
    ]
    ]
    \\ &= \label{eq:bound-thm-8}
    q_0
    \psi_{\phi}^{1}
    +
    q_1
    \psi_{\phi}^{0}
    +
    q_0 \delta^1
    +
    q_1 \delta^0
    \\ &= \label{eq:bound-thm-9}
    \psi_{\phi}
    +
    \delta
\end{align}
Finally, we are getting the causal bound:
\begin{align}
    \epsilon_{F_{\phi}} 
    &\leq \notag
    \epsilon_{CF_{\phi}} 
    +
    \psi_{\phi}
    +
    \delta
\end{align}

Equality \ref{eq:bound-thm-1} is by \cref{def:combined-factual-loss-function-appendix,def:combined-counterfactual-loss-function-appendix},
equality \ref{eq:bound-thm-2} is by \cref{def:factual-loss-function-appendix,def:counterfactual-loss-function-appendix},
equality \ref{eq:bound-thm-3} is by \cref{def:expected-outcome-loss-appendix},
equality \ref{eq:bound-thm-4} is by linearity of expectations,
equality \ref{eq:bound-thm-5} is by linearity of expectations and by
\cref{lem:causal-bound-help-appendix}, 
inequality \ref{eq:bound-thm-6} is by \cref{ass:loss-function-assumption-appendix},
equality \ref{eq:bound-thm-7} is by linearity of expectations and because \(\phi\) is a constant on the expectation of \(Y_0,Y_1\),
equality \ref{eq:bound-thm-8} is by \cref{def:psi-objective-term-appendix} and \(\delta^1,\delta^0\) notation,
equality \ref{eq:bound-thm-9} is by \cref{def:combined-psi-objective-term-appendix} and \(\delta\) notation.
\end{proof}

\subsection{Causal Bound for \(N\) Control Treatments}
We now adjust the definitions and \cref{thm:causal-bound-appendix} proof from the binary treatment scenario to a single target treatment and N control treatments scenario.

We first set the definitions of the Neyman-Rubin potential outcome framework for the \(N\) control treatments scenario:
\(\mathcal{T}=\{1,\ldots,N+1\}\) is the treatments space, where \(t=j:\ 2 \leq j \leq N+1\) is the control treatments, and \(t=1\) is the target treatment compared against each control treatment.
Under this framework, there are two types of potential outcomes depending on the treatment assignment: if \(t=1\), the observed outcome \(y=Y_1\) is from the target treatment, and \(\forall j \in \{2,\ldots,N+1\}\) if \(t=j\), the observed outcome \(y=Y_j\) is from a control treatment.

We assume joint distribution \(P(X,T,Y_1,Y_2,\ldots,Y_{N+1})\) with the following components:
\(q_t:=\Pr(T=t)\) is the treatment probability.
\(P^t_X:=P(X|T=t)\) is the conditional distribution of observations.
\(P^x_{Y_1} := P(Y_1|X=x)\) is the conditional distribution of potential outcomes given the target treatment, and \(\forall j \in \{2,\ldots,N+1\}\) \(P^x_{Y_j} := P(Y_j|X=x)\) is the conditional distribution of potential outcomes given a control treatment \(t=j\).

\begin{definition}
    \label{def:factual-loss-function-N-case-appendix}
    The expected factual outcome loss for \(N\) control treatments:
    \begin{align}
        \epsilon^{1}_{F_{\phi}} 
        &:= \notag
        \mathbb{E}_{\substack{x \sim P^1_X}}
        [\ell_{\phi}(x,1)].
    \end{align}
    \(\forall j \in \{2,\ldots,N+1\}:\)
    \begin{align}
        \epsilon^{j}_{F_{\phi}} 
        &:= \notag
        \mathbb{E}_{\substack{x \sim P^j_X}} 
        [\ell_{\phi}(x,j)].
    \end{align}
\end{definition}
\begin{definition}
    \label{def:combined-factual-loss-function-N-case-appendix}
    The expected combined factual outcome loss for \(N\) control treatments:
    \begin{align}
        \epsilon_{F_{\phi}} 
        &:= \notag
        q_1\epsilon^1_{F_{\phi}} 
        +
        \sum_{j=2}^{N+1} 
        q_j\epsilon^j_{F_{\phi}}.
        \end{align}
\end{definition}
\begin{definition}
    \label{def:counterfactual-loss-function-N-case-appendix}
    The expected counterfactual outcome loss for \(N\) control treatments: \\
    \(\forall j \in \{2,\ldots,N+1\}:\)
    \begin{align}
        \epsilon^{1,j}_{CF_{\phi}} 
        &:= \notag
        \mathbb{E}_{\substack{x \sim P^j_X}} 
        [\ell_{\phi}(x,1)].
        \\
        \epsilon^{j}_{CF_{\phi}}
        &:= \notag
        \mathbb{E}_{\substack{x \sim P^1_X}} 
        [\ell_{\phi}(x,j)].
    \end{align}
\end{definition}
\begin{definition}
    \label{def:combined-counterfactual-loss-function-N-case-appendix}
    The expected combined counterfactual outcome loss for \(N\) control treatments:
    \begin{align}
        \epsilon_{CF_{\phi}}
        &:= \notag
        \sum_{j=2}^{N+1}
        \left(
        q_j \epsilon^{1,j}_{CF_{\phi}} 
        +
        \frac{1}{N} q_1 \epsilon^j_{CF_{\phi}} 
        \right).
        \end{align}
\end{definition}
\begin{definition}
    \label{def:psi-objective-term-N-case-appendix}
    The estimated treatment effect loss for \(N\) control treatments: \\
    \(\forall j \in \{2,\ldots,N+1\}:\)
    \begin{align}
        \psi^{1,j}_{\phi} 
        &:= \notag
        \mathbb{E}_{\substack{x \sim P^j_X}} 
        [\mathbf{L}(\phi(x;\theta_j), \phi(x;\theta_1))].
        \\
        \psi^j_{\phi} 
        &:= \notag
        \mathbb{E}_{\substack{x \sim P^1_X}} 
        [\mathbf{L}(\phi(x;\theta_1), \phi(x;\theta_j))].
    \end{align}
\end{definition}
\begin{definition}
    \label{def:combined-psi-objective-term-N-case-appendix}
    The combined estimated treatment effect loss for \(N\) control treatments:
    \begin{align}
        \epsilon_{\psi_{\phi}}
        &:= \notag
        \sum_{j=2}^{N+1}
        \left(
        q_j \psi^{1,j}_{\phi} 
        + 
        \frac{1}{N} q_1 \psi^j_{\phi} 
        \right).
    \end{align}
\end{definition}

All the other definitions and proofs are the same as the binary case.

We now generalize \cref{thm:causal-bound-appendix} from a single control treatment scenario to the \(N\) control treatments scenario as follows:
\begin{theorem}
    \label{thm:causal-bound-N-appendix}
    \begin{align}
        q_1\epsilon^1_{F_{\phi}} 
        +
        \sum_{j=2}^{N+1} 
        q_j\epsilon^j_{F_{\phi}}
        &\leq \notag
        \sum_{j=2}^{N+1}
        \left(
        q_j \epsilon^{1,j}_{CF_{\phi}} 
        +
        \frac{1}{N} q_1 \epsilon^j_{CF_{\phi}} 
        + 
        q_j \psi^{1,j}_{\phi} 
        + 
        \frac{1}{N} q_1 \psi^j_{\phi} 
        + 
        q_j \delta^{1,j}
        + 
        \frac{1}{N} q_1 \delta^j
        \right). 
    \end{align}
\end{theorem}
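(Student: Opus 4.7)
The plan is to reduce the statement to $N$ applications of the binary-case argument from \cref{thm:causal-bound-appendix}, one for each control treatment $j \in \{2, \ldots, N+1\}$, and then sum the resulting inequalities. The key observation is that the derivation of \cref{thm:causal-bound-appendix} never actually uses $q_0 + q_1 = 1$: the quantities $q_0$ and $q_1$ enter only as multiplicative coefficients on the four terms of the form $q_t \epsilon_F$ and $q_t \epsilon_{CF}$. Consequently the same chain of manipulations — expansion via \cref{def:expected-outcome-loss-appendix}, insertion of a redundant integration via \cref{lem:causal-bound-help-appendix}, and termwise application of \cref{ass:loss-function-assumption-appendix} — yields a valid inequality for any nonnegative pair of coefficients placed in front of the factual/counterfactual terms for a pair of treatments.

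First I would split the target-treatment factual contribution symmetrically across the $N$ controls using the identity $q_1 \epsilon^1_{F_{\phi}} = \sum_{j=2}^{N+1} \tfrac{q_1}{N}\, \epsilon^1_{F_{\phi}}$, so that the left-hand side rearranges into $\sum_{j=2}^{N+1} \bigl( \tfrac{q_1}{N} \epsilon^1_{F_{\phi}} + q_j \epsilon^j_{F_{\phi}} \bigr)$. Next, for each fixed $j$, I would apply the binary-case argument with target treatment $1$, control treatment $j$, and coefficient pair $\bigl(\tfrac{q_1}{N}, q_j\bigr)$ replacing $(q_1, q_0)$; repeating the derivation of \cref{thm:causal-bound-appendix} verbatim (with $0$ renamed to $j$) gives
\[
\tfrac{q_1}{N}\epsilon^1_{F_{\phi}} + q_j \epsilon^j_{F_{\phi}} \;\leq\; q_j \epsilon^{1,j}_{CF_{\phi}} + \tfrac{q_1}{N} \epsilon^j_{CF_{\phi}} + q_j \psi^{1,j}_{\phi} + \tfrac{q_1}{N} \psi^j_{\phi} + q_j \delta^{1,j} + \tfrac{q_1}{N} \delta^j,
\]
where the right-hand side terms are identified via \cref{def:psi-objective-term-N-case-appendix} and the natural $(1,j)$-analogue of the $\delta$ definition from \cref{thm:causal-bound-appendix}. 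Summing these $N$ inequalities over $j = 2, \ldots, N+1$, together with the splitting of $q_1 \epsilon^1_{F_{\phi}}$ on the left, produces exactly the stated bound.

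The main obstacle I foresee is purely bookkeeping: one must carefully track which side of each symmetric pair corresponds to $\psi^{1,j}_{\phi}$ (evaluated under $P^j_X$) versus $\psi^j_{\phi}$ (evaluated under $P^1_X$), and likewise for $\delta^{1,j}$ versus $\delta^j$, so that the coefficients $q_j$ and $\tfrac{q_1}{N}$ attach to the correct terms after invoking \cref{ass:loss-function-assumption-appendix} in both directions. No new analytic ingredient is required beyond \cref{ass:loss-function-assumption-appendix} and \cref{lem:causal-bound-help-appendix}; the multiplicity $N$ enters only through the factor $\tfrac{1}{N}$ arising from the symmetric splitting of the target-treatment weight $q_1$.
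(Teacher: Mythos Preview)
Your argument is correct. The key insight you isolate — that the derivation of \cref{thm:causal-bound-appendix} never uses $q_0+q_1=1$, so it holds with any nonnegative coefficient pair in place of $(q_1,q_0)$ — is exactly what is needed, and applying it with $(\tfrac{q_1}{N},q_j)$ for each $j$ and summing yields the theorem in one stroke.

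The paper takes a different route: it argues by induction on $N$, peeling off the $(N{+}1)$-th control treatment at the inductive step and invoking both the inductive hypothesis (for the first $N{-}1$ controls) and the binary case (for the last one). Your direct-summation approach is more economical and also more transparent: the induction in the paper applies the hypothesis to a sum whose $q_1$-coefficient is $\tfrac{1}{N}$ rather than the $\tfrac{1}{N-1}$ appearing in the stated hypothesis, so to make that step rigorous one must in any case appeal to the very observation you highlight — that the coefficients are free parameters in the binary-case derivation. In that sense your proof makes explicit the ingredient that the inductive argument needs implicitly, and it avoids the bookkeeping of carrying the changing $\tfrac{1}{N}$ factor through an induction. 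The only care required, as you note, is matching each of $\psi^{1,j}_\phi$, $\psi^j_\phi$, $\delta^{1,j}$, $\delta^j$ to the correct coefficient after invoking \cref{ass:loss-function-assumption-appendix}, which is straightforward once one tracks which of $P^1_X$ or $P^j_X$ governs the outer expectation.
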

Where \(\delta\) is the following:
\begin{align}
    \delta^{1,j}
    &:= \notag
    \mathbb{E}_{\substack{x \sim P^j_X}} 
    [
    \mathbb{E}_{\substack{y_j \sim P^x_{Y_j}}}
    [
    \mathbb{E}_{\substack{y_1 \sim P^x_{Y_1}}}
    [
    \mathbf{L}(y_j, y_1)
    ]
    ]
    ].
    \\
    \delta^j
    &:= \notag
    \mathbb{E}_{\substack{x \sim P^1_X}} 
    [
    \mathbb{E}_{\substack{y_1 \sim P^x_{Y_1}}}
    [
    \mathbb{E}_{\substack{y_j \sim P^x_{Y_j}}}
    [
    \mathbf{L}(y_1, y_j)
    ]
    ]
    ].
    \\
    \delta
    &:= \notag
    \sum_{j=2}^{N+1}
    \left(
    q_j \delta^{1,j}
    + 
    \frac{1}{N} q_1 \delta^j
    \right).
\end{align}
\begin{proof}
by induction.

Step 1: Base Case

For a single target treatment and a single control treatment, \(N=1\), the following inequality holds: 
\begin{align}
    q_1 \epsilon^1_{F_{\phi}} 
    + 
    q_2 \epsilon^2_{F_{\phi}}
    -
    q_2\epsilon^{1,2}_{CF_{\phi}} 
    -
    q_1\epsilon^{2}_{CF_{\phi}} 
    &\leq \notag
    q_2\psi^{1,2}_{\phi}
    +
    q_1\psi^{2}_{\phi}
    + 
    q_2\delta^{1,2}
    +
    q_1\delta^{2}  
\end{align}

As we prove in \cref{thm:causal-bound-appendix} where the control treatment is \(t=2\) instead of \(t=0\).

Step 2: Inductive Hypothesis

Assume that for \(N-1 \geq 2\) control treatments, the following inequality holds:
\begin{align}
    \sum_{j=2}^{N}
    \left(
    \frac{1}{N-1} q_1 \epsilon^1_{F_{\phi}} 
    + 
    q_j \epsilon^j_{F_{\phi}}
    -
    q_j\epsilon^{1,j}_{CF_{\phi}} 
    -
    \frac{1}{N-1} q_1\epsilon^{j}_{CF_{\phi}} 
    \right)
    &\leq \notag \\
    \sum_{j=2}^{N}
    \left(
    q_j\psi^{1,j}_{\phi}
    +
    \frac{1}{N-1} q_1\psi^{j}_{\phi}
    + 
    q_j\delta^{1,j}
    +
    \frac{1}{N-1} q_1\delta^{j} 
    \right) \notag
\end{align}

Step 3: Inductive Step

We will now prove the inequality for \(N\) control treatments:
\begin{align}
    \sum_{j=2}^{N+1}
    \left(
    \frac{1}{N} q_1 \epsilon^1_{F_{\phi}} 
    + 
    q_j \epsilon^j_{F_{\phi}}
    -
    q_j\epsilon^{1,j}_{CF_{\phi}} 
    -
    \frac{1}{N} q_1\epsilon^{j}_{CF_{\phi}} 
    \right)
    &\leq \notag
    \sum_{j=2}^{N+1}
    \left(
    q_j\psi^{1,j}_{\phi}
    +
    \frac{1}{N} q_1\psi^{j}_{\phi}
    + 
    q_j\delta^{1,j}
    +
    \frac{1}{N} q_1\delta^{j} 
    \right)
\end{align}

Inductive Step proof.
\begin{align}
    \sum_{j=2}^{N+1}
    \left(
    \frac{1}{N} q_1 \epsilon^1_{F_{\phi}} 
    + 
    q_j \epsilon^j_{F_{\phi}}
    -
    q_j\epsilon^{1,j}_{CF_{\phi}} 
    -
    \frac{1}{N} q_1\epsilon^{j}_{CF_{\phi}} 
    \right)
    &= \notag \\
    \sum_{j=2}^{N}
    \left(
    \frac{1}{N} q_1 \epsilon^1_{F_{\phi}} 
    + 
    q_j \epsilon^j_{F_{\phi}}
    -
    q_j\epsilon^{1,j}_{CF_{\phi}} 
    -
    \frac{1}{N} q_1\epsilon^{j}_{CF_{\phi}} 
    \right)
    &+ \notag
    \frac{1}{N} q_1 \epsilon^1_{F_{\phi}} 
    + 
    q_{N+1} \epsilon^{N+1}_{F_{\phi}}
    \\ &- \notag
    q_{N+1}\epsilon^{1,N+1}_{CF_{\phi}} 
    -
    \frac{1}{N} q_1\epsilon^{N+1}_{CF_{\phi}} 
    \\ &\leq \label{eq:bound-N-thm-1}
    \sum_{j=2}^{N}
    \left(
    q_j\psi^{1,j}_{\phi}
    +
    \frac{1}{N} q_1\psi^{j}_{\phi}
    + 
    q_j\delta^{1,j}
    +
    \frac{1}{N} q_1\delta^{j} 
    \right)
    \\ &+ \notag
    \frac{1}{N} q_1 \epsilon^1_{F_{\phi}} 
    + 
    q_{N+1} \epsilon^{N+1}_{F_{\phi}}
    \\ &- \notag
    q_{N+1}\epsilon^{1,N+1}_{CF_{\phi}} 
    -
    \frac{1}{N} q_1\epsilon^{N+1}_{CF_{\phi}}
    \\ &\leq \label{eq:bound-N-thm-2}
    \sum_{j=2}^{N}
    \left(
    q_j\psi^{1,j}_{\phi}
    +
    \frac{1}{N} q_1\psi^{j}_{\phi}
    + 
    q_j\delta^{1,j}
    +
    \frac{1}{N} q_1\delta^{j} 
    \right)
    \\ &+ \notag 
    q_{N+1}\psi^{1,N+1}_{\phi}
    +
    \frac{1}{N} q_1\psi^{N+1}_{\phi}
    \\ &+ \notag 
    q_{N+1}\delta^{1,N+1}
    +
    \frac{1}{N} q_1\delta^{N+1} 
    \\ &= \notag
    \sum_{j=2}^{N+1}
    \left(
    q_j\psi^{1,j}_{\phi}
    +
    \frac{1}{N} q_1\psi^{j}_{\phi}
    + 
    q_j\delta^{1,j}
    +
    \frac{1}{N} q_1\delta^{j} 
    \right)
\end{align}
We will get the following inequality:
\begin{align}
    \sum_{j=2}^{N+1}
    \left(
    \frac{1}{N} q_1 \epsilon^1_{F_{\phi}} 
    + 
    q_j \epsilon^j_{F_{\phi}}
    \right)
    &\leq \notag
    \sum_{j=2}^{N+1}
    \left(
    q_j \epsilon^{1,j}_{CF_{\phi}} 
    +
    \frac{1}{N} q_1 \epsilon^j_{CF_{\phi}} 
    + 
    q_j \psi^{1,j}_{\phi} 
    + 
    \frac{1}{N} q_1 \psi^j_{\phi} 
    + 
    q_j \delta^{1,j}
    + 
    \frac{1}{N} q_1 \delta^j
    \right)
\end{align}
Finally, we sum \(\frac{1}{N} q_1 \epsilon^1_{F_{\phi}}\) and get the causal bound for \(N\) control treatments:
\begin{align}
    q_1\epsilon^1_{F_{\phi}} 
    +
    \sum_{j=2}^{N+1} 
    q_j\epsilon^j_{F_{\phi}}
    &\leq \notag
    \sum_{j=2}^{N+1}
    \left(
    q_j \epsilon^{1,j}_{CF_{\phi}} 
    +
    \frac{1}{N} q_1 \epsilon^j_{CF_{\phi}} 
    + 
    q_j \psi^{1,j}_{\phi} 
    + 
    \frac{1}{N} q_1 \psi^j_{\phi} 
    + 
    q_j \delta^{1,j}
    + 
    \frac{1}{N} q_1 \delta^j
    \right)
\end{align}

Inequality \ref{eq:bound-N-thm-1} is by the inductive hypothesis, Inequality \ref{eq:bound-N-thm-2} is from the proof of \cref{thm:causal-bound-appendix} where the control treatment is \(t=N+1\) instead of \(t=0\).
\end{proof}

\subsection{Reduction from Causal to DRL Framework}
We now formalize a reduction from a causal framework to a DRL framework.

In the DRL framework, we consider the treatment to be the agent's policy, where the target treatment matches the target policy, and the control treatments match the behavior policies.

Within the causal framework, the on-policy loss corresponds to the factual loss when the treatment being optimized matches the treatment applied for generating the observations. 
This is analogous to optimizing the target policy using experiences from an identical behavior policy.
Similarly, the off-policy loss aligns with the counterfactual loss when the treatment being optimized differs from the one used to generate the observations. 
This is equivalent to optimizing the target policy using experiences from different behavior policies.

To formalize the corresponding definition in the DRL framework, consider a replay buffer \( D_1\) containing samples produced by a behavior policy that matches the target policy.
In this context, the on-policy loss is a special case of the expected factual loss.

Similarly, consider a replay buffer \( D_0\) that stores experiences generated by behavior policies that differ from the target policy.
In this context, the off-policy loss is a special case of the expected counterfactual loss.

\begin{figure}
\centering
\centerline{\includegraphics[height=6cm]{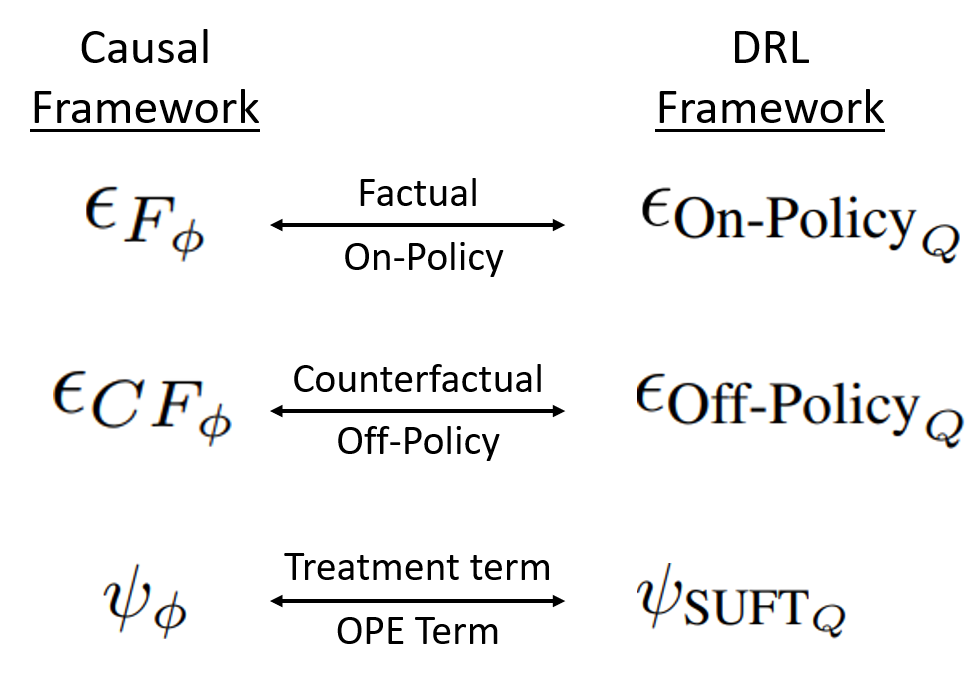}}
\caption{Illustration diagram that demonstrates the reduction from the causal inference framework (Left) to the DRL framework (Right).}
\label{fig:causal_reduction_diagram}
\end{figure}

\paragraph{Q-Network Reduction:}
The definitions for the scenario of an agent using a Q-value network:

\(\mathcal{X}:=(\mathcal{S \times A})\) is the state-action space.
Potential outcome \(y \sim P^{(s, a)}_{Y_t}\) is the accumulated reward over n time steps, starting from state \(s\), taking the initial action \(a\), and following policy \(\pi_t\).
And the hypothesis \(\phi(x; \theta_t) := Q(s, a; \theta_t): (\mathcal{S \times A}) \to \mathcal{Y}\) is the Q-values network, which influences the policy \(\pi_t\).

\cref{def:expected-outcome-loss-appendix} in the DRL framework refers to the temporal difference loss between the accumulated rewards and the Q-value estimation, and it is defined as follows:
\begin{definition}
    \label{def:expected-outcome-loss-dqn-appendix}
    The expected temporal difference loss:
    \begin{align}
        \ell_{Q}(s, a, t)
        &:= \notag
        \mathbb{E}_{y\sim P^{(s, a)}_{Y_t}} 
        [\mathbf{L}\left(y,Q(s, a;\theta_t)\right)].
    \end{align}
\end{definition}

\cref{def:factual-loss-function-appendix,def:factual-loss-function-N-case-appendix} in the DRL framework assigns the temporal difference loss with experiences collected on-policy, with identical target and behavior policies from \( D_1\), and it is defined as follows:
\begin{definition}
    \label{def:factual-loss-function-dqn-appendix}
    The expected on-policy loss:
    \begin{align}
        \epsilon_{\text{On-Policy}_Q} 
        &:= \notag
        \mathbb{E}_{\substack{(s, a) \sim D_1}} 
        [\ell_{Q}(s, a, t_\text{target})].
    \end{align}
\end{definition}

\cref{def:counterfactual-loss-function-appendix,def:counterfactual-loss-function-N-case-appendix} in the DRL framework assigns the temporal difference loss with experiences gathered off-policy, with different target and behavior policies from \( D_0\), and it is defined as follows:
\begin{definition}
    \label{def:counterfactual-loss-function-dqn-appendix}
    The expected off-policy loss:
    \begin{align}
        \epsilon_{\text{Off-Policy}_Q} 
        &:= \notag
        \mathbb{E}_{\substack{(s, a) \sim D_0}} 
        [\ell_{Q}(s, a, t_\text{target})].
    \end{align}
\end{definition}

\cref{def:psi-objective-term-appendix,def:psi-objective-term-N-case-appendix} interpretation in the DRL framework is an OPE term, and it is defined as follows:
\begin{definition}
    \label{def:psi-objective-term-dqn-appendix}
    The SUFT OPE term:
    \begin{align}
        \psi_{\text{SUFT}_Q} 
        &:= \notag
        \mathbb{E}_{(s,a)\sim D_0}
        [\mathbf{L}\left(Q(s, a;\theta_\text{behavior}),
        Q(s, a;\theta_\text{target})\right)].
    \end{align}
\end{definition}
Note that this term evaluates the target policy by measuring the loss between the Q-value estimations of the target policy and the behavior policies, given samples from \( D_0\).

\cref{thm:causal-bound-appendix,thm:causal-bound-N-appendix} interpretation in the DRL framework establishes an upper bound to the on-policy loss, combining the off-policy loss and the SUFT OPE term.
\begin{align}
    \epsilon_{\text{On-Policy}_Q}
    &\leq \notag
    \epsilon_{\text{Off-Policy}_Q} 
    +
    \psi_{\text{SUFT}_Q}
    + \delta
    .
\end{align}

Note that since the constant term, \(\delta\), is independent of \(Q\), it is disregarded during the gradient-based optimization of \(Q\), so the SUFT causal bound objective term is as follows:
\begin{align}
    \epsilon_{\text{Off-Policy}_Q} 
    &+ \notag
    \lambda_\text{TF}\cdot \psi_{\text{SUFT}_{Q}}.
\end{align}

\paragraph{V-Network Reduction:}
Now we present the corresponding definitions for the scenario of an agent using a V network:

\(\mathcal{X}:=\mathcal{S}\) is the state space.
Potential outcome \(y \sim P^{s}_{Y_t}\) is the accumulated reward over n time steps, starting from state \(s\), and following policy \(\pi_t\).
And the hypothesis \(\phi(x; \theta_t) := V(s; \theta_t): \mathcal{S} \to \mathcal{Y}\) is the V network, which influences the policy \(\pi_t\).

\cref{def:expected-outcome-loss-appendix} in the DRL framework refers to the temporal difference loss between the accumulated rewards and the V estimation, and it is defined as follows:
\begin{definition}
    \label{def:expected-outcome-loss-v-appendix}
    The expected temporal difference loss:
    \begin{align}
        \ell_{V}(s, t)
        &:= \notag
        \mathbb{E}_{y\sim P^{s}_{Y_t}} 
        [\mathbf{L}\left(y,V(s;\theta_t)\right)].
    \end{align}
\end{definition}

\cref{def:factual-loss-function-appendix,def:factual-loss-function-N-case-appendix} in the DRL framework assigns the temporal difference loss with experiences collected on-policy, with identical target and behavior policies from \( D_1\), and it is defined as follows:
\begin{definition}
    \label{def:factual-loss-function-v-appendix}
    The expected on-policy loss:
    \begin{align}
        \epsilon_{\text{On-Policy}_V} 
        &:= \notag
        \mathbb{E}_{\substack{s \sim D_1}} 
        [\ell_{V}(s, t_\text{target})].
    \end{align}
\end{definition}

\cref{def:counterfactual-loss-function-appendix,def:counterfactual-loss-function-N-case-appendix} in the DRL framework assigns the temporal difference loss with experiences gathered off-policy, with different target and behavior policies from \( D_0\), and it is defined as follows:
\begin{definition}
    \label{def:counterfactual-loss-function-v-appendix}
    The expected off-policy loss:
    \begin{align}
        \epsilon_{\text{Off-Policy}_V} 
        &:= \notag
        \mathbb{E}_{\substack{s \sim D_0}} 
        [\ell_{V}(s, t_\text{target})].
    \end{align}
\end{definition}

\cref{def:psi-objective-term-appendix,def:psi-objective-term-N-case-appendix} interpretation in the DRL framework is an OPE term, and it is defined as follows:
\begin{definition}
    \label{def:psi-objective-term-v-appendix}
    The SUFT OPE term:
    \begin{align}
        \psi_{\text{SUFT}_V} 
        &:= \notag
        \mathbb{E}_{s\sim D_0}
        [\mathbf{L}\left(V(s;\theta_\text{behavior}),
        V(s;\theta_\text{target})\right)].
    \end{align}
\end{definition}
Note that this term evaluates the target policy by measuring the loss between the V estimations of the target policy and the behavior policies, given samples from \( D_0\).

\cref{thm:causal-bound-appendix,thm:causal-bound-N-appendix} interpretation in the DRL framework establishes an upper bound to the on-policy loss, combining the off-policy loss and the SUFT OPE term.
\begin{align}
    \epsilon_{\text{On-Policy}_V}
    &\leq \notag
    \epsilon_{\text{Off-Policy}_V} 
    +
    \psi_{\text{SUFT}_V}
    + \delta
    .
\end{align}
Note that since the constant term, \(\delta\), is independent of \(V\), it is disregarded during the gradient-based optimization of \(V\). Thus, the SUFT causal bound objective term is as follows:
\begin{align}
    \epsilon_{\text{Off-Policy}_V} 
    &+ \notag
    \lambda_\text{TF}\cdot \psi_{\text{SUFT}_{V}}.
\end{align}

\paragraph{Causal and DRL Framework Alignment:}
An important alignment needs to be made to address a difference between the static observational data assumed by the causal framework and the dynamic experience replay buffer in the online DRL framework.
In the causal framework, all observations are gathered once, so the data is fixed and irrevocably labeled either to be from the target treatment or from the control treatment in the binary treatment case.
On the other hand, in the online DRL framework, the observational data changes over time, as we continuously populate the experience replay buffer and update the target policy.
On each policy optimization step, all of the samples stored in the buffer that were labeled as the target policy are turned into samples that are labeled as a control policy.
This phenomenon of changing the observational data labels when updating the policy in the DRL framework is a gap between the causal and DRL frameworks.
In the on-policy case, the experience replay buffer contains only on-policy observations, meaning that \(q_1=1\).
When updating the target policy, this data becomes off-policy, and the data that was previously labeled as target policy turns into control policy, shifting the samples' labels together with the treatment probability \(q_{0_\text{new}}=q_{1_\text{old}}=1\).

The reduction we have made between the causal and the DRL framework above aligns the on-policy loss and off-policy loss terms with the corresponding terms in the causal framework. 
The on-policy loss is a special case of the factual loss where the observation data is filled with target treatment samples, corresponding to \(\epsilon^1_F\) where \(q_1=1\).
The off-policy loss is a special case of the counterfactual loss where the observation data is filled with control treatment samples, corresponding to \(\epsilon^1_{CF}\) where the labels and the treatment probability are shifted and \(q_{0_\text{new}}=q_{1_\text{old}}=1\).

\newpage
\section{SUFT Pseudocode}
\label{appendix-pseudocode}
\subsection{SUFT DQN Pseudocode}

\begin{algorithm}
\caption{SUFT Deep Q-learning}
\begin{algorithmic}
    \State Initialize replay memory \(D_0\) to capacity \(N\)
    \State Initialize action-value function \(Q\) with random weights \(\theta\)
    \For{episode \(= 1, M\)}
        \State Initialize start state \(s_1\)
        \For{\(t = 1, T\)}
            \State With probability \(\rho\) select a random action \(a_t\)
            \State otherwise select \(a_t = \arg\max_a Q \left(s_t, a;\theta_\text{behavior}\right)\)
            \State Execute action \(a_t\) in the environment and observe reward \(r_t\) and state \(s_{t+1}\)
            \State Store transition \(\left(s_t, a_t, r_t, s_{t+1}, Q(s_t, a_t;\theta_\text{behavior})\right)\) in \(D_0\)
            \State Sample random mini-batch of transitions \(\left(s_j, a_j, r_j, s_{j+1}, Q(s_j, a_j;\theta_\text{behavior})\right)\) from \(D_0\)
            \State Set 
            \(
                y_j =
                \begin{cases}
                    r_j & \text{for terminal } s_{j+1} \\
                    r_j + \gamma \max_{a'} Q(s_{j+1}, a'; \theta_\text{target}) & \text{for non-terminal } s_{j+1}
                \end{cases}
            \)
            \State Perform a gradient descent step with respect to the network parameters \(\theta_\text{target}\) on: 
            \[
                \hat{\mathbb{E}}_j 
                \left[
                \left(y_j - Q(s_j, a_j; \theta_\text{target})\right)^2 
                + \lambda_\text{TF} 
                \cdot \left(Q(s_j, a_j;\theta_\text{behavior}) -Q(s_j, a_j;\theta_\text{target})\right)^2
                \right]
            \] 
        \EndFor
    \EndFor
\end{algorithmic}
\end{algorithm}
Our method extends the standard DQN algorithm \cite{mnih2013playing} in two main ways:
The first is by storing the old Q values \(Q(s, a;\theta_\text{behavior})\) in the replay buffer.
The second is by incorporating the additional SUFT OPE term into the objective function \(\lambda_\text{TF} \cdot \left(Q(s_j, a_j;\theta_\text{behavior}) -Q(s_j, a_j;\theta_\text{target})\right)^2\), which represents the estimated treatment effect.
\(\theta_\text{behavior}\) refers to the network weights used for action selection, the behavior policy, and \(\theta_\text{target}\) represents the current network weights, which are optimized in the gradient descent step, the target policy.

\newpage
\subsection{SUFT PPO Pseudocode}
\begin{algorithm}
\caption{SUFT Proximal Policy Optimization (PPO)}
\begin{algorithmic}
    \State Initialize a rollout buffer \(D_1\) with horizon \(T\)
    \State Initialize policy function (actor) and value function (critic) with random weights \(\theta\)
    \For{iteration \(= 1, 2,\ldots\)}
        \For{\(t=1, T\)}
            \State Run policy \(\pi_{\theta_{\text{behavior}}}(s_t)\)
            \State Store transition \(\left(s_t, a_t, r_t, \pi_{\theta_{\text{behavior}}}(a_t \mid s_t), V(s_t;\theta_\text{behavior})\right)\) in \(D_1\)
        \EndFor
        \State Compute advantage estimates \(\hat{A}_1,\ldots,\hat{A}_t\), and rewards to go \(\hat{R}_1,\ldots,\hat{R}_t\) 
        \State based on the stored values \(V(s_1;\theta_\text{behavior}),\ldots, V(s_t;\theta_\text{behavior})\)
        \For{$k = 1, K$ epochs}
            \State Sample random mini-batch of transitions \(\left(s_j, a_j, r_j, \pi_{\theta_{\text{behavior}}}(a_j \mid s_j), V(s_j;\theta_\text{behavior})\right)\)
            \State from the rollout buffer \(D_1\)
            \State Compute probability ratio:
            \[
                r_j(\theta) = 
                \frac{\pi_{\theta_\text{target}}(a_j \mid s_j)}{\pi_{\theta_{\text{behavior}}}(a_j \mid s_j)}
            \]
            \State Optimize actor loss:
            \[
                \hat{\mathbb{E}}_j 
                \left[
                \min
                \left(
                r_j(\theta)\hat{A}_j,\;
                \text{clip}\left(r_j(\theta), 1 - \epsilon, 1 + \epsilon\right)\hat{A}_j
                \right)
                \right]
            \]
            \State Optimize critic loss:
            \[
                \hat{\mathbb{E}}_j 
                \left[
                (
                V(s_j;\theta_\text{target}) - \hat{R}_j
                )^2
                 + \lambda_\text{TF} 
                 \cdot 
                 \left(
                 V(s_j;\theta_\text{behavior}) - V(s_j;\theta_\text{target})
                 \right)^2
                \right]
            \]
        \EndFor
    \EndFor
\end{algorithmic}
\end{algorithm}
Our method extends the standard PPO algorithm \cite{schulman2017proximal} by incorporating the additional SUFT OPE term into the PPO critic loss.
Notably, PPO performs multiple epochs of optimization over the same rollout buffer \(D_1\), leading to a divergence between the behavior and target policies. 
This divergence is the reason PPO benefits from our method, despite being an on-policy algorithm.
\(\theta_\text{behavior}\) refers to the network weights used to interact with the environment also referred to in \cite{schulman2017proximal} as \(\theta_\text{old}\), the behavior policy, and \(\theta_\text{target}\) represents the current network weights, which are optimized, also referred to in \cite{schulman2017proximal} as \(\theta\), the target policy. 
During the first epoch, before optimizing the network, the behavior and target policies are identical, so the SUFT OPE term equals zero. 
After the first optimization step, the target policy diverges from the behavior policy, and the SUFT OPE term now quantifies the estimated treatment effect between the target and behavior policies.
In addition, the \(V(s_t;\theta_\text{behavior})\) values are already calculated during the calculation of the advantage estimates \(\hat{A}_t\), and rewards to go \(\hat{R}_t\). 
Thus, our method effectively reuses these stored network outputs for computing the SUFT OPE term, similar to the approach used in our SUFT DQN implementation.

\newpage
\section{Full Experimental Results}
\label{appendix-experiments}
In this section, we present detailed and comprehensive experimental results for both the Atari and the MuJoCo domains, as well as the formulas and criteria used to evaluate the results.

\subsection{Negligible Computational Overhead}
We quantify the computational and space overhead introduced by computing the SUFT OPE term and storing old Q-values in the experience replay buffer. 
We run our experiments on a single NVIDIA L4 GPU, and for the Vanilla DQN agent, the additional computation takes approximately 23.77 seconds, compared to a total training time of 2,980 seconds. This corresponds to a 0.8\% computational overhead. In terms of space, we append a single 64-bit scalar to each stored experience. For a 4K experience replay buffer, this results in an additional 0.0305 MB, compared to 972.85 MB without it. This is equivalent to a 0.003\% space overhead.

\subsection{Tuning the \(\lambda_{TF}\) Coefficient}
In order to evaluate the impact of the \(\lambda_{TF}\) coefficient on improving the loss accuracy, we conducted experiments using various \(\lambda_{TF}\) values.
Adding the \(\lambda_{TF}\) demonstrates a consistent improvement across a range of \(\lambda_{TF}\) values.
This indicates that precise hyperparameter tuning is unnecessary, and as long as the SUFT OPE term is added with a coefficient within a valid range of \(\lambda_{TF}\), the loss optimization improves.
The \(\lambda_{TF}\) value can be easily tuned through random search hyperparameter optimization within a valid range. 
Specifically, we encounter that a sufficient range to search from is between \(\lambda_{TF}=0.5\) and \(\lambda_{TF}=1.5\). 
The \(\lambda_{TF}\) values that yield the best performance are \(\lambda_{TF}=1\) for all the DQN agents, \(\lambda_{TF}=2.5\) for the PPO agent using the Using \({L_1}\) Loss and \(\lambda_{TF}=5\) for the PPO agent using the Using \({L_2}\) Loss in the Atari domain.
In the MuJoCo domain, SAC performs best at \(\lambda_{TF}=0.6\) and PPO at \(\lambda_{TF}=1.8\).

\subsection{Evaluation Formulas}
To evaluate our method, we train every agent-environment configuration for 10 different runs and report the median result for each setup.
Since this is an even-numbered list, there are two central values, and we use the higher of the two as the median.
The valid environments used for calculating the improvement percentage and mean reward ratio percentage formulas are those where the numerator and denominator are greater than one. This restriction allows us to exclude cases where the reward is only marginally higher than the random reward, which could otherwise yield arbitrarily large ratios. By doing so, we ensure that the metric remains stable and effectively reflects meaningful improvements.

\paragraph{Improvement Percentage Formula:}
The improvement percentage for comparing our approach to an identical agent without the additional term is calculated with the following formula:
\begin{align}
    \frac{
    \text{Higher reward approach} - \text{Lower reward approach }}{\text{Lower reward approach } - \text{Random reward}
    }
    \cdot 100\% \notag
\end{align}
For better visualization of the results, we display them in log scale with the following formula:
\begin{align}
    \log_{10}\left[
    \text{Improvement Percentage Formula} + 1\right] \notag
\end{align}
We add 1 to the improvement percentage formula to avoid the log of 0 and fractions, which result in a negative log. 
This addition has no impact on the results.

\paragraph{Mean Reward Ratio Percentage Formula:}
The mean reward ratio percentage for comparing our approach to an identical agent without the additional term is calculated with the following formula:
\begin{align}
\frac{1}{\mid \text{Valid Envs} \mid}\sum_{\forall \text{env}}\frac{\text{SUFT reward} \ \ \ \ \  -\text{Random reward}}{\text{Baseline reward } - \text{Random reward}}\cdot 100\%. \notag
\end{align}
Note that for the Atari domain, the human normalized results are calculated using the above formula, where the baseline reward is replaced with the human reward.

\subsection{\cref{ass:loss-inequality} Empirical Analysis with \(L_2\)}
\label{sec:empirical_l2_analysis}
DRL agents are often trained using \(L_2\) loss, for which \cref{ass:loss-inequality} does not hold.
To provide intuition for the observed \({L_2}\) performance, we empirically examine the validity of the \(L_2\) loss for \cref{ass:loss-inequality} using synthetic and domain-specific data.
For the synthetic data, we generate one billion independent samples from the standard normal distribution, \(x, y, x', y' \sim \mathcal{N}(0, 1)\), and evaluate whether:
\begin{align}
    \left ( x - y \right ) ^ 2 - \left ( x' - y' \right ) ^ 2 
    &\leq \label{eq:synt-l2-analysis}
    \left ( x - x' \right ) ^ 2 + \left ( y - y' \right ) ^ 2
\end{align}
This experiment corresponds directly to the inequality form of \cref{ass:loss-inequality} when applied to the \(L_2\) loss.
We find that inequality \ref{eq:synt-l2-analysis} holds in approximately 82\% of random samples, demonstrating that while \cref{ass:loss-inequality} is not guaranteed theoretically, it is satisfied in the vast majority of cases. 
To further validate this observation in a domain-specific dataset, we conduct an experiment in the Atari 57 benchmark using a DQN agent trained for 400K steps with a batch size of 32.
Since our method operates in an off-policy setting, the left side of the causal bound in \cref{thm:causal-bound} is not computable from the observed data, as we don't observe the on-policy experiences (this is the reason we optimize on the causal bound in our method).
For this reason, we evaluate only inequality \ref{eq:bound-thm-6} used in the proof of \cref{thm:causal-bound-appendix}, considering the off-policy terms on both sides (with \(q_0=1\)):
\begin{align}
        (
    \mathbb{E}_{\substack{x \sim P^0_X}} 
    [
    \mathbb{E}_{\substack{y_0 \sim P^x_{Y_0}}}
    [
    \mathbb{E}_{\substack{y_1 \sim P^x_{Y_1}}}
    [
    \mathbf{L}(y_0, \phi(x;\theta_0))
    - 
    \mathbf{L}(y_1, \phi(x;\theta_1))
    ]
    ]
    ]
    )
	&\leq \label{eq:atari-l2-analysis}
    \\ \notag
   (
    \mathbb{E}_{\substack{x \sim P^0_X}} 
    [
    \mathbb{E}_{\substack{y_0 \sim P^x_{Y_0}}}
    [
    \mathbb{E}_{\substack{y_1 \sim P^x_{Y_1}}}
    [
    \mathbf{L}(y_0, y_1)
    +
    \mathbf{L}(\phi(x;\theta_0), \phi(x;\theta_1))
    ]
    ]
    ]
    )
\end{align}
Because the true values of \(y_0\) and \(y_1\) are not directly observable, since they are the true values from the given state until the end of the episode, we approximate them using the standard Bellman equation for Q-values.
Under this setup, we find that inequality \ref{eq:atari-l2-analysis} holds in approximately 79\% of cases across the 57 Atari games.
These empirical findings indicate that although \cref{ass:loss-inequality} is not theoretically guaranteed for \(L_2\), it holds in the vast majority of the optimization steps in practice, which may explain the strong empirical performance of our method using the \(L_2\) loss.

\subsection{Empirical Evaluation of the \(\delta\) Term Magnitude}
In \cref{thm:causal-bound} the \(\delta\) term is ignored during optimization since it is independent of \(Q\). 
To assess whether this omission introduces significant looseness in the causal bound, we conduct an empirical analysis on the Atari 57 benchmark using a DQN agent trained for 400K steps.
Specifically, we plotted the magnitudes of the standard DQN loss, the \(\delta\) term, and the SUFT OPE term throughout training.
As discussed in \cref{sec:empirical_l2_analysis}, the \(\delta\) term involves the true values of \(y_0\) and \(y_1\), which are not directly observable. Thus, we approximate them using the standard Bellman equation for Q-values.
The results, shown in \cref{fig:Atari2600-dqn-delta-magnitude}, indicate that across all of the environments, the \(\delta\) term remains consistently an order of magnitude lower than the standard DQN loss, and on the same order of magnitude as the SUFT OPE term.
These findings support the decision to ignore the \(\delta\) term during optimization, as it does not meaningfully loosen the causal bound.

\newpage
\begin{figure}[H]
\centering
\centerline{\includegraphics[width=\textwidth]{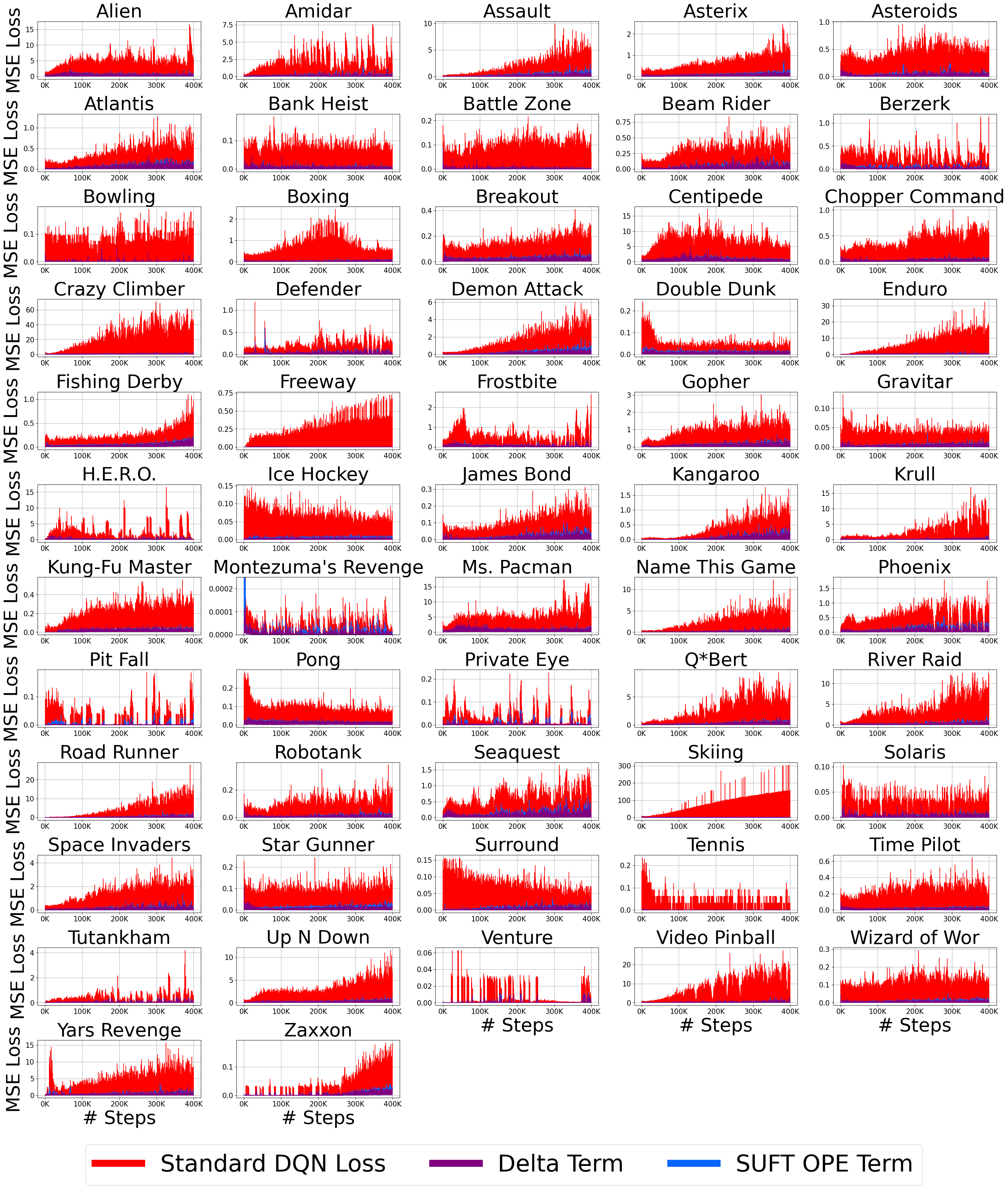}}
\caption{Magnitude of the \(\delta\) term, SUFT OPE term, and the standard DQN loss across the Atari 57 benchmark.}
\label{fig:Atari2600-dqn-delta-magnitude}
\end{figure}

\newpage
\subsection{Atari 2600 57 Games Benchmark}
\begin{table}[H]
  \caption{Atari 2600 random and average human rewards. The scores are sourced from \cite{wang2016dueling}.}
  \centering
  \begin{small}
  \begin{tabular}{lll}
\toprule
Game & Random & Human  \\
\midrule
Alien & 227.8 & 6371.3  \\
Amidar & 5.8 & 1540.4  \\
Assault & 222.4 & 628.9  \\
Asterix & 164.5 & 7536  \\
Asteroids & 871.3 & 36517.3  \\
Atlantis & 13463 & 26575  \\
Bank Heist & 14.2 & 644.5  \\
Battle Zone & 3560 & 33030  \\
Beam Rider & 254.6 & 14961  \\
Berzerk & 196.1 & 2237.5  \\
Bowling & 35.2 & 146.5  \\
Boxing & -1.5 & 9.6  \\
Breakout & 1.6 & 27.9  \\
Centipede & 2090.9 & 10321.9  \\
Chopper Command & 811 & 7387.8  \\
Crazy Climber & 10780.5 & 32667  \\
Defender & 2874.5 & 14296  \\
Demon Attack & 208.3 & 1971  \\
Double Dunk & -18.6 & -16.4  \\
Enduro & 0 & 740.2  \\
Fishing Derby & -77.1 & -38.7  \\
Freeway & 0.1 & 25.6  \\
Frostbite & 65.2 & 4202.8  \\
Gopher & 257.6 & 2311  \\
Gravitar & 173 & 3116  \\
H.E.R.O. & 1580.3 & 25839.4  \\
Ice Hockey & -11.2 & 0.5  \\
James Bond & 33.5 & 302.8  \\
Kangaroo & 100 & 2739  \\
Krull & 1598 & 2109.1  \\
Kung-Fu Master & 258.5 & 20786.8  \\
Montezuma's Revenge & 25 & 4182  \\
Ms. Pacman & 307.3 & 6951.6  \\
Name This Game & 1747.8 & 6796  \\
Phoenix & 761.4 & 6686.2  \\
Pit Fall & -348.8 & 5998.9  \\
Pong & -20.7 & 14.6  \\
Private Eye & 662.8 & 64169.1  \\
Q*Bert & 163.9 & 12085  \\
River Raid & 1338.5 & 14382.2  \\
Road Runner & 200 & 6878  \\
Robotank & 2.2 & 8.9  \\
Seaquest & 215.5 & 40425.8  \\
Skiing & -15287.4 & -4336.9  \\
Solaris & 2047.2 & 11032.6  \\
Space Invaders & 182.6 & 1464.9  \\
Star Gunner & 697 & 9528  \\
Surround & -10 & 5.4  \\
Tennis & -23.8 & -8.3  \\
Time Pilot & 3568 & 5229.2  \\
Tutankham & 12.7 & 138.3  \\
Up N Down & 707.2 & 9896.1  \\
Venture & 18 & 1039  \\
Video Pinball & 20452 & 15641.1  \\
Wizard of Wor & 804 & 4556  \\
Yars Revenge & 3092.9 & 47135.2  \\
Zaxxon & 32.5 & 8443  \\
\bottomrule
\end{tabular}
\end{small}
\end{table}

\newpage
\subsection*{Atari PPO Results (Using \({L_1}\) Loss):}
\begin{table}[H]
\caption{Training rewards comparison between the PPO agent using the additional SUFT OPE term and the baseline agent without it across 57 Atari games. 
SUFT surpasses the baseline agent in 40 games, with statistically significant gains in 31 games.
Both agents are using \({L_1}\) loss. 
}
\centering
\begin{small}
\begin{tabular}{llll}
\toprule
Game & Baseline PPO & SUFT PPO & p-value  \\
\midrule
Alien & \textbf{860.0} & 780.0 & N/A  \\
Amidar & \textbf{184.0} & 133.0 & N/A  \\
Assault & 631.0 & \textbf{692.0} & 9.15e-02  \\
Asterix & 346.0 & \textbf{568.0} & 1.17e-08  \\
Asteroids & 483.0 & \textbf{644.0} & 3.19e-07  \\
Atlantis & 36100.0 & \textbf{51300.0} & 4.50e-06  \\
Bank Heist & 37.5 & \textbf{51.5} & 6.14e-02  \\
Battle Zone & 8380.0 & \textbf{8880.0} & 6.07e-01  \\
Beam Rider & 450.0 & \textbf{526.0} & 2.34e-05  \\
Berzerk & 661.0 & \textbf{676.0} & 4.70e-01  \\
Bowling & \textbf{41.7} & 33.8 & N/A  \\
Boxing & -10.0 & \textbf{15.1} & 1.35e-13  \\
Breakout & \textbf{14.4} & 10.4 & N/A  \\
Centipede & 2860.0 & \textbf{3390.0} & 2.23e-06  \\
Chopper Command & 1350.0 & \textbf{1990.0} & 2.84e-05  \\
Crazy Climber & 26700.0 & \textbf{37800.0} & 4.67e-11  \\
Defender & 2850.0 & \textbf{4280.0} & 8.90e-08  \\
Demon Attack & 295.0 & \textbf{306.0} & 5.37e-01  \\
Double Dunk & -21.7 & \textbf{-18.1} & 2.19e-03  \\
Enduro & 13.1 & \textbf{67.8} & 3.18e-06  \\
Fishing Derby & -82.9 & \textbf{-79.5} & 2.95e-01  \\
Freeway & 20.2 & \textbf{24.6} & 6.60e-06  \\
Frostbite & \textbf{1610.0} & 261.0 & N/A  \\
Gopher & 381.0 & \textbf{494.0} & 2.38e-05  \\
Gravitar & \textbf{318.0} & 211.0 & N/A  \\
H.E.R.O. & \textbf{9810.0} & 8840.0 & N/A  \\
Ice Hockey & -9.05 & \textbf{-5.86} & 2.65e-05  \\
James Bond & 324.0 & \textbf{466.0} & 3.54e-08  \\
Kangaroo & \textbf{3000.0} & 1080.0 & N/A  \\
Krull & \textbf{4770.0} & 4360.0 & N/A  \\
Kung-Fu Master & 1990.0 & \textbf{2130.0} & 1.05e-01  \\
Montezuma's Revenge & \textbf{0.0} & \textbf{0.0} & N/A  \\
Ms. Pacman & 1210.0 & \textbf{1770.0} & 1.55e-06  \\
Name This Game & 2480.0 & \textbf{4520.0} & 2.10e-13  \\
Phoenix & 734.0 & \textbf{1700.0} & 2.19e-11  \\
Pit Fall & \textbf{-1.28} & -4.73 & N/A  \\
Pong & -16.5 & \textbf{-12.0} & 4.95e-03  \\
Private Eye & \textbf{84.3} & 25.7 & N/A  \\
Q*Bert & \textbf{2080.0} & 689.0 & N/A  \\
River Raid & 2560.0 & \textbf{2590.0} & 5.53e-01  \\
Road Runner & 2330.0 & \textbf{8600.0} & 4.66e-03  \\
Robotank & \textbf{4.88} & 4.1 & N/A  \\
Seaquest & 576.0 & \textbf{689.0} & 4.14e-03  \\
Skiing & -30000.0 & \textbf{-18000.0} & 2.19e-04  \\
Solaris & 1850.0 & \textbf{1880.0} & 6.68e-01  \\
Space Invaders & 292.0 & \textbf{406.0} & 7.12e-06  \\
Star Gunner & 1030.0 & \textbf{1090.0} & 7.81e-04  \\
Surround & -9.7 & \textbf{-9.04} & 1.59e-02  \\
Tennis & \textbf{-13.3} & -15.6 & N/A  \\
Time Pilot & 2790.0 & \textbf{3610.0} & 4.88e-05  \\
Tutankham & 82.1 & \textbf{102.0} & 2.85e-04  \\
Up N Down & 3090.0 & \textbf{19100.0} & 7.83e-06  \\
Venture & \textbf{16.0} & 8.0 & N/A  \\
Video Pinball & 9030.0 & \textbf{14600.0} & 9.20e-06  \\
Wizard of Wor & 508.0 & \textbf{1060.0} & 2.09e-09  \\
Yars Revenge & 5230.0 & \textbf{10700.0} & 2.87e-12  \\
Zaxxon & \textbf{1040.0} & 271.0 & N/A  \\
\midrule
Mean (\%) & 100.0 & \textbf{164.7} & N/A  \\
\bottomrule
\end{tabular}
\end{small}
\end{table}

\newpage
\begin{figure}[H]
\centering
\centerline{\includegraphics[width=\textwidth]{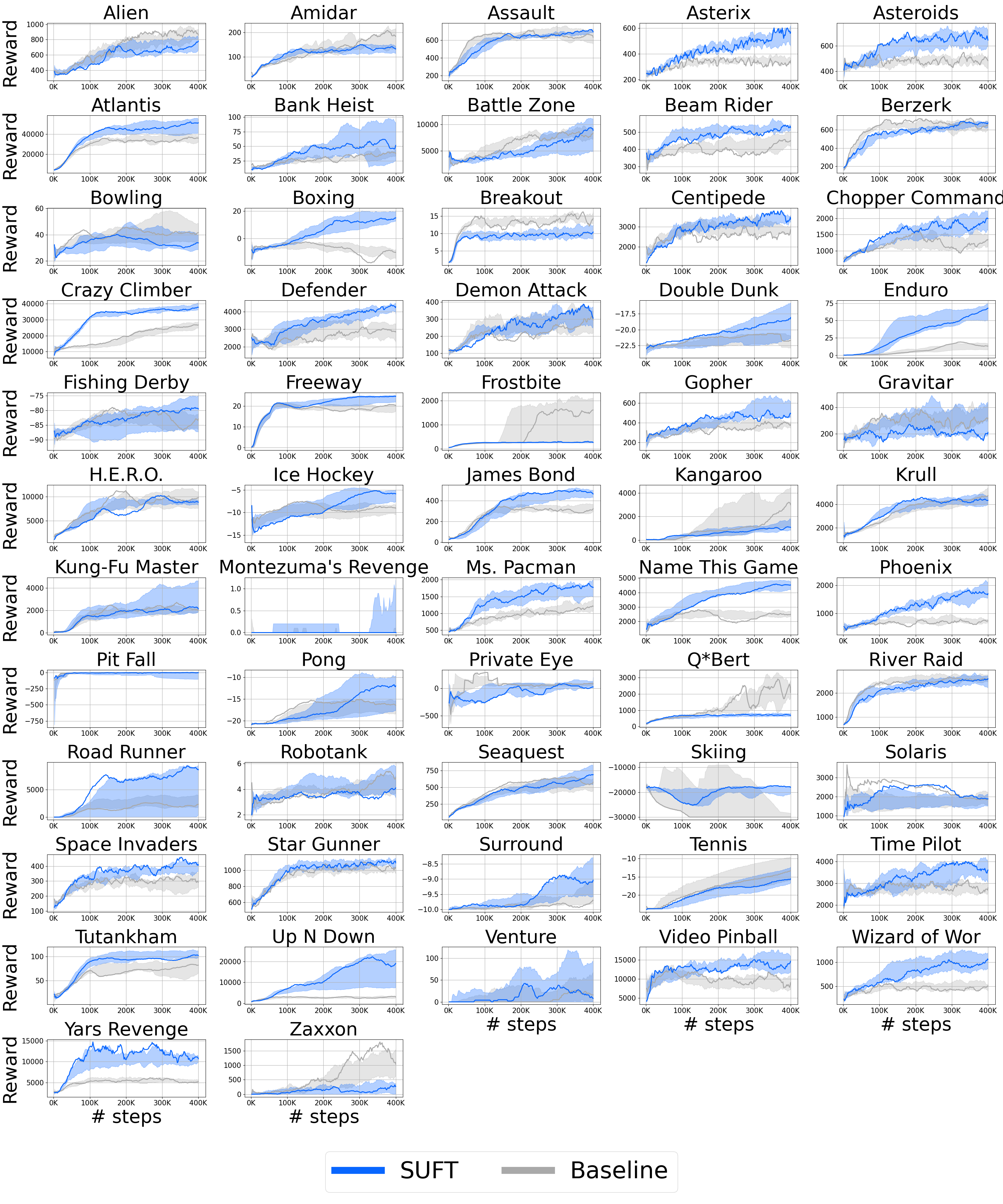}}
\caption{Learning curves comparison between the PPO agent using the additional SUFT OPE term and the baseline agent without it across 57 Atari games. 
The shaded area is the 10\% and 90\% percentiles with linear interpolation from the 10 different seeds' runs.
Both agents are using \({L_1}\) loss.}
\end{figure}

\newpage
\begin{figure}[H]
\centering
\centerline{\includegraphics[width=\textwidth]{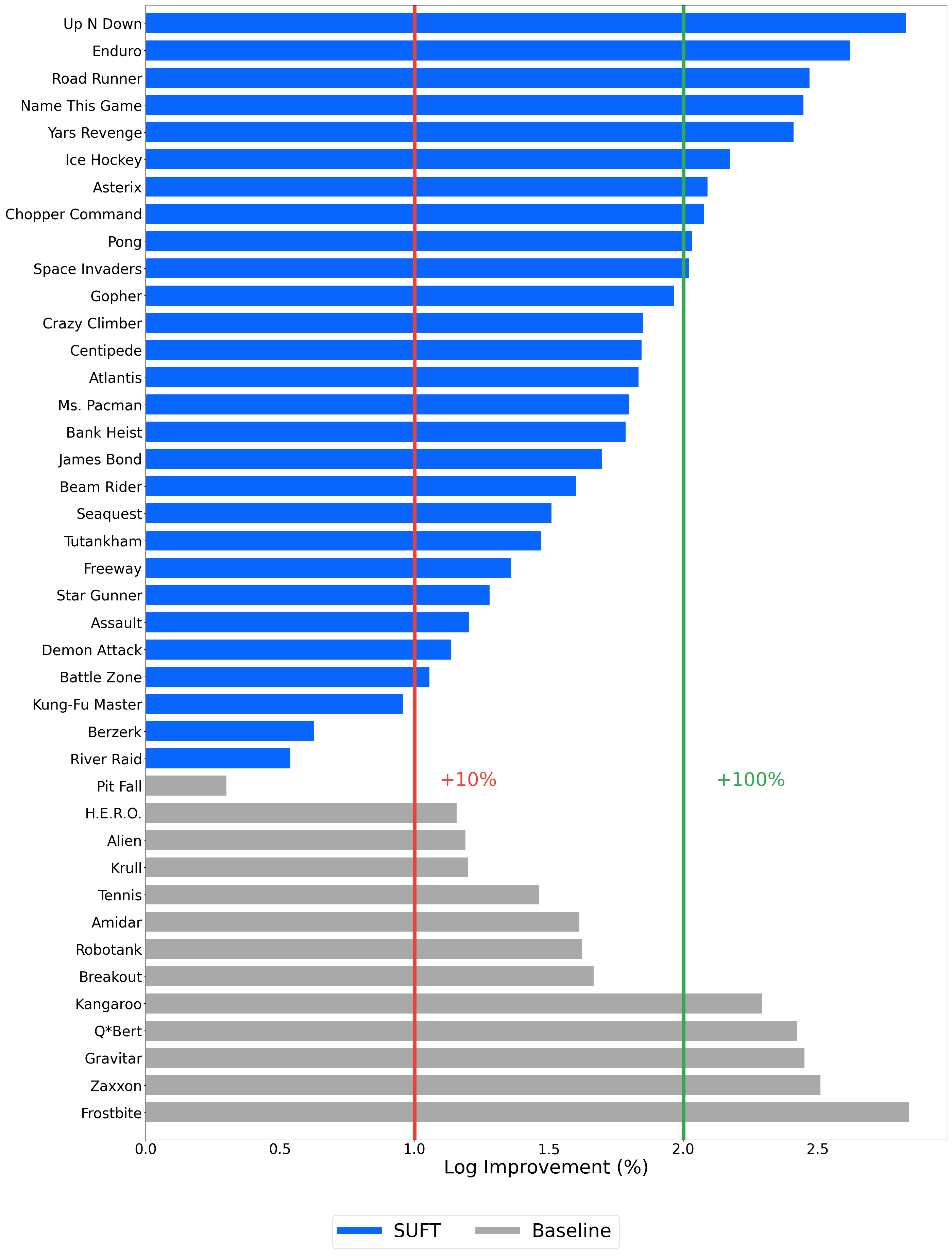}}
\caption{Log-scaled reward improvements comparison between the PPO agent using the additional SUFT OPE term and the baseline agent without it across 41 valid Atari games.
SUFT outperforms the baseline agent in 28 out of the 41 valid games, achieving a reward improvement of over 100\% in 24.4\% of them and over 10\% in 61\% of them.
The red line indicates a 10\% improvement, and the green line represents a 100\% improvement.
Both agents are using \({L_1}\) loss.}
\end{figure}

\newpage
\subsection*{Atari PPO Results (Using \({L_2}\) Loss):}
\begin{table}[H]
\caption{Training rewards comparison between the PPO agent using the additional SUFT OPE term and the baseline agent without it across 57 Atari games. 
SUFT surpasses the baseline agent in 50 games, with statistically significant gains in 40 games.
Both agents are using \({L_2}\) loss. 
}
\centering
\begin{small}
\begin{tabular}{llll}
\toprule
Game & Baseline PPO & SUFT PPO & p-value  \\
\midrule
Alien & 860.0 & \textbf{861.0} & 3.48e-01  \\
Amidar & \textbf{184.0} & \textbf{184.0} & N/A  \\
Assault & 631.0 & \textbf{1150.0} & 3.24e-07  \\
Asterix & 346.0 & \textbf{651.0} & 7.43e-08  \\
Asteroids & 483.0 & \textbf{586.0} & 9.18e-05  \\
Atlantis & 36100.0 & \textbf{59100.0} & 2.57e-09  \\
Bank Heist & 37.5 & \textbf{38.8} & 3.47e-01  \\
Battle Zone & 8380.0 & \textbf{11000.0} & 5.02e-05  \\
Beam Rider & 450.0 & \textbf{497.0} & 7.57e-03  \\
Berzerk & 661.0 & \textbf{782.0} & 3.34e-05  \\
Bowling & 41.7 & \textbf{43.2} & 3.73e-01  \\
Boxing & -10.0 & \textbf{1.75} & 2.10e-09  \\
Breakout & 14.4 & \textbf{17.3} & 1.90e-04  \\
Centipede & 2860.0 & \textbf{4120.0} & 3.28e-09  \\
Chopper Command & 1350.0 & \textbf{2940.0} & 4.37e-07  \\
Crazy Climber & 26700.0 & \textbf{40200.0} & 1.99e-10  \\
Defender & 2850.0 & \textbf{4070.0} & 3.08e-07  \\
Demon Attack & 295.0 & \textbf{634.0} & 2.31e-06  \\
Double Dunk & -21.7 & \textbf{-21.5} & 8.58e-01  \\
Enduro & 13.1 & \textbf{70.1} & 1.30e-10  \\
Fishing Derby & -82.9 & \textbf{-73.9} & 4.32e-06  \\
Freeway & 20.2 & \textbf{21.2} & 1.74e-02  \\
Frostbite & 1610.0 & \textbf{2750.0} & 5.59e-02  \\
Gopher & 381.0 & \textbf{547.0} & 1.71e-06  \\
Gravitar & 318.0 & \textbf{370.0} & 3.63e-01  \\
H.E.R.O. & 9810.0 & \textbf{11700.0} & 1.32e-01  \\
Ice Hockey & -9.05 & \textbf{-7.45} & 4.09e-04  \\
James Bond & 324.0 & \textbf{380.0} & 6.50e-05  \\
Kangaroo & 3000.0 & \textbf{5830.0} & 1.04e-01  \\
Krull & 4770.0 & \textbf{8980.0} & 1.36e-03  \\
Kung-Fu Master & 1990.0 & \textbf{2610.0} & 3.76e-03  \\
Montezuma's Revenge & \textbf{0.0} & \textbf{0.0} & N/A  \\
Ms. Pacman & 1210.0 & \textbf{1710.0} & 3.61e-03  \\
Name This Game & 2480.0 & \textbf{4410.0} & 7.45e-16  \\
Phoenix & 734.0 & \textbf{1720.0} & 1.05e-13  \\
Pit Fall & -1.28 & \textbf{-1.23} & 2.23e-01  \\
Pong & -16.5 & \textbf{-15.0} & 6.31e-03  \\
Private Eye & \textbf{84.3} & 76.0 & N/A  \\
Q*Bert & 2080.0 & \textbf{5100.0} & 1.72e-03  \\
River Raid & 2560.0 & \textbf{3540.0} & 2.84e-11  \\
Road Runner & 2330.0 & \textbf{6320.0} & 1.51e-06  \\
Robotank & \textbf{4.88} & 4.29 & N/A  \\
Seaquest & 576.0 & \textbf{1000.0} & 1.19e-07  \\
Skiing & \textbf{-30000.0} & \textbf{-30000.0} & N/A  \\
Solaris & 1850.0 & \textbf{2200.0} & 3.81e-01  \\
Space Invaders & 292.0 & \textbf{387.0} & 1.31e-04  \\
Star Gunner & 1030.0 & \textbf{1130.0} & 1.60e-08  \\
Surround & -9.7 & \textbf{-8.65} & 7.33e-04  \\
Tennis & \textbf{-13.3} & -13.6 & N/A  \\
Time Pilot & 2790.0 & \textbf{3190.0} & 1.24e-02  \\
Tutankham & 82.1 & \textbf{129.0} & 1.79e-02  \\
Up N Down & 3090.0 & \textbf{6020.0} & 3.83e-03  \\
Venture & \textbf{16.0} & 5.0 & N/A  \\
Video Pinball & 9030.0 & \textbf{11500.0} & 3.22e-03  \\
Wizard of Wor & 508.0 & \textbf{734.0} & 6.22e-04  \\
Yars Revenge & 5230.0 & \textbf{10100.0} & 1.36e-04  \\
Zaxxon & 1040.0 & \textbf{2310.0} & 7.88e-03  \\
\midrule
Mean (\%) & 100.0 & \textbf{197.34} & N/A  \\
\bottomrule
\end{tabular}
\end{small}
\end{table}

\newpage
\begin{figure}[H]
\centering
\centerline{\includegraphics[width=\textwidth]{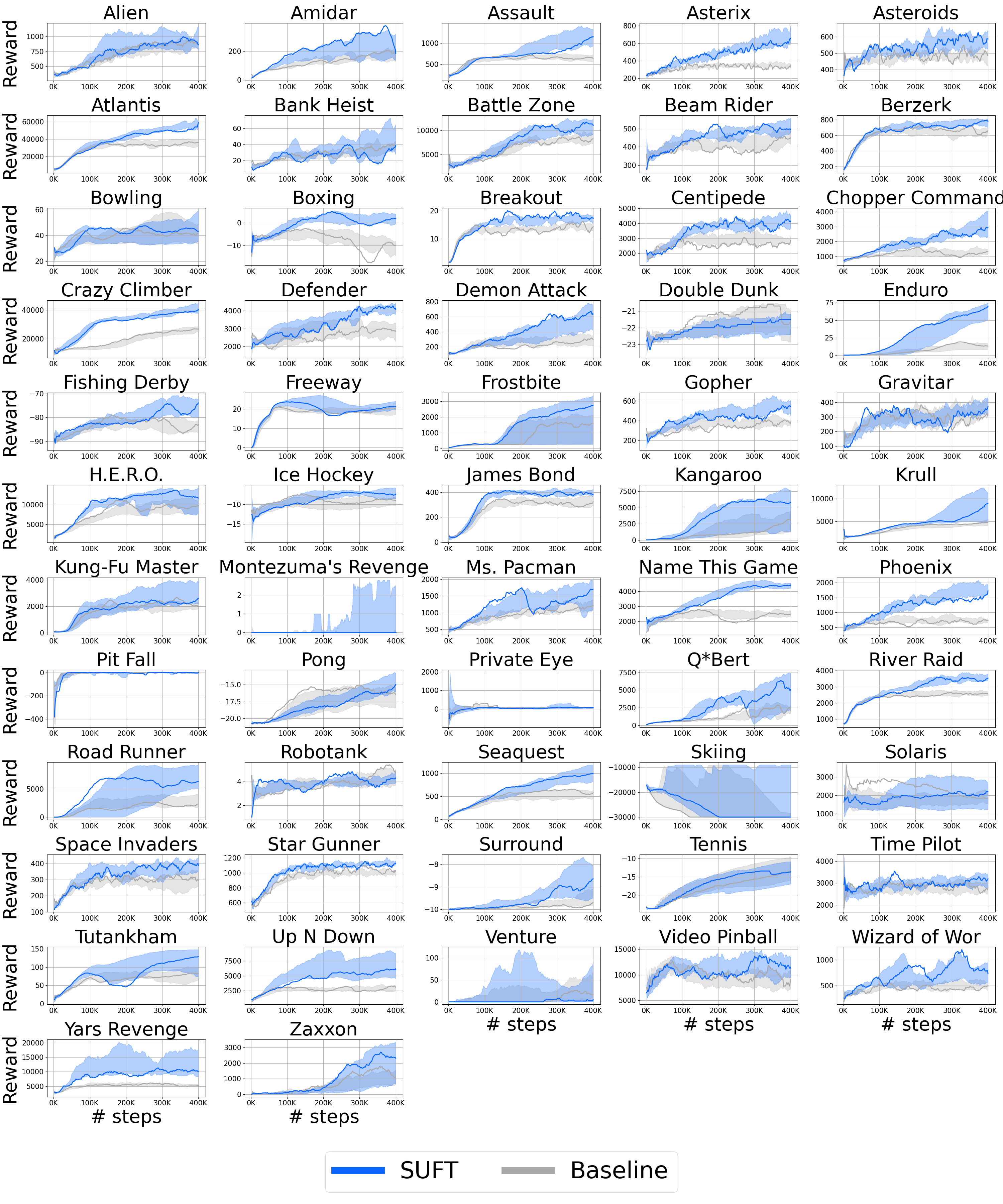}}
\caption{Learning curves comparison between the PPO agent using the additional SUFT OPE term and the baseline agent without it across 57 Atari games.
The shaded area is the 10\% and 90\% percentiles with linear interpolation from the 10 different seeds' runs.
Both agents are using \({L_2}\) loss.}
\end{figure}

\newpage
\begin{figure}[H]
\centering
\centerline{\includegraphics[width=\textwidth]{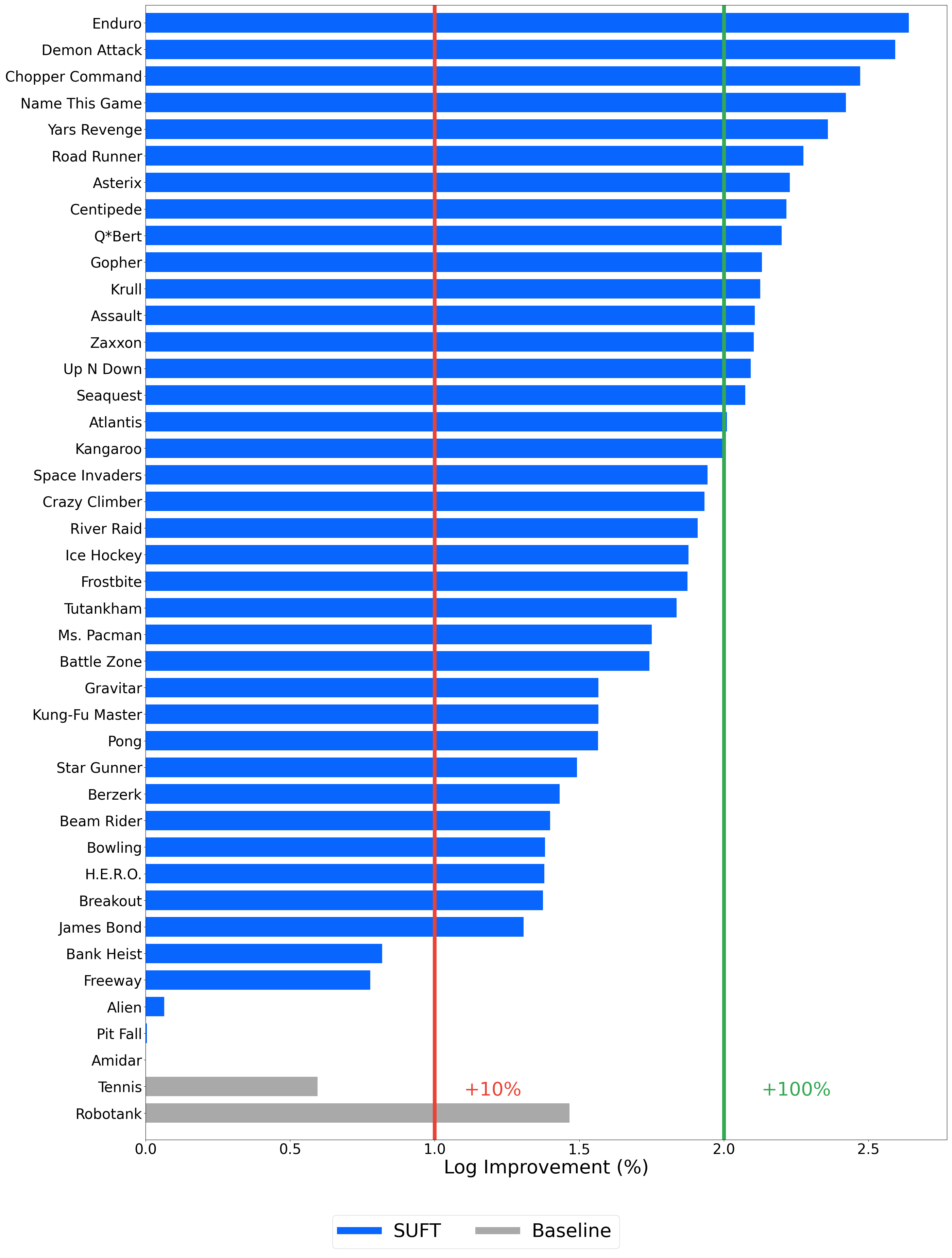}}
\caption{Log-scaled reward improvements comparison between the PPO agent using the additional SUFT OPE term and the baseline agent without it across 42 valid Atari games.
SUFT outperforms the baseline agent in 39 out of the 42 valid games, achieving a reward improvement of over 100\% in 38.1\% of them and over 10\% in 83.3\% of them.
The red line indicates a 10\% improvement, and the green line represents a 100\% improvement.
Both agents are using \({L_2}\) loss.}
\end{figure}

\newpage
\begin{table}[H]
\caption{Training rewards comparison between the PPO agent using the additional SUFT OPE term with different \(\lambda_{TF}\) values and the baseline agent without it across 57 Atari games.
All agents are using \({L_2}\) loss. 
}
\centering
\begin{small}
\begin{tabular}{lll}
\toprule
Agent & \(\lambda_{TF}\) & Mean (\%) \\
\midrule
PPO & 0.0 & 100.0\% \\
\midrule
SUFT PPO & 0.1 & 105.0\% \\
 & 0.2 & 116.05\% \\
 & 0.3 & 124.0\% \\
 & 0.4 & 126.99\% \\
 & 0.5 & 128.79\% \\
 & 0.6 & 137.06\% \\
 & 0.7 & 143.12\% \\
 & 0.8 & 146.56\% \\
 & 0.9 & 145.4\% \\
 & 1.0 & 151.72\% \\
 & 1.5 & 170.4\% \\
 & 2.0 & 167.33\% \\
 & 2.5 & 179.01\% \\
 & 3.0 & 184.66\% \\
 & 5.0 & 197.34\% \\
 & 10.0 & \textbf{198.28\%} \\
\bottomrule
\end{tabular}
\end{small}
\end{table}

\newpage
\subsection*{Atari DQN Results:}
\begin{table}[H]
\caption{Training rewards comparison between DQN agent using the additional SUFT OPE term and the baseline agent without it across 57 Atari games. 
SUFT surpasses the baseline agent in 52 games, with statistically significant gains in 49 games.
Both agents are using \({L_2}\) loss. 
}
\centering
\begin{small}
\begin{tabular}{llll}
\toprule
Game & Baseline DQN, buffer 4K & SUFT DQN, buffer 4K & p-value  \\
\midrule
Alien & 518.0 & \textbf{717.0} & 3.58e-07  \\
Amidar & 121.0 & \textbf{198.0} & 5.34e-06  \\
Assault & 1280.0 & \textbf{2250.0} & 4.10e-10  \\
Asterix & 872.0 & \textbf{1130.0} & 2.82e-03  \\
Asteroids & 294.0 & \textbf{450.0} & 2.31e-05  \\
Atlantis & 14000.0 & \textbf{19300.0} & 9.56e-05  \\
Bank Heist & 12.8 & \textbf{34.4} & 1.04e-05  \\
Battle Zone & 3350.0 & \textbf{4630.0} & 2.97e-04  \\
Beam Rider & 514.0 & \textbf{1050.0} & 6.13e-11  \\
Berzerk & 507.0 & \textbf{622.0} & 3.44e-06  \\
Bowling & 21.9 & \textbf{31.1} & 4.56e-02  \\
Boxing & -29.9 & \textbf{17.4} & 3.70e-15  \\
Breakout & 12.8 & \textbf{18.3} & 5.80e-03  \\
Centipede & 3160.0 & \textbf{3220.0} & 8.00e-01  \\
Chopper Command & 1150.0 & \textbf{1380.0} & 1.91e-03  \\
Crazy Climber & 21700.0 & \textbf{36000.0} & 3.05e-11  \\
Defender & 2510.0 & \textbf{2530.0} & 3.16e-01  \\
Demon Attack & 2240.0 & \textbf{4430.0} & 9.42e-11  \\
Double Dunk & -23.8 & \textbf{-22.9} & 1.96e-06  \\
Enduro & 48.2 & \textbf{169.0} & 3.24e-16  \\
Fishing Derby & -88.9 & \textbf{-39.8} & 2.55e-08  \\
Freeway & 19.0 & \textbf{21.2} & 1.39e-03  \\
Frostbite & 128.0 & \textbf{607.0} & 1.03e-03  \\
Gopher & 288.0 & \textbf{614.0} & 1.72e-06  \\
Gravitar & 136.0 & \textbf{284.0} & 2.10e-06  \\
H.E.R.O. & 2010.0 & \textbf{4110.0} & 1.56e-05  \\
Ice Hockey & -13.0 & \textbf{-6.62} & 1.19e-07  \\
James Bond & 34.0 & \textbf{363.0} & 1.24e-10  \\
Kangaroo & 460.0 & \textbf{5810.0} & 5.23e-07  \\
Krull & 315.0 & \textbf{3670.0} & 1.96e-14  \\
Kung-Fu Master & 6.0 & \textbf{2920.0} & 2.42e-05  \\
Montezuma's Revenge & \textbf{0.0} & \textbf{0.0} & N/A  \\
Ms. Pacman & 1060.0 & \textbf{1270.0} & 6.08e-05  \\
Name This Game & 2540.0 & \textbf{4250.0} & 6.56e-11  \\
Phoenix & 608.0 & \textbf{3120.0} & 3.82e-16  \\
Pit Fall & -163.0 & \textbf{-78.1} & 2.47e-03  \\
Pong & -20.2 & \textbf{-17.9} & 2.01e-07  \\
Private Eye & 133.0 & \textbf{154.0} & 7.51e-01  \\
Q*Bert & 908.0 & \textbf{2320.0} & 5.64e-06  \\
River Raid & 2410.0 & \textbf{3160.0} & 2.25e-08  \\
Road Runner & 991.0 & \textbf{10400.0} & 1.59e-10  \\
Robotank & 4.9 & \textbf{8.71} & 2.55e-07  \\
Seaquest & 393.0 & \textbf{911.0} & 2.56e-08  \\
Skiing & -30800.0 & \textbf{-30200.0} & 8.17e-04  \\
Solaris & 565.0 & \textbf{696.0} & 2.62e-04  \\
Space Invaders & 317.0 & \textbf{371.0} & 2.53e-04  \\
Star Gunner & 519.0 & \textbf{953.0} & 7.92e-11  \\
Surround & \textbf{-9.11} & -9.17 & N/A  \\
Tennis & \textbf{-18.9} & -19.6 & N/A  \\
Time Pilot & 1210.0 & \textbf{3350.0} & 4.85e-14  \\
Tutankham & 11.2 & \textbf{71.4} & 7.98e-12  \\
Up N Down & 3180.0 & \textbf{5340.0} & 1.27e-03  \\
Venture & \textbf{4.0} & 0.0 & N/A  \\
Video Pinball & \textbf{9360.0} & 8890.0 & N/A  \\
Wizard of Wor & 235.0 & \textbf{421.0} & 4.94e-09  \\
Yars Revenge & 3060.0 & \textbf{8280.0} & 7.44e-08  \\
Zaxxon & 0.0 & \textbf{2710.0} & 1.23e-07  \\
\midrule
Mean (\%) & 100.0 & \textbf{383.19} & N/A  \\
\bottomrule
\end{tabular}
\end{small}
\end{table}

\newpage
\begin{figure}[H]
\centering
\centerline{\includegraphics[width=\textwidth]{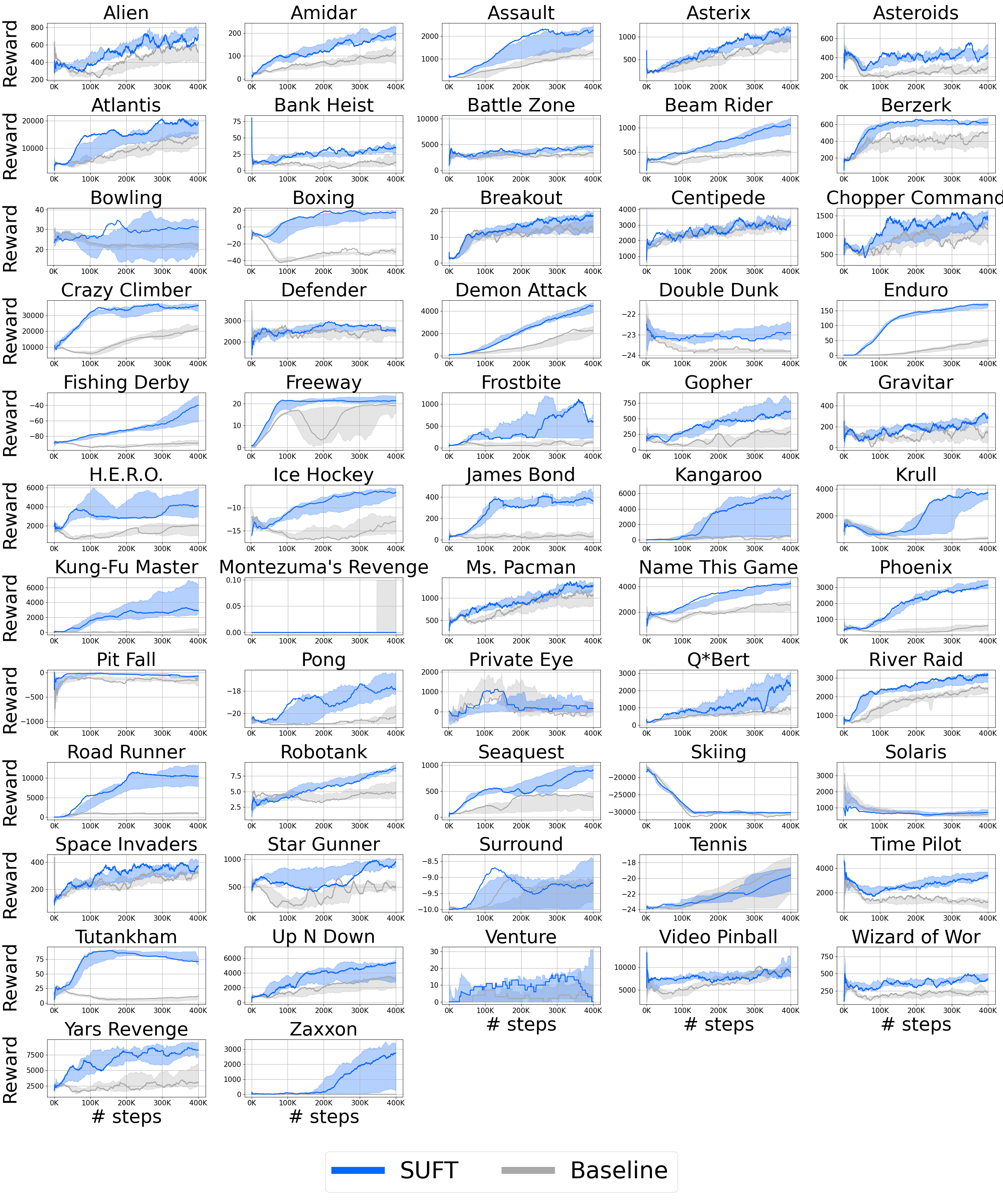}}
\caption{Learning curves comparison between DQN agent using the additional SUFT OPE term and the baseline agent without it across 57 Atari games.
The shaded area is the 10\% and 90\% percentiles with linear interpolation from the 10 different seeds' runs.
Both agents are using \({L_2}\) loss.}
\end{figure}

\newpage
\begin{figure}[H]
\centering
\centerline{\includegraphics[width=\textwidth]{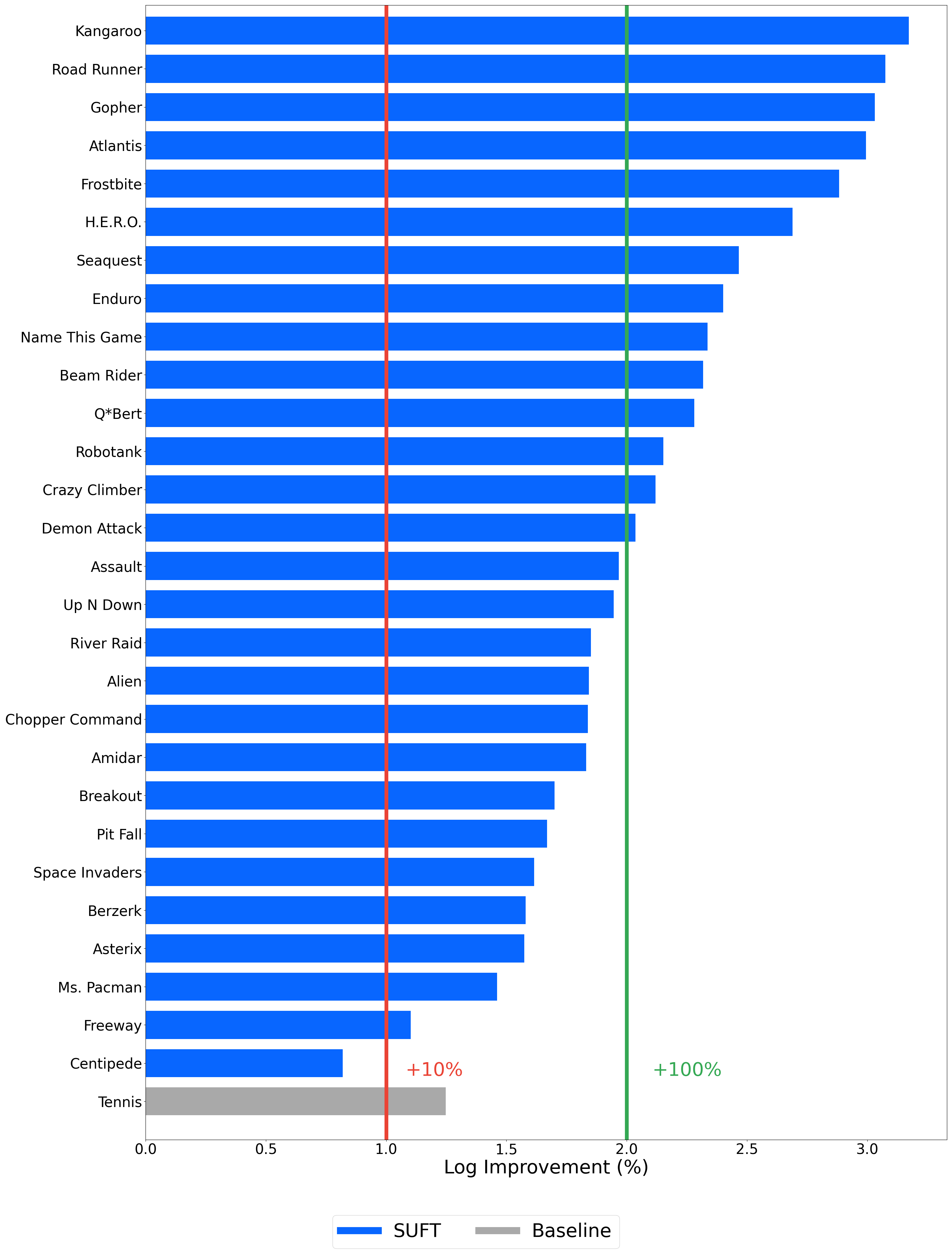}}
\caption{Log-scaled reward improvements comparison between DQN agent using the additional SUFT OPE term and the baseline agent without it across 29 valid Atari games.
SUFT outperforms the baseline agent in 28 out of the 29 valid games, achieving a reward improvement of over 100\% in 48.3\% of them and over 10\% in 93.1\% of them.
The red line indicates a 10\% improvement, and the green line represents a 100\% improvement.
Both agents are using \({L_2}\) loss.}
\label{fig:Atari2600-DQN-improvements}
\end{figure}

\newpage
\begin{table}[H]
\caption{Training rewards comparison between DQN agent using the additional SUFT OPE term with a small 4K buffer and the baseline agent without the SUFT OPE term with a larger 100K buffer across 57 Atari games. 
SUFT surpasses the baseline agent in 52 games, with statistically significant gains in 47 games.
Both agents are using \({L_2}\) loss. 
}
\centering
\begin{small}
\begin{tabular}{llll}
\toprule
Game & Baseline DQN, buffer 100K & SUFT DQN, buffer 4K & p-value  \\
\midrule
Alien & 438.0 & \textbf{717.0} & 3.14e-09  \\
Amidar & 118.0 & \textbf{198.0} & 2.20e-07  \\
Assault & 1120.0 & \textbf{2250.0} & 6.54e-10  \\
Asterix & 914.0 & \textbf{1130.0} & 4.06e-03  \\
Asteroids & 275.0 & \textbf{450.0} & 4.69e-06  \\
Atlantis & 7970.0 & \textbf{19300.0} & 7.59e-09  \\
Bank Heist & 9.9 & \textbf{34.4} & 1.01e-11  \\
Battle Zone & 3100.0 & \textbf{4630.0} & 9.71e-04  \\
Beam Rider & 561.0 & \textbf{1050.0} & 7.44e-09  \\
Berzerk & 458.0 & \textbf{622.0} & 1.55e-05  \\
Bowling & 21.8 & \textbf{31.1} & 4.58e-02  \\
Boxing & -28.7 & \textbf{17.4} & 2.25e-15  \\
Breakout & 17.7 & \textbf{18.3} & 6.94e-01  \\
Centipede & 3130.0 & \textbf{3220.0} & 7.99e-01  \\
Chopper Command & 648.0 & \textbf{1380.0} & 1.30e-09  \\
Crazy Climber & 30400.0 & \textbf{36000.0} & 1.38e-04  \\
Defender & 2440.0 & \textbf{2530.0} & 4.85e-02  \\
Demon Attack & 3220.0 & \textbf{4430.0} & 8.29e-08  \\
Double Dunk & -23.9 & \textbf{-22.9} & 8.65e-07  \\
Enduro & 66.8 & \textbf{169.0} & 5.22e-16  \\
Fishing Derby & -84.1 & \textbf{-39.8} & 4.29e-07  \\
Freeway & 18.9 & \textbf{21.2} & 2.32e-03  \\
Frostbite & 159.0 & \textbf{607.0} & 1.95e-03  \\
Gopher & 119.0 & \textbf{614.0} & 3.64e-09  \\
Gravitar & 107.0 & \textbf{284.0} & 4.17e-09  \\
H.E.R.O. & 2320.0 & \textbf{4110.0} & 8.37e-05  \\
Ice Hockey & -13.2 & \textbf{-6.62} & 2.20e-08  \\
James Bond & 43.5 & \textbf{363.0} & 1.41e-10  \\
Kangaroo & 534.0 & \textbf{5810.0} & 5.25e-07  \\
Krull & 336.0 & \textbf{3670.0} & 2.71e-14  \\
Kung-Fu Master & 13.0 & \textbf{2920.0} & 2.17e-05  \\
Montezuma's Revenge & \textbf{0.0} & \textbf{0.0} & N/A  \\
Ms. Pacman & 1060.0 & \textbf{1270.0} & 3.69e-03  \\
Name This Game & 2990.0 & \textbf{4250.0} & 3.66e-10  \\
Phoenix & 732.0 & \textbf{3120.0} & 1.12e-15  \\
Pit Fall & -114.0 & \textbf{-78.1} & 5.41e-01  \\
Pong & -20.6 & \textbf{-17.9} & 7.89e-09  \\
Private Eye & 110.0 & \textbf{154.0} & 6.69e-01  \\
Q*Bert & 1130.0 & \textbf{2320.0} & 3.81e-05  \\
River Raid & 2200.0 & \textbf{3160.0} & 3.60e-10  \\
Road Runner & 909.0 & \textbf{10400.0} & 1.04e-10  \\
Robotank & 4.44 & \textbf{8.71} & 4.26e-05  \\
Seaquest & 391.0 & \textbf{911.0} & 1.08e-08  \\
Skiing & -30700.0 & \textbf{-30200.0} & 8.64e-03  \\
Solaris & 515.0 & \textbf{696.0} & 2.80e-05  \\
Space Invaders & 366.0 & \textbf{371.0} & 3.07e-01  \\
Star Gunner & 351.0 & \textbf{953.0} & 1.21e-12  \\
Surround & \textbf{-8.69} & -9.17 & N/A  \\
Tennis & \textbf{-17.5} & -19.6 & N/A  \\
Time Pilot & 1100.0 & \textbf{3350.0} & 1.47e-14  \\
Tutankham & 9.4 & \textbf{71.4} & 4.66e-18  \\
Up N Down & 2160.0 & \textbf{5340.0} & 6.09e-05  \\
Venture & \textbf{4.0} & 0.0 & N/A  \\
Video Pinball & \textbf{9890.0} & 8890.0 & N/A  \\
Wizard of Wor & 292.0 & \textbf{421.0} & 2.39e-07  \\
Yars Revenge & 3530.0 & \textbf{8280.0} & 9.75e-08  \\
Zaxxon & 0.0 & \textbf{2710.0} & 1.19e-07  \\
\midrule
Mean (\%) & 100.0 & \textbf{437.05} & N/A  \\
\bottomrule
\end{tabular}
\end{small}
\end{table}

\newpage
\begin{table}[H]
\caption{Training rewards comparison between DQN agent using the additional SUFT OPE term and the baseline agent without it across 57 Atari games. 
SUFT surpasses the baseline agent in 49 games, with statistically significant gains in 44 games.
Both agents are using a 100k buffer and \({L_2}\) loss. 
}
\centering
\begin{small}
\begin{tabular}{llll}
\toprule
Game & Baseline DQN, buffer 100K & SUFT DQN, buffer 100K & p-value  \\
\midrule
Alien & 438.0 & \textbf{686.0} & 1.45e-08  \\
Amidar & 118.0 & \textbf{235.0} & 6.49e-11  \\
Assault & 1120.0 & \textbf{2170.0} & 3.18e-10  \\
Asterix & 914.0 & \textbf{1240.0} & 3.19e-05  \\
Asteroids & 275.0 & \textbf{430.0} & 1.03e-05  \\
Atlantis & 7970.0 & \textbf{12400.0} & 5.88e-05  \\
Bank Heist & 9.9 & \textbf{27.7} & 8.87e-04  \\
Battle Zone & 3100.0 & \textbf{4270.0} & 5.45e-05  \\
Beam Rider & 561.0 & \textbf{1010.0} & 1.61e-09  \\
Berzerk & 458.0 & \textbf{625.0} & 2.38e-05  \\
Bowling & \textbf{21.8} & 21.5 & N/A  \\
Boxing & -28.7 & \textbf{-1.34} & 1.10e-10  \\
Breakout & 17.7 & \textbf{18.1} & 5.37e-01  \\
Centipede & 3130.0 & \textbf{4820.0} & 1.47e-02  \\
Chopper Command & 648.0 & \textbf{957.0} & 9.89e-04  \\
Crazy Climber & 30400.0 & \textbf{35500.0} & 6.89e-05  \\
Defender & 2440.0 & \textbf{2590.0} & 3.53e-02  \\
Demon Attack & 3220.0 & \textbf{5220.0} & 1.85e-10  \\
Double Dunk & -23.9 & \textbf{-22.5} & 1.34e-11  \\
Enduro & 66.8 & \textbf{153.0} & 2.45e-15  \\
Fishing Derby & -84.1 & \textbf{-43.9} & 1.87e-08  \\
Freeway & 18.9 & \textbf{21.1} & 4.23e-03  \\
Frostbite & \textbf{159.0} & 33.3 & N/A  \\
Gopher & 119.0 & \textbf{666.0} & 9.22e-07  \\
Gravitar & 107.0 & \textbf{286.0} & 2.30e-04  \\
H.E.R.O. & \textbf{2320.0} & 431.0 & N/A  \\
Ice Hockey & -13.2 & \textbf{-5.29} & 6.87e-10  \\
James Bond & 43.5 & \textbf{372.0} & 2.39e-06  \\
Kangaroo & 534.0 & \textbf{5670.0} & 4.11e-14  \\
Krull & 336.0 & \textbf{1780.0} & 2.90e-05  \\
Kung-Fu Master & 13.0 & \textbf{3190.0} & 5.18e-09  \\
Montezuma's Revenge & \textbf{0.0} & \textbf{0.0} & N/A  \\
Ms. Pacman & 1060.0 & \textbf{1350.0} & 9.98e-04  \\
Name This Game & 2990.0 & \textbf{3680.0} & 9.95e-06  \\
Phoenix & 732.0 & \textbf{2700.0} & 4.01e-15  \\
Pit Fall & \textbf{-114.0} & -355.0 & N/A  \\
Pong & -20.6 & \textbf{-18.3} & 4.76e-06  \\
Private Eye & \textbf{110.0} & 82.4 & N/A  \\
Q*Bert & 1130.0 & \textbf{2850.0} & 1.01e-05  \\
River Raid & 2200.0 & \textbf{2950.0} & 4.95e-11  \\
Road Runner & 909.0 & \textbf{9240.0} & 1.54e-13  \\
Robotank & 4.44 & \textbf{8.95} & 3.82e-05  \\
Seaquest & 391.0 & \textbf{574.0} & 5.20e-04  \\
Skiing & -30700.0 & \textbf{-30200.0} & 1.39e-02  \\
Solaris & 515.0 & \textbf{653.0} & 1.07e-03  \\
Space Invaders & 366.0 & \textbf{421.0} & 4.64e-06  \\
Star Gunner & 351.0 & \textbf{387.0} & 3.01e-01  \\
Surround & -8.69 & \textbf{-8.55} & 4.83e-01  \\
Tennis & \textbf{-17.5} & -18.5 & N/A  \\
Time Pilot & 1100.0 & \textbf{3570.0} & 1.43e-15  \\
Tutankham & 9.4 & \textbf{15.0} & 2.77e-01  \\
Up N Down & 2160.0 & \textbf{3020.0} & 2.25e-03  \\
Venture & 4.0 & \textbf{10.0} & 1.46e-02  \\
Video Pinball & \textbf{9890.0} & 8970.0 & N/A  \\
Wizard of Wor & 292.0 & \textbf{423.0} & 4.12e-03  \\
Yars Revenge & 3530.0 & \textbf{6900.0} & 7.48e-02  \\
Zaxxon & 0.0 & \textbf{1020.0} & 1.08e-02  \\
\midrule
Mean (\%) & 100.0 & \textbf{413.64} & N/A  \\
\bottomrule
\end{tabular}
\end{small}
\end{table}

\newpage
\subsection*{Atari Vanilla DQN Results:}
\begin{table}[H]
\caption{Training rewards comparison between Vanilla DQN agent using the additional SUFT OPE term and the baseline agent without it across 57 Atari games. 
SUFT surpasses the baseline agent in 49 games, with statistically significant gains in 33 games.
Both agents are using \({L_2}\) loss. 
}
\centering
\begin{small}
\begin{tabular}{llll}
\toprule
Game & Baseline Vanilla DQN, buffer 4K & SUFT Vanilla DQN, buffer 4K & p-value  \\
\midrule
Alien & 657.0 & \textbf{736.0} & 1.37e-02  \\
Amidar & 163.0 & \textbf{201.0} & 1.95e-04  \\
Assault & 904.0 & \textbf{1600.0} & 7.33e-11  \\
Asterix & 680.0 & \textbf{1030.0} & 8.30e-09  \\
Asteroids & 408.0 & \textbf{423.0} & 5.36e-01  \\
Atlantis & 10300.0 & \textbf{19000.0} & 6.54e-08  \\
Bank Heist & 29.4 & \textbf{35.2} & 8.01e-02  \\
Battle Zone & 11600.0 & \textbf{12700.0} & 1.10e-01  \\
Beam Rider & 909.0 & \textbf{1440.0} & 8.25e-08  \\
Berzerk & 613.0 & \textbf{653.0} & 2.35e-03  \\
Bowling & \textbf{26.5} & 25.1 & N/A  \\
Boxing & 0.31 & \textbf{17.8} & 1.10e-10  \\
Breakout & 16.0 & \textbf{18.0} & 5.85e-01  \\
Centipede & 3430.0 & \textbf{3680.0} & 2.13e-02  \\
Chopper Command & 1390.0 & \textbf{1700.0} & 1.84e-04  \\
Crazy Climber & 38700.0 & \textbf{43500.0} & 1.76e-05  \\
Defender & 3290.0 & \textbf{4000.0} & 5.24e-04  \\
Demon Attack & 1490.0 & \textbf{2790.0} & 6.15e-12  \\
Double Dunk & \textbf{-22.9} & -23.0 & N/A  \\
Enduro & 156.0 & \textbf{207.0} & 3.40e-08  \\
Fishing Derby & -76.0 & \textbf{-60.8} & 4.17e-05  \\
Freeway & 24.9 & \textbf{25.2} & 9.93e-01  \\
Frostbite & 272.0 & \textbf{348.0} & 1.32e-01  \\
Gopher & 362.0 & \textbf{654.0} & 8.02e-06  \\
Gravitar & 221.0 & \textbf{224.0} & 5.56e-01  \\
H.E.R.O. & 4590.0 & \textbf{8700.0} & 1.55e-02  \\
Ice Hockey & -10.8 & \textbf{-9.41} & 2.94e-02  \\
James Bond & 402.0 & \textbf{428.0} & 1.80e-01  \\
Kangaroo & 1780.0 & \textbf{4360.0} & 1.95e-02  \\
Krull & 3650.0 & \textbf{4440.0} & 8.03e-05  \\
Kung-Fu Master & \textbf{2880.0} & 2660.0 & N/A  \\
Montezuma's Revenge & \textbf{0.0} & \textbf{0.0} & N/A  \\
Ms. Pacman & \textbf{1560.0} & 1520.0 & N/A  \\
Name This Game & 3060.0 & \textbf{4170.0} & 2.79e-09  \\
Phoenix & 2110.0 & \textbf{2720.0} & 7.66e-07  \\
Pit Fall & \textbf{-35.2} & -42.2 & N/A  \\
Pong & \textbf{-15.6} & -16.9 & N/A  \\
Private Eye & 66.4 & \textbf{72.6} & 5.77e-01  \\
Q*Bert & 1510.0 & \textbf{2100.0} & 1.69e-03  \\
River Raid & 2970.0 & \textbf{3420.0} & 4.87e-06  \\
Road Runner & 16900.0 & \textbf{17700.0} & 5.00e-01  \\
Robotank & 5.62 & \textbf{7.02} & 1.02e-04  \\
Seaquest & 751.0 & \textbf{1120.0} & 1.08e-07  \\
Skiing & -30300.0 & \textbf{-30200.0} & 7.50e-02  \\
Solaris & \textbf{915.0} & 683.0 & N/A  \\
Space Invaders & 374.0 & \textbf{424.0} & 5.24e-04  \\
Star Gunner & 829.0 & \textbf{978.0} & 1.53e-05  \\
Surround & -9.42 & \textbf{-9.33} & 5.56e-01  \\
Tennis & -19.5 & \textbf{-18.0} & 2.67e-02  \\
Time Pilot & 2880.0 & \textbf{2910.0} & 5.70e-01  \\
Tutankham & 53.5 & \textbf{106.0} & 1.03e-07  \\
Up N Down & 4640.0 & \textbf{6970.0} & 8.16e-05  \\
Venture & 9.0 & \textbf{18.0} & 1.29e-01  \\
Video Pinball & 13700.0 & \textbf{14200.0} & 2.84e-01  \\
Wizard of Wor & 365.0 & \textbf{424.0} & 1.79e-03  \\
Yars Revenge & 7260.0 & \textbf{10100.0} & 1.42e-07  \\
Zaxxon & 112.0 & \textbf{932.0} & 1.96e-01  \\
\midrule
Mean (\%) & 100.0 & \textbf{228.12} & N/A  \\
\bottomrule
\end{tabular}
\end{small}
\end{table}

\newpage
\begin{figure}[H]
\centering
\centerline{\includegraphics[width=\textwidth]{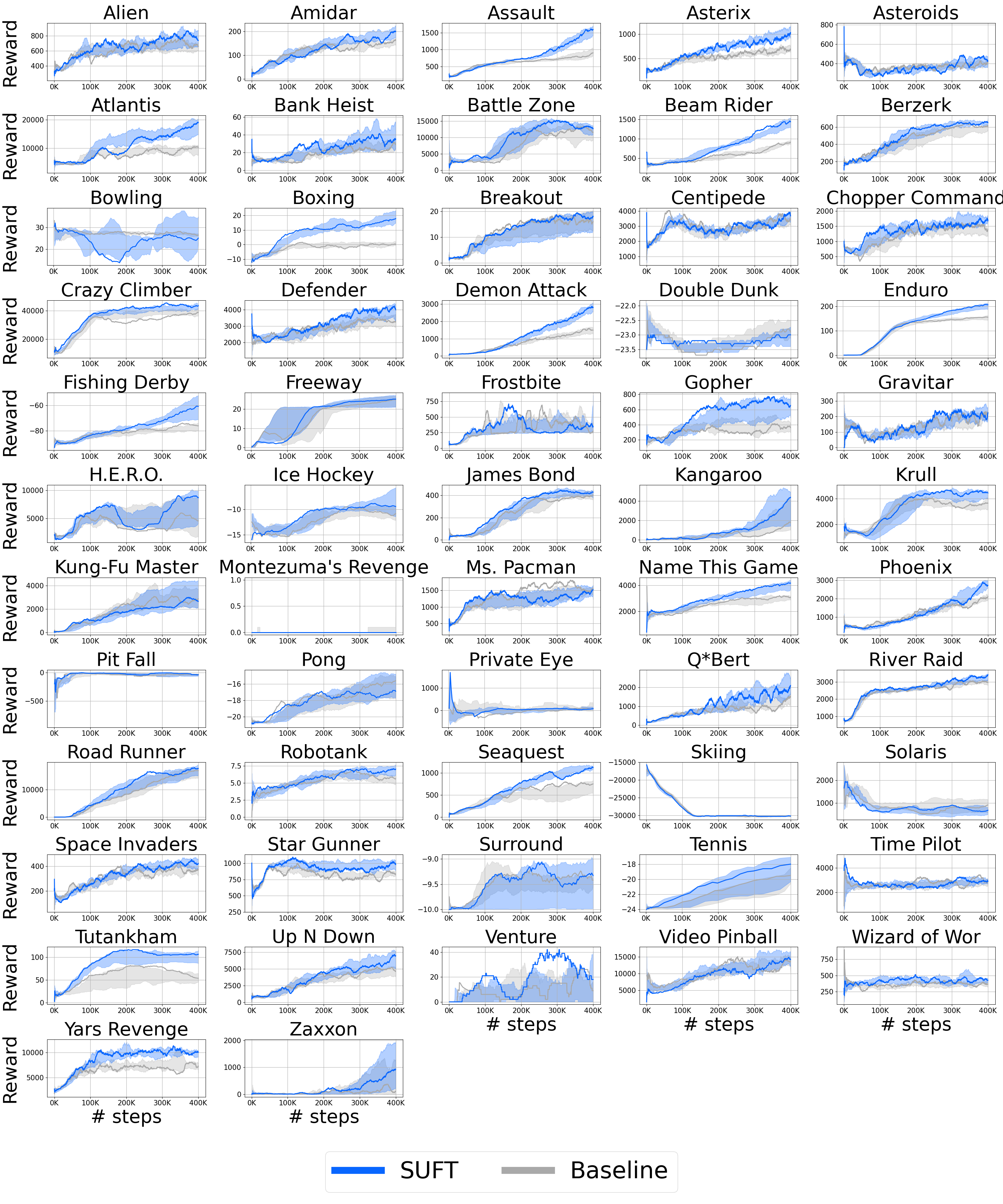}}
\caption{Learning curves comparison between Vanilla DQN agent using the additional SUFT OPE term and the baseline agent without it across 57 Atari games.
The shaded area is the 10\% and 90\% percentiles with linear interpolation from the 10 different seeds' runs.
Both agents are using \({L_2}\) loss.}
\end{figure}

\newpage
\begin{figure}[H]
\centering
\centerline{\includegraphics[width=\textwidth]{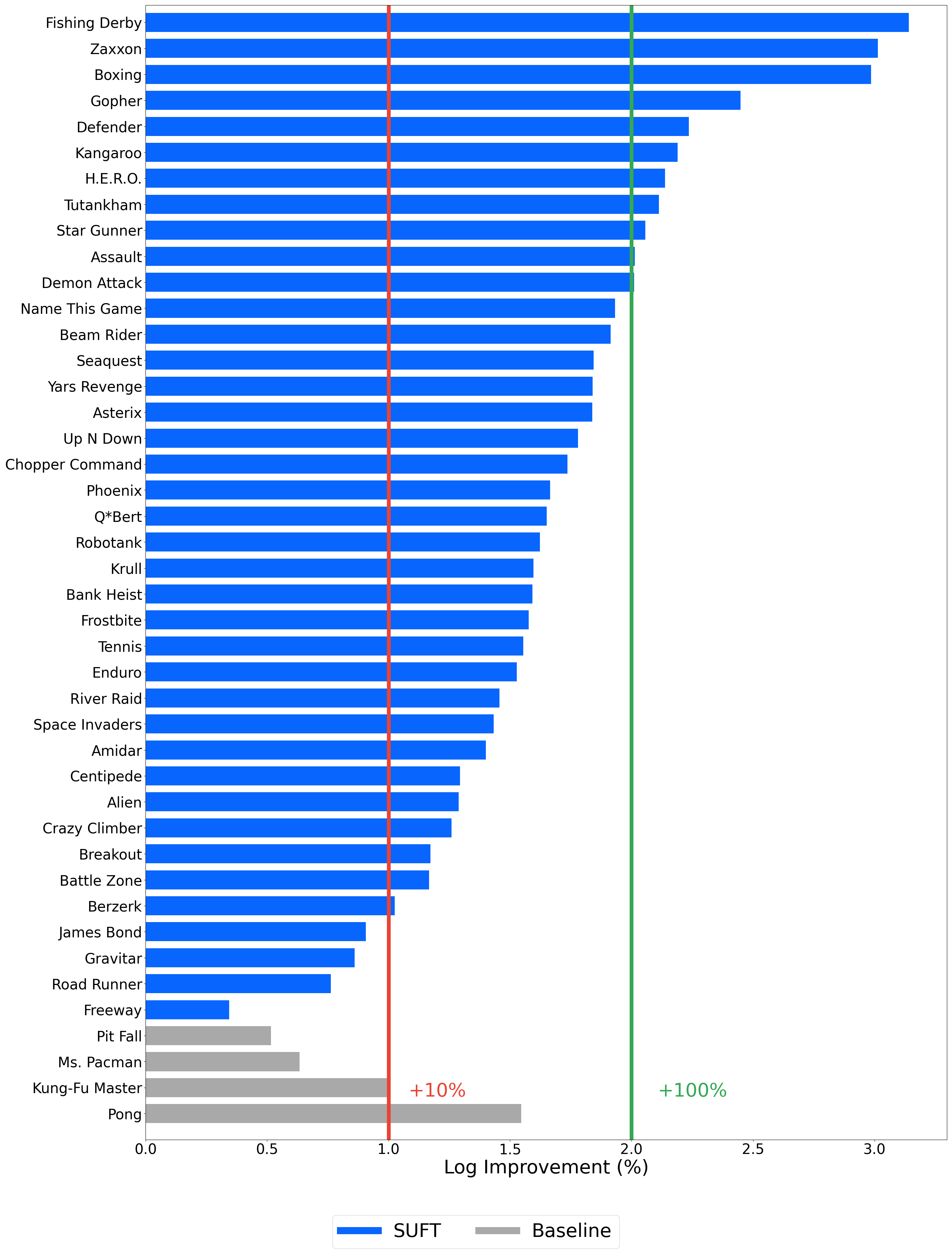}}
\caption{Log-scaled reward improvements comparison between Vanilla DQN agent using the additional SUFT OPE term and the baseline agent without it across 43 valid Atari games.
SUFT outperforms the baseline agent in 39 out of the 43 valid games, achieving a reward improvement of over 100\% in 25.6\% of them and over 10\% in 79.1\% of them.
The red line indicates a 10\% improvement, and the green line represents a 100\% improvement.
Both agents are using \({L_2}\) loss.}
\end{figure}

\newpage
\subsection*{Atari Double DQN Results:}
\begin{table}[H]
\caption{Training rewards comparison between the Double DQN agent using the additional SUFT OPE term and the baseline agent without it across 57 Atari games. 
SUFT surpasses the baseline agent in 48 games, with statistically significant gains in 37 games.
Both agents are using \({L_2}\) loss. 
}
\centering
\begin{small}
\begin{tabular}{llll}
\toprule
Game & Baseline DDQN, buffer 4K & SUFT DDQN, buffer 4K & p-value  \\
\midrule
Alien & 773.0 & \textbf{806.0} & 7.88e-01  \\
Amidar & 152.0 & \textbf{177.0} & 8.27e-03  \\
Assault & 781.0 & \textbf{1440.0} & 3.84e-12  \\
Asterix & 676.0 & \textbf{939.0} & 1.03e-08  \\
Asteroids & 357.0 & \textbf{365.0} & 1.30e-01  \\
Atlantis & 13900.0 & \textbf{18100.0} & 1.14e-03  \\
Bank Heist & 29.2 & \textbf{36.1} & 1.30e-03  \\
Battle Zone & 10900.0 & \textbf{14000.0} & 1.16e-04  \\
Beam Rider & 719.0 & \textbf{1230.0} & 1.03e-12  \\
Berzerk & 567.0 & \textbf{668.0} & 5.01e-08  \\
Bowling & \textbf{27.8} & 27.0 & N/A  \\
Boxing & -1.73 & \textbf{18.4} & 8.10e-11  \\
Breakout & \textbf{19.3} & 16.3 & N/A  \\
Centipede & 3390.0 & \textbf{3430.0} & 6.14e-01  \\
Chopper Command & 1460.0 & \textbf{1870.0} & 4.47e-05  \\
Crazy Climber & 41200.0 & \textbf{42200.0} & 3.93e-01  \\
Defender & 2980.0 & \textbf{4140.0} & 8.61e-10  \\
Demon Attack & 369.0 & \textbf{2240.0} & 3.09e-15  \\
Double Dunk & -23.6 & \textbf{-23.4} & 2.41e-02  \\
Enduro & 143.0 & \textbf{195.0} & 3.43e-13  \\
Fishing Derby & -81.4 & \textbf{-67.8} & 4.39e-07  \\
Freeway & \textbf{25.2} & 21.3 & N/A  \\
Frostbite & \textbf{344.0} & 265.0 & N/A  \\
Gopher & 595.0 & \textbf{658.0} & 4.17e-01  \\
Gravitar & 100.0 & \textbf{134.0} & 2.00e-02  \\
H.E.R.O. & 4020.0 & \textbf{8870.0} & 2.06e-03  \\
Ice Hockey & \textbf{-7.7} & -8.15 & N/A  \\
James Bond & 316.0 & \textbf{416.0} & 7.84e-04  \\
Kangaroo & 2430.0 & \textbf{3280.0} & 1.94e-01  \\
Krull & 3830.0 & \textbf{4400.0} & 9.09e-05  \\
Kung-Fu Master & 2470.0 & \textbf{3680.0} & 3.30e-02  \\
Montezuma's Revenge & \textbf{0.0} & \textbf{0.0} & N/A  \\
Ms. Pacman & 1560.0 & \textbf{1650.0} & 3.26e-02  \\
Name This Game & 3140.0 & \textbf{4140.0} & 7.71e-08  \\
Phoenix & 1080.0 & \textbf{2220.0} & 1.05e-10  \\
Pit Fall & \textbf{-14.2} & -27.5 & N/A  \\
Pong & -15.9 & \textbf{-15.8} & 6.54e-01  \\
Private Eye & -45.2 & \textbf{49.4} & 8.98e-01  \\
Q*Bert & 1260.0 & \textbf{1960.0} & 6.31e-05  \\
River Raid & 2960.0 & \textbf{3420.0} & 1.56e-07  \\
Road Runner & 15700.0 & \textbf{17000.0} & 4.29e-01  \\
Robotank & 6.43 & \textbf{7.47} & 6.65e-03  \\
Seaquest & 474.0 & \textbf{938.0} & 9.91e-12  \\
Skiing & -30300.0 & \textbf{-30200.0} & 2.87e-01  \\
Solaris & \textbf{1620.0} & 599.0 & N/A  \\
Space Invaders & 377.0 & \textbf{446.0} & 3.52e-05  \\
Star Gunner & 970.0 & \textbf{1100.0} & 5.04e-06  \\
Surround & \textbf{-9.62} & -9.64 & N/A  \\
Tennis & -23.6 & \textbf{-20.3} & 7.08e-05  \\
Time Pilot & 2130.0 & \textbf{2640.0} & 3.75e-05  \\
Tutankham & 64.1 & \textbf{107.0} & 1.56e-05  \\
Up N Down & 3940.0 & \textbf{6950.0} & 1.23e-04  \\
Venture & 6.0 & \textbf{18.0} & 4.54e-02  \\
Video Pinball & 13400.0 & \textbf{15400.0} & 1.23e-02  \\
Wizard of Wor & 349.0 & \textbf{447.0} & 3.76e-05  \\
Yars Revenge & 8610.0 & \textbf{10400.0} & 2.31e-06  \\
Zaxxon & 4.0 & \textbf{10.0} & 1.71e-01  \\
\midrule
Mean (\%) & 100.0 & \textbf{226.91} & N/A  \\
\bottomrule
\end{tabular}
\end{small}
\end{table}

\newpage
\begin{figure}[H]
\centering
\centerline{\includegraphics[width=\textwidth]{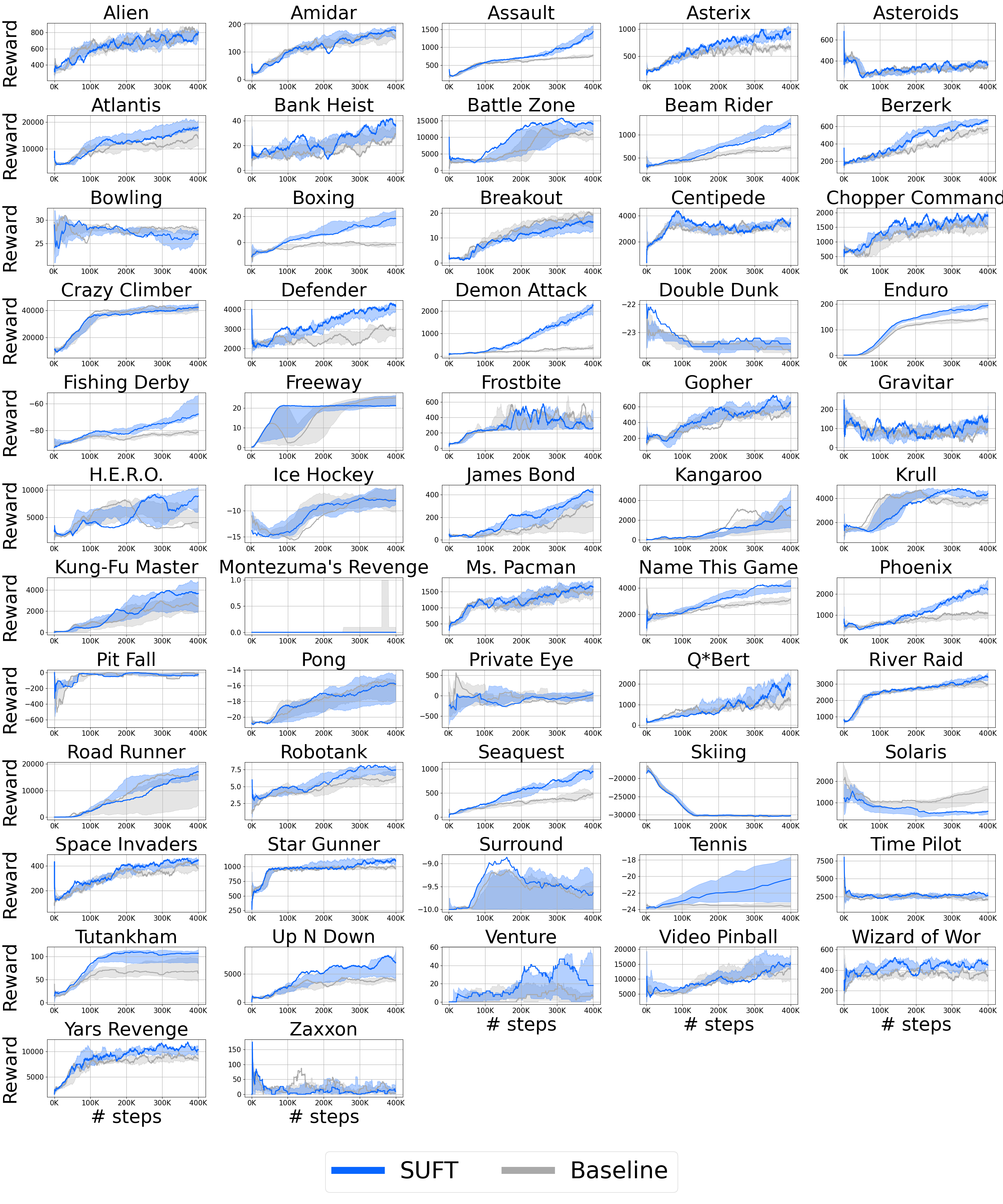}}
\caption{Learning curves comparison between the Double DQN agent using the additional SUFT OPE term and the baseline agent without it across 57 Atari games.
The shaded area is the 10\% and 90\% percentiles with linear interpolation from the 10 different seeds' runs.
Both agents are using \({L_2}\) loss.}
\end{figure}

\newpage
\begin{figure}[H]
\centering
\centerline{\includegraphics[width=\textwidth]{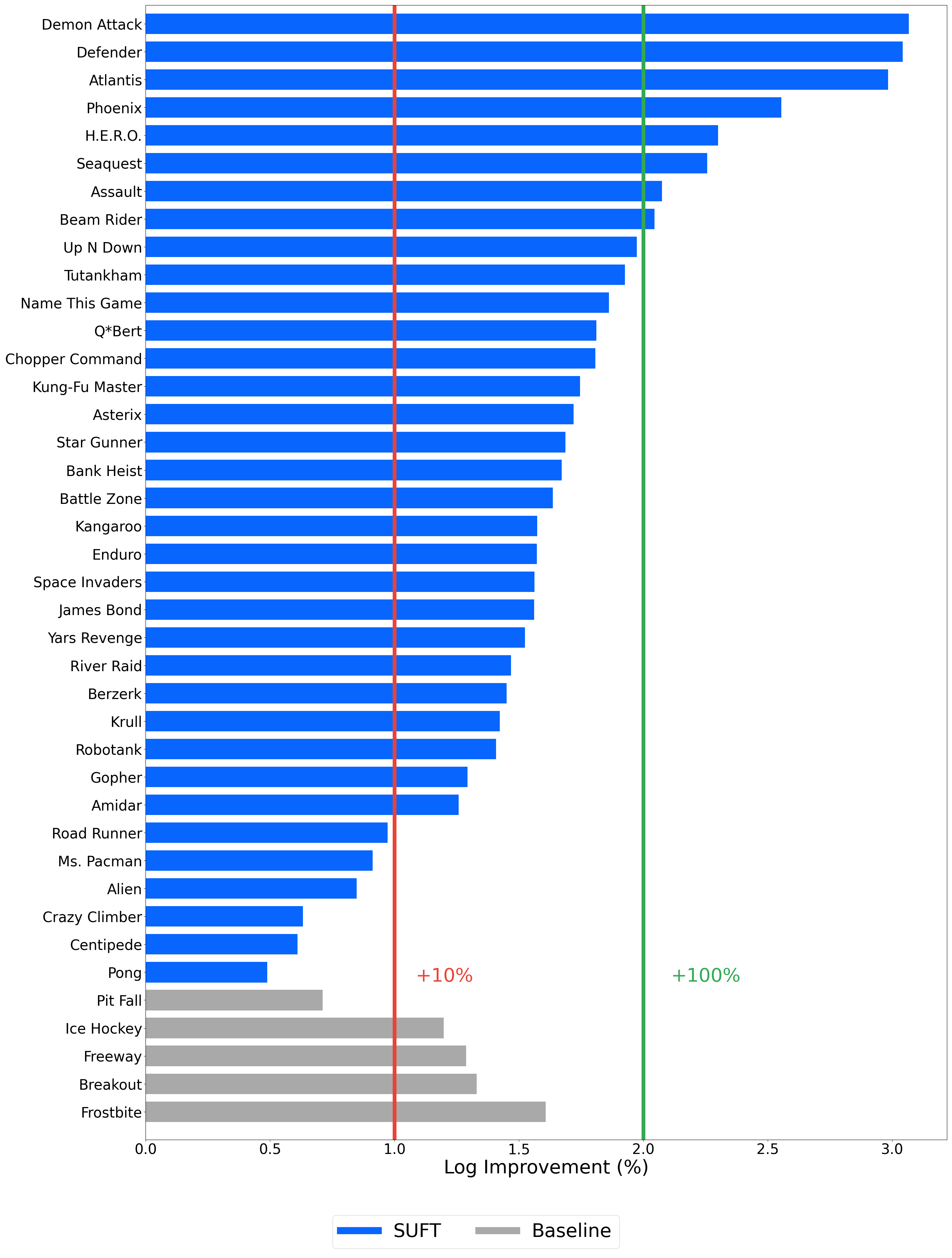}}
\caption{Log-scaled reward improvements comparison between the Double DQN agent using the additional SUFT OPE term and the baseline agent without it across 40 valid Atari games.
SUFT outperforms the baseline agent in 35 out of the 40 valid games, achieving a reward improvement of over 100\% in 20\% of them and over 10\% in 72.5\% of them.
The red line indicates a 10\% improvement, and the green line represents a 100\% improvement.
Both agents are using \({L_2}\) loss.}
\end{figure}

\newpage
\begin{table}[H]
\caption{Training rewards comparison between our causal bound optimization method and an inertia regularizer. 
We compare a Double DQN agent with the additional SUFT OPE term to one using an inertia regularizer across 57 Atari games.
While SUFT significantly improves the baseline agent's performance with 226.91\% mean reward ratio, the inertia regularizer harms the baseline agent's performance with only 83.21\% mean reward ratio.
All agents are using \({L_2}\) loss and a 4K buffer size.}
\label{tab:Atari2600-ddqn-inertia-regularizer}
\centering
\begin{small}
\begin{tabular}{llll}
\toprule
Game & Baseline DDQN & Inertia Regularizer DDQN & SUFT DDQN \\
\midrule
Alien & 773.0 & 762.0 & \textbf{806.0} \\
Amidar & 152.0 & 144.0 & \textbf{177.0} \\
Assault & 781.0 & 670.0 & \textbf{1440.0} \\
Asterix & 676.0 & 666.0 & \textbf{939.0} \\
Asteroids & 357.0 & 278.0 & \textbf{365.0} \\
Atlantis & 13900.0 & 3970.0 & \textbf{18100.0} \\
Bank Heist & 29.2 & 21.8 & \textbf{36.1} \\
Battle Zone & 10900.0 & 11200.0 & \textbf{14000.0} \\
Beam Rider & 719.0 & 707.0 & \textbf{1230.0} \\
Berzerk & 567.0 & 628.0 & \textbf{668.0} \\
Bowling & 27.8 & \textbf{28.8} & 27.0 \\
Boxing & -1.73 & -3.91 & \textbf{18.4} \\
Breakout & \textbf{19.3} & 8.85 & 16.3 \\
Centipede & 3390.0 & \textbf{3840.0} & 3430.0  \\
Chopper Command & 1460.0 & 1120.0 & \textbf{1870.0} \\
Crazy Climber & 41200.0 & \textbf{42800.0} & 42200.0 \\
Defender & 2980.0 & 2650.0 & \textbf{4140.0} \\
Demon Attack & 369.0 & 191.0 & \textbf{2240.0} \\
Double Dunk & -23.6 & -23.6 & \textbf{-23.4} \\
Enduro & 143.0 & 107.0 & \textbf{195.0} \\
Fishing Derby & -81.4 & -84.9 & \textbf{-67.8} \\
Freeway & \textbf{25.2} & 21.2 & 21.3 \\
Frostbite & \textbf{344.0} & 273.0 & 265.0 \\
Gopher & 595.0 & 376.0 & \textbf{658.0} \\
Gravitar & 100.0 & \textbf{138.0} & 134.0 \\
H.E.R.O. & 4020.0 & 5060.0 & \textbf{8870.0} \\
Ice Hockey & -7.7 & \textbf{-6.6} & -8.15 \\
James Bond & 316.0 & 186.0 & \textbf{416.0} \\
Kangaroo & 2430.0 & 1110.0 & \textbf{3280.0} \\
Krull & 3830.0 & 2700.0 & \textbf{4400.0} \\
Kung-Fu Master & 2470.0 & 1520.0 & \textbf{3680.0} \\
Montezuma's Revenge & \textbf{0.0} & \textbf{0.0} & \textbf{0.0} \\
Ms. Pacman & 1560.0 & 1510.0 & \textbf{1650.0} \\
Name This Game & 3140.0 & 2420.0 & \textbf{4140.0} \\
Phoenix & 1080.0 & 640.0 & \textbf{2220.0} \\
Pit Fall & \textbf{-14.2} & -25.5 & -27.5 \\
Pong & -15.9 & -17.3 & \textbf{-15.8} \\
Private Eye & -45.2 & -11.9 & \textbf{49.4} \\
Q*Bert & 1260.0 & 1180.0 & \textbf{1960.0} \\
River Raid & 2960.0 & 2550.0 & \textbf{3420.0} \\
Road Runner & 15700.0 & 11000.0 & \textbf{17000.0} \\
Robotank & 6.43 & 6.51 & \textbf{7.47} \\
Seaquest & 474.0 & 306.0 & \textbf{938.0} \\
Skiing & -30300.0 & \textbf{-26500.0} & -30200.0 \\
Solaris & 1620.0 & \textbf{1860.0} & 599.0 \\
Space Invaders & 377.0 & 400.0 & \textbf{446.0} \\
Star Gunner & 970.0 & 980.0 & \textbf{1100.0} \\
Surround & -9.62 & \textbf{-9.37} & -9.64 \\
Tennis & -23.6 & \textbf{-15.1} & -20.3 \\
Time Pilot & 2130.0 & 1920.0 & \textbf{2640.0} \\
Tutankham & 64.1 & 75.8 & \textbf{107.0} \\
Up N Down & 3940.0 & 2930.0 & \textbf{6950.0} \\
Venture & 6.0 & 4.0 & \textbf{18.0} \\
Video Pinball & 13400.0 & 12400.0 & \textbf{15400.0} \\
Wizard of Wor & 349.0 & 215.0 & \textbf{447.0} \\
Yars Revenge & 8610.0 & 7990.0 & \textbf{10400.0} \\
Zaxxon & 4.0 & \textbf{12.0} & 10.0 \\
\midrule
Mean (\%) & 100.0 & 83.21 & \textbf{226.91} \\
\bottomrule
\end{tabular}
\end{small}
\end{table}

\newpage
\subsection{MuJoCo Five Environments Benchmark}

\subsection*{MuJoCo PPO Results:}
\begin{table}[H]
\caption{Training rewards comparison between the PPO agent using the additional SUFT OPE term and the baseline agent without it across 5 MuJoCo environments. 
SUFT surpasses the baseline agent in 4 out of 5 environments, with statistically significant gains in 2 environments.
Both agents are using \({L_2}\) loss. 
}
\centering
\begin{small}
\begin{tabular}{llll}
\toprule
Game & Baseline PPO & SUFT PPO & p-value  \\
\midrule
Ant & 1110.0 & \textbf{1570.0} & 1.65e-02  \\
Half Cheetah & 1520.0 & \textbf{1540.0} & 7.28e-01  \\
Hopper & 1880.0 & \textbf{2120.0} & 3.16e-03  \\
Humanoid & \textbf{464.0} & \textbf{464.0} & N/A  \\
Walker2d & 1850.0 & \textbf{1860.0} & 8.33e-01  \\
\midrule
Mean (\%) & 100.0 & \textbf{111.21} & N/A  \\
\bottomrule
\end{tabular}
\end{small}
\end{table}

\begin{figure}[H]
\centering
\centerline{\includegraphics[width=\textwidth]{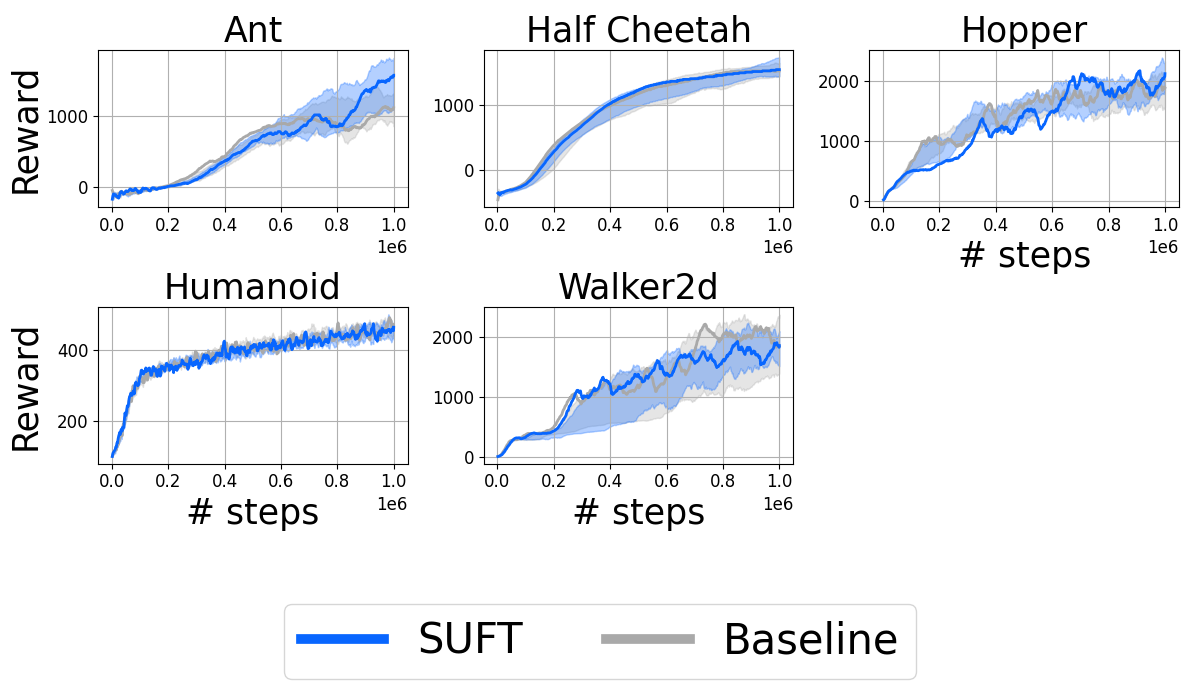}}
\caption{Learning curves comparison between the PPO agent using the additional SUFT OPE term and the baseline agent without it across 5 MuJoCo environments.
The shaded area is the 10\% and 90\% percentiles with linear interpolation from the 10 different seeds' runs.
Both agents are using \({L_2}\) loss.}
\end{figure}

\newpage
\subsection*{MuJoCo SAC Results:}
\begin{table}[H]
\caption{Training rewards comparison between the SAC agent using the additional SUFT OPE term and the baseline agent without it across 5 MuJoCo environments. 
SUFT surpasses the baseline agent in 4 out of 5 environments, with statistically significant gains in 3 environments.
Both agents are using \({L_2}\) loss. 
}
\centering
\begin{small}
\begin{tabular}{llll}
\toprule
Game & Baseline SAC, buffer 4K & SUFT SAC, buffer 4K & p-value  \\
\midrule
Ant & \textbf{1250.0} & 1020.0 & N/A  \\
Half Cheetah & 4440.0 & \textbf{6290.0} & 3.94e-01  \\
Hopper & 978.0 & \textbf{1910.0} & 1.46e-06  \\
Humanoid & 638.0 & \textbf{1100.0} & 2.16e-02  \\
Walker2d & 316.0 & \textbf{1680.0} & 8.16e-11  \\
\midrule
Mean (\%) & 100.0 & \textbf{224.53} & N/A  \\
\bottomrule
\end{tabular}
\end{small}
\end{table}

\begin{figure}[H]
\centering
\centerline{\includegraphics[width=\textwidth]{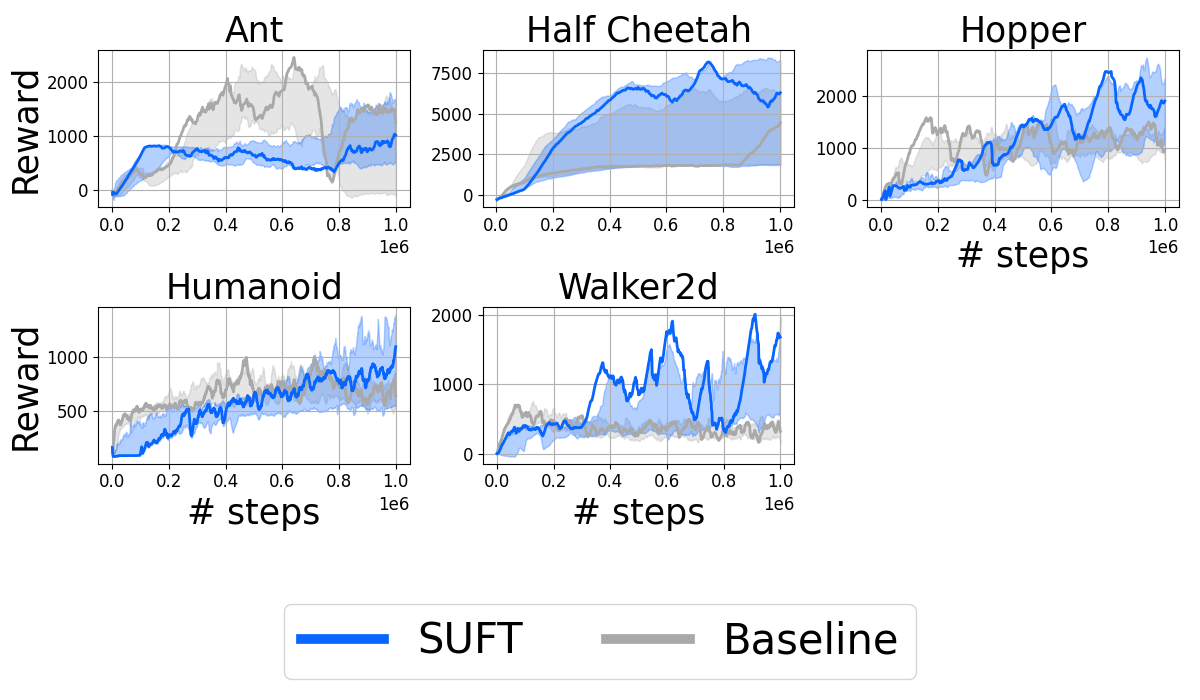}}
\caption{Learning curves comparison between the SAC agent using the additional SUFT OPE term and the baseline agent without it across 5 MuJoCo environments.
The shaded area is the 10\% and 90\% percentiles with linear interpolation from the 10 different seeds' runs.
Both agents are using \({L_2}\) loss.}
\end{figure}


\newpage
\section*{NeurIPS Paper Checklist}

\begin{enumerate}

\item {\bf Claims}
    \item[] Question: Do the main claims made in the abstract and introduction accurately reflect the paper's contributions and scope?
    \item[] Answer: \answerYes{} 
    \item[] Justification: In the abstract and introduction, we highlight the main problem, our method, and the empirical results.
    \item[] Guidelines:
    \begin{itemize}
        \item The answer NA means that the abstract and introduction do not include the claims made in the paper.
        \item The abstract and/or introduction should clearly state the claims made, including the contributions made in the paper and important assumptions and limitations. A No or NA answer to this question will not be perceived well by the reviewers. 
        \item The claims made should match theoretical and experimental results, and reflect how much the results can be expected to generalize to other settings. 
        \item It is fine to include aspirational goals as motivation as long as it is clear that these goals are not attained by the paper. 
    \end{itemize}

\item {\bf Limitations}
    \item[] Question: Does the paper discuss the limitations of the work performed by the authors?
    \item[] Answer: \answerYes{} 
    \item[] Justification: In \cref{sec:method}, we describe the assumptions of our work, and in particular \cref{ass:loss-inequality}, which does not apply to the \(L_2\) loss. In our experiments (\cref{sec:experiments}), we evaluate our method on \(L_2\) loss.
    \item[] Guidelines:
    \begin{itemize}
        \item The answer NA means that the paper has no limitation while the answer No means that the paper has limitations, but those are not discussed in the paper. 
        \item The authors are encouraged to create a separate "Limitations" section in their paper.
        \item The paper should point out any strong assumptions and how robust the results are to violations of these assumptions (e.g., independence assumptions, noiseless settings, model well-specification, asymptotic approximations only holding locally). The authors should reflect on how these assumptions might be violated in practice and what the implications would be.
        \item The authors should reflect on the scope of the claims made, e.g., if the approach was only tested on a few datasets or with a few runs. In general, empirical results often depend on implicit assumptions, which should be articulated.
        \item The authors should reflect on the factors that influence the performance of the approach. For example, a facial recognition algorithm may perform poorly when image resolution is low or images are taken in low lighting. Or a speech-to-text system might not be used reliably to provide closed captions for online lectures because it fails to handle technical jargon.
        \item The authors should discuss the computational efficiency of the proposed algorithms and how they scale with dataset size.
        \item If applicable, the authors should discuss possible limitations of their approach to address problems of privacy and fairness.
        \item While the authors might fear that complete honesty about limitations might be used by reviewers as grounds for rejection, a worse outcome might be that reviewers discover limitations that aren't acknowledged in the paper. The authors should use their best judgment and recognize that individual actions in favor of transparency play an important role in developing norms that preserve the integrity of the community. Reviewers will be specifically instructed to not penalize honesty concerning limitations.
    \end{itemize}

\item {\bf Theory assumptions and proofs}
    \item[] Question: For each theoretical result, does the paper provide the full set of assumptions and a complete (and correct) proof?
    \item[] Answer: \answerYes{} 
    \item[] Justification: In \cref{sec:method}, we detailed the relevant assumptions for all theoretical results. We provide all proofs in \cref{appendix-theory}.
    \item[] Guidelines:
    \begin{itemize}
        \item The answer NA means that the paper does not include theoretical results. 
        \item All the theorems, formulas, and proofs in the paper should be numbered and cross-referenced.
        \item All assumptions should be clearly stated or referenced in the statement of any theorems.
        \item The proofs can either appear in the main paper or the supplemental material, but if they appear in the supplemental material, the authors are encouraged to provide a short proof sketch to provide intuition. 
        \item Inversely, any informal proof provided in the core of the paper should be complemented by formal proofs provided in appendix or supplemental material.
        \item Theorems and Lemmas that the proof relies upon should be properly referenced. 
    \end{itemize}

    \item {\bf Experimental result reproducibility}
    \item[] Question: Does the paper fully disclose all the information needed to reproduce the main experimental results of the paper to the extent that it affects the main claims and/or conclusions of the paper (regardless of whether the code and data are provided or not)?
    \item[] Answer: \answerYes{} 
    \item[] Justification: We use open-source environments and an RL library as detailed in the Implementation Details paragraph in \cref{sec:experiments}.
    Furthermore, we fully detailed all the experimental information in the Training Details paragraph in \cref{sec:experiments}.
    Throughout \cref{sec:method} and \cref{appendix-theory}, we thoroughly detail how to implement our method.
    We also provide all the evaluation formulas in \cref{appendix-experiments}.
    In addition, we also provide all of the code as supplementary material.
    \item[] Guidelines:
    \begin{itemize}
        \item The answer NA means that the paper does not include experiments.
        \item If the paper includes experiments, a No answer to this question will not be perceived well by the reviewers: Making the paper reproducible is important, regardless of whether the code and data are provided or not.
        \item If the contribution is a dataset and/or model, the authors should describe the steps taken to make their results reproducible or verifiable. 
        \item Depending on the contribution, reproducibility can be accomplished in various ways. For example, if the contribution is a novel architecture, describing the architecture fully might suffice, or if the contribution is a specific model and empirical evaluation, it may be necessary to either make it possible for others to replicate the model with the same dataset, or provide access to the model. In general. releasing code and data is often one good way to accomplish this, but reproducibility can also be provided via detailed instructions for how to replicate the results, access to a hosted model (e.g., in the case of a large language model), releasing of a model checkpoint, or other means that are appropriate to the research performed.
        \item While NeurIPS does not require releasing code, the conference does require all submissions to provide some reasonable avenue for reproducibility, which may depend on the nature of the contribution. For example
        \begin{enumerate}
            \item If the contribution is primarily a new algorithm, the paper should make it clear how to reproduce that algorithm.
            \item If the contribution is primarily a new model architecture, the paper should describe the architecture clearly and fully.
            \item If the contribution is a new model (e.g., a large language model), then there should either be a way to access this model for reproducing the results or a way to reproduce the model (e.g., with an open-source dataset or instructions for how to construct the dataset).
            \item We recognize that reproducibility may be tricky in some cases, in which case authors are welcome to describe the particular way they provide for reproducibility. In the case of closed-source models, it may be that access to the model is limited in some way (e.g., to registered users), but it should be possible for other researchers to have some path to reproducing or verifying the results.
        \end{enumerate}
    \end{itemize}

\item {\bf Open access to data and code}
    \item[] Question: Does the paper provide open access to the data and code, with sufficient instructions to faithfully reproduce the main experimental results, as described in supplemental material?
    \item[] Answer: \answerYes{} 
    \item[] Justification: We provide the code as supplementary material and fully detail the implementation and training details in \cref{sec:experiments}. We also provide all the evaluation formulas in \cref{appendix-experiments}.
    \item[] Guidelines:
    \begin{itemize}
        \item The answer NA means that paper does not include experiments requiring code.
        \item Please see the NeurIPS code and data submission guidelines (\url{https://nips.cc/public/guides/CodeSubmissionPolicy}) for more details.
        \item While we encourage the release of code and data, we understand that this might not be possible, so “No” is an acceptable answer. Papers cannot be rejected simply for not including code, unless this is central to the contribution (e.g., for a new open-source benchmark).
        \item The instructions should contain the exact command and environment needed to run to reproduce the results. See the NeurIPS code and data submission guidelines (\url{https://nips.cc/public/guides/CodeSubmissionPolicy}) for more details.
        \item The authors should provide instructions on data access and preparation, including how to access the raw data, preprocessed data, intermediate data, and generated data, etc.
        \item The authors should provide scripts to reproduce all experimental results for the new proposed method and baselines. If only a subset of experiments are reproducible, they should state which ones are omitted from the script and why.
        \item At submission time, to preserve anonymity, the authors should release anonymized versions (if applicable).
        \item Providing as much information as possible in supplemental material (appended to the paper) is recommended, but including URLs to data and code is permitted.
    \end{itemize}

\item {\bf Experimental setting/details}
    \item[] Question: Does the paper specify all the training and test details (e.g., data splits, hyperparameters, how they were chosen, type of optimizer, etc.) necessary to understand the results?
    \item[] Answer: \answerYes{} 
    \item[] Justification: We fully detailed all the experimental information in the Training Details and Implementation Details paragraphs in \cref{sec:experiments}. 
    \item[] Guidelines:
    \begin{itemize}
        \item The answer NA means that the paper does not include experiments.
        \item The experimental setting should be presented in the core of the paper to a level of detail that is necessary to appreciate the results and make sense of them.
        \item The full details can be provided either with the code, in appendix, or as supplemental material.
    \end{itemize}

\item {\bf Experiment statistical significance}
    \item[] Question: Does the paper report error bars suitably and correctly defined or other appropriate information about the statistical significance of the experiments?
    \item[] Answer: \answerYes{} 
    \item[] Justification: All experiments are independently trained for 10 different random seeds per agent-environment configuration, and we take the median result.
    We report mean performances for all of the environments per domain.
    We also report environment-specific statistical significance results by conducting a \(t\)-test.
    \item[] Guidelines:
    \begin{itemize}
        \item The answer NA means that the paper does not include experiments.
        \item The authors should answer "Yes" if the results are accompanied by error bars, confidence intervals, or statistical significance tests, at least for the experiments that support the main claims of the paper.
        \item The factors of variability that the error bars are capturing should be clearly stated (for example, train/test split, initialization, random drawing of some parameter, or overall run with given experimental conditions).
        \item The method for calculating the error bars should be explained (closed form formula, call to a library function, bootstrap, etc.)
        \item The assumptions made should be given (e.g., Normally distributed errors).
        \item It should be clear whether the error bar is the standard deviation or the standard error of the mean.
        \item It is OK to report 1-sigma error bars, but one should state it. The authors should preferably report a 2-sigma error bar than state that they have a 96\% CI, if the hypothesis of Normality of errors is not verified.
        \item For asymmetric distributions, the authors should be careful not to show in tables or figures symmetric error bars that would yield results that are out of range (e.g. negative error rates).
        \item If error bars are reported in tables or plots, The authors should explain in the text how they were calculated and reference the corresponding figures or tables in the text.
    \end{itemize}

\item {\bf Experiments compute resources}
    \item[] Question: For each experiment, does the paper provide sufficient information on the computer resources (type of compute workers, memory, time of execution) needed to reproduce the experiments?
    \item[] Answer: \answerYes{} 
    \item[] Justification: We give details on the computational time and resources in \cref{appendix-experiments}.
    \item[] Guidelines:
    \begin{itemize}
        \item The answer NA means that the paper does not include experiments.
        \item The paper should indicate the type of compute workers CPU or GPU, internal cluster, or cloud provider, including relevant memory and storage.
        \item The paper should provide the amount of compute required for each of the individual experimental runs as well as estimate the total compute. 
        \item The paper should disclose whether the full research project required more compute than the experiments reported in the paper (e.g., preliminary or failed experiments that didn't make it into the paper). 
    \end{itemize}
    
\item {\bf Code of ethics}
    \item[] Question: Does the research conducted in the paper conform, in every respect, with the NeurIPS Code of Ethics \url{https://neurips.cc/public/EthicsGuidelines}?
    \item[] Answer: \answerYes{} 
    \item[] Justification: Our research conforms, in every respect, with the NeurIPS Code of Ethics.
    \item[] Guidelines:
    \begin{itemize}
        \item The answer NA means that the authors have not reviewed the NeurIPS Code of Ethics.
        \item If the authors answer No, they should explain the special circumstances that require a deviation from the Code of Ethics.
        \item The authors should make sure to preserve anonymity (e.g., if there is a special consideration due to laws or regulations in their jurisdiction).
    \end{itemize}

\item {\bf Broader impacts}
    \item[] Question: Does the paper discuss both potential positive societal impacts and negative societal impacts of the work performed?
    \item[] Answer: \answerNA{} 
    \item[] Justification: This paper proposes a method to improve sample efficiency in reinforcement learning. 
    As foundational research, this work has no direct paths to negative societal impacts besides those that can arise from improved reinforcement learning agents.
    \item[] Guidelines:
    \begin{itemize}
        \item The answer NA means that there is no societal impact of the work performed.
        \item If the authors answer NA or No, they should explain why their work has no societal impact or why the paper does not address societal impact.
        \item Examples of negative societal impacts include potential malicious or unintended uses (e.g., disinformation, generating fake profiles, surveillance), fairness considerations (e.g., deployment of technologies that could make decisions that unfairly impact specific groups), privacy considerations, and security considerations.
        \item The conference expects that many papers will be foundational research and not tied to particular applications, let alone deployments. However, if there is a direct path to any negative applications, the authors should point it out. For example, it is legitimate to point out that an improvement in the quality of generative models could be used to generate deepfakes for disinformation. On the other hand, it is not needed to point out that a generic algorithm for optimizing neural networks could enable people to train models that generate Deepfakes faster.
        \item The authors should consider possible harms that could arise when the technology is being used as intended and functioning correctly, harms that could arise when the technology is being used as intended but gives incorrect results, and harms following from (intentional or unintentional) misuse of the technology.
        \item If there are negative societal impacts, the authors could also discuss possible mitigation strategies (e.g., gated release of models, providing defenses in addition to attacks, mechanisms for monitoring misuse, mechanisms to monitor how a system learns from feedback over time, improving the efficiency and accessibility of ML).
    \end{itemize}
    
\item {\bf Safeguards}
    \item[] Question: Does the paper describe safeguards that have been put in place for responsible release of data or models that have a high risk for misuse (e.g., pretrained language models, image generators, or scraped datasets)?
    \item[] Answer: \answerNA{} 
    \item[] Justification: We do not release high-risk data or models.
    \item[] Guidelines:
    \begin{itemize}
        \item The answer NA means that the paper poses no such risks.
        \item Released models that have a high risk for misuse or dual-use should be released with necessary safeguards to allow for controlled use of the model, for example by requiring that users adhere to usage guidelines or restrictions to access the model or implementing safety filters. 
        \item Datasets that have been scraped from the Internet could pose safety risks. The authors should describe how they avoided releasing unsafe images.
        \item We recognize that providing effective safeguards is challenging, and many papers do not require this, but we encourage authors to take this into account and make a best faith effort.
    \end{itemize}

\item {\bf Licenses for existing assets}
    \item[] Question: Are the creators or original owners of assets (e.g., code, data, models), used in the paper, properly credited and are the license and terms of use explicitly mentioned and properly respected?
    \item[] Answer: \answerYes{} 
    \item[] Justification: We cite all creators whose code we used in our experiments in \cref{sec:experiments}.
    \item[] Guidelines:
    \begin{itemize}
        \item The answer NA means that the paper does not use existing assets.
        \item The authors should cite the original paper that produced the code package or dataset.
        \item The authors should state which version of the asset is used and, if possible, include a URL.
        \item The name of the license (e.g., CC-BY 4.0) should be included for each asset.
        \item For scraped data from a particular source (e.g., website), the copyright and terms of service of that source should be provided.
        \item If assets are released, the license, copyright information, and terms of use in the package should be provided. For popular datasets, \url{paperswithcode.com/datasets} has curated licenses for some datasets. Their licensing guide can help determine the license of a dataset.
        \item For existing datasets that are re-packaged, both the original license and the license of the derived asset (if it has changed) should be provided.
        \item If this information is not available online, the authors are encouraged to reach out to the asset's creators.
    \end{itemize}

\item {\bf New assets}
    \item[] Question: Are new assets introduced in the paper well documented and is the documentation provided alongside the assets?
    \item[] Answer: \answerYes{} 
    \item[] Justification: We provide the code as supplementary material and include a README file that explains how to run it.
    \item[] Guidelines:
    \begin{itemize}
        \item The answer NA means that the paper does not release new assets.
        \item Researchers should communicate the details of the dataset/code/model as part of their submissions via structured templates. This includes details about training, license, limitations, etc. 
        \item The paper should discuss whether and how consent was obtained from people whose asset is used.
        \item At submission time, remember to anonymize your assets (if applicable). You can either create an anonymized URL or include an anonymized zip file.
    \end{itemize}

\item {\bf Crowdsourcing and research with human subjects}
    \item[] Question: For crowdsourcing experiments and research with human subjects, does the paper include the full text of instructions given to participants and screenshots, if applicable, as well as details about compensation (if any)? 
    \item[] Answer: \answerNA{} 
    \item[] Justification: This paper does not involve crowdsourcing or research with human subjects.
    \item[] Guidelines:
    \begin{itemize}
        \item The answer NA means that the paper does not involve crowdsourcing nor research with human subjects.
        \item Including this information in the supplemental material is fine, but if the main contribution of the paper involves human subjects, then as much detail as possible should be included in the main paper. 
        \item According to the NeurIPS Code of Ethics, workers involved in data collection, curation, or other labor should be paid at least the minimum wage in the country of the data collector. 
    \end{itemize}

\item {\bf Institutional review board (IRB) approvals or equivalent for research with human subjects}
    \item[] Question: Does the paper describe potential risks incurred by study participants, whether such risks were disclosed to the subjects, and whether Institutional Review Board (IRB) approvals (or an equivalent approval/review based on the requirements of your country or institution) were obtained?
    \item[] Answer: \answerNA{} 
    \item[] Justification: This paper does not involve crowdsourcing or research with human subjects.
    \item[] Guidelines:
    \begin{itemize}
        \item The answer NA means that the paper does not involve crowdsourcing nor research with human subjects.
        \item Depending on the country in which research is conducted, IRB approval (or equivalent) may be required for any human subjects research. If you obtained IRB approval, you should clearly state this in the paper. 
        \item We recognize that the procedures for this may vary significantly between institutions and locations, and we expect authors to adhere to the NeurIPS Code of Ethics and the guidelines for their institution. 
        \item For initial submissions, do not include any information that would break anonymity (if applicable), such as the institution conducting the review.
    \end{itemize}

\item {\bf Declaration of LLM usage}
    \item[] Question: Does the paper describe the usage of LLMs if it is an important, original, or non-standard component of the core methods in this research? Note that if the LLM is used only for writing, editing, or formatting purposes and does not impact the core methodology, scientific rigorousness, or originality of the research, declaration is not required.
    \item[] Answer: \answerNA{} 
    \item[] Justification: This paper uses LLMs only for editing purposes and conforms to the NeurIPS LLM policy.
    \item[] Guidelines:
    \begin{itemize}
        \item The answer NA means that the core method development in this research does not involve LLMs as any important, original, or non-standard components.
        \item Please refer to our LLM policy (\url{https://neurips.cc/Conferences/2025/LLM}) for what should or should not be described.
    \end{itemize}

\end{enumerate}

\end{document}